\title{How does Inverse RL Scale to Large
State Spaces? \\A Provably Efficient Approach}
\author{%
Filippo Lazzati\\
Politecnico di Milano\\
Milan, Italy\\
\texttt{filippo.lazzati@polimi.it} \\
\And
Mirco Mutti \\
Technion \\
Haifa, Israel\\
\And
Alberto Maria Metelli \\
Politecnico di Milano\\
Milan, Italy\\
}
\let\oldnl\nl
\newcommand{\nonl}{\renewcommand{\nl}{\let\nl\oldnl}}
\begin{document}
\setlength{\abovedisplayskip}{4pt}
\setlength{\belowdisplayskip}{4pt}
\setlength{\textfloatsep}{10pt}

\renewcommand\thmcontinues[1]{Continued}

\maketitle

\begin{abstract}
    In online Inverse Reinforcement Learning (IRL), the learner can
    collect samples about the dynamics of the environment to improve its
    estimate of the reward function. Since IRL suffers from identifiability issues,
    many theoretical works on online IRL focus on estimating the entire set of
    rewards that explain the demonstrations, named the \emph{feasible reward
    set}. However, none of the algorithms available in the literature can scale to
    problems with large state spaces.
    In this paper, we focus on the online IRL problem in Linear Markov Decision
    Processes (MDPs). We show that the structure offered by Linear MDPs is not
    sufficient for efficiently estimating the feasible set when the state space
    is large. As a consequence, we introduce the novel framework of
    \emph{rewards compatibility}, which generalizes the notion of feasible set,
    and we develop \caty, a sample efficient algorithm whose complexity is independent of the cardinality of
    the state space in Linear MDPs.
    When restricted to the tabular setting, we demonstrate that \caty is minimax
    optimal up to logarithmic factors. As a by-product, we show that Reward-Free
    Exploration (RFE) enjoys the same worst-case rate, improving over the
    state-of-the-art lower bound. Finally, we devise a unifying framework for
    IRL and RFE that may be of independent interest.
\end{abstract}

\section{Introduction}\label{section: introduction}

Inverse Reinforcement Learning (IRL) is the problem of inferring the reward
function given demonstrations of an optimal behavior, i.e., from an \emph{expert} agent.
\cite{russell1998learning,ng2000algorithms}. Since its formulation, much of the
research effort has been put into the design of efficient algorithms for solving
the IRL problem \cite{arora2018survey,adams2022survey}. Indeed, the solution of
the IRL problem opens the door to a variety of interesting applications,
including Apprenticeship Learning (AL)
\cite{abbeel2004apprenticeship,abbeel2006helicopter}, reward design
\cite{hadfieldmenell2017inverserewarddesign}, interpretability of the expert's behavior
\cite{hadfieldmenell2016cooperativeIRL}, and transferability to new environments
\cite{Fu2017LearningRR}.

Nowadays, the factor that most negatively impacts the adoption of IRL
solutions in real-world applications is the intrinsic \emph{ill-posedness} of
its formulation. The IRL problem has been historically defined as the problem of
recovering \emph{the} reward function underlying the demonstrations
\cite{russell1998learning,ng2000algorithms}, even though mere demonstrations can
be equivalently explained by a \emph{variety} of rewards. In other words, the
IRL problem is underconstrained, even in the limit of infinite demonstrations
\cite{ng2000algorithms,metelli2021provably}.

To overcome this weakness and to come up with a \emph{single} reward
function, three main approaches are commonly adopted in the literature.
($i$) The first approach consists of the use of a \emph{heuristic} to select a
specific reward function from the set of all the rewards that explain the
demonstrations. Implicitly, these works re-define IRL as the problem of
recovering \emph{the} reward function explaining the demonstrations \emph{and}
complying with the heuristic. As an example,
\cite{ng2000algorithms,ratliff2006mmp} select the reward that maximizes some
notion of margin, and \cite{ziebart2008maximum} implicitly chooses the reward
returned by the optimization algorithm among those that maximize the likelihood.
However, these approaches may generate issues in applications
\cite{skalse2023invariance,Fu2017LearningRR}.
($ii$) In the second approach, additional \emph{constraints} beyond mere demonstrations
are enforced to guarantee the uniqueness of the reward function to recover. In
``reward identifiability'' works, the additional information commonly concerns some
structure of the environment \cite{kim2021rewardidentification}, or multiple
demonstrations across various environments
\cite{amin2016resolving,cao2021identifiability}. In Reward Learning
(ReL) works \cite{jeon2020rewardrational}, demonstrations of optimal behavior
are combined with other kinds of expert feedback, like comparisons
\cite{wirth2017surveyPbRL}.
($iii$) As a third approach, recently, \cite{metelli2021provably,metelli2023towards}
proposed the alternative formulation of IRL as the problem of recovering
\emph{all} the reward functions compatible with the demonstrations, i.e., the
\emph{feasible reward set}. In this manner, we are not
subject to the limitations of the first approach, and we do not depend on
additional information like in the second approach.

In practical applications, the chosen IRL formulation has to be tackled by
algorithms that use a \emph{finite} number of demonstrations and a limited
knowledge of the dynamics of the environment. In the common \emph{online} IRL
scenario, the learner explores the (unknown) environment, and exploits this
additional information to improve its performance on the IRL task
\citep[e.g.,][]{metelli2021provably,lindner2022active,metelli2023towards,zhao2023inverse,lazzati2024offline}.
On this basis, the IRL approach ($iii$) based on the \emph{feasible set}
\cite{metelli2021provably,metelli2023towards} displays desirable properties since
``postpones'' the choice of the heuristic and/or enforcement of
additional constraints, with the advantage of
analyzing the intrinsic complexity of the IRL problem only, without being
obfuscated by other factors. In other words, this recent formulation of the
IRL problem paves the way for the design and analysis of provably efficient IRL
algorithms, endowed with solid theoretical guarantees.

However, the algorithms designed for learning the feasible set currently
available in the literature
\citep[e.g.,][]{metelli2021provably,lindner2022active,metelli2023towards,zhao2023inverse,lazzati2024offline}
struggle when attempting to scale them to IRL problems with \emph{large state
spaces}. This is apparent because their sample complexity exhibits an explicit
dependence on the cardinality of the state space. This inevitably represents a
major limitation since most real-world scenarios concern problems with large, or
even continuous, state spaces
\cite[e.g.,][]{Fu2017LearningRR,barnes2024massively,michini2013scalable,finn2016guided}.

In this context, function approximation represents an essential tool to tackle
the curse of dimensionality and enforce generalization
\cite{silver2016go,mnih2013playing}. Linear Markov Decision Processes (MDPs)
\cite{jin2020provablyefficient,yang2019sampleoptimal} offer a simple but
powerful structure, in which we assume the reward function and the transition
model can be expressed as linear combinations of known features, that permits
theoretical analysis of the sample complexity. Even though many extensions have
been developed \cite{wang2020general,jin2021eluder,du2021bilinear}, the Linear
MDPs framework typically represents one of the first function approximation
settings to analyze when focusing on a novel problem, before moving to more
complex settings \cite[e.g.,][]{wang2020onrewardfree,viano2024imitation}.

In this paper, we aim to shed light on the challenges of scaling the feasible
reward set to large-scale problems. Motivated by its limitations when dealing
with large state spaces, we introduce the novel \emph{Rewards Compatibility}
framework. Being a generalization of the notion of feasible set, it allows us to
define the new \emph{IRL Classification Problem}, a fourth approach to cope with
the ill-posedness of the IRL formulation. This permits the development of \caty
(\catylong), a provably efficient IRL algorithm for Linear MDPs characterized by
large or even continuous state spaces.

\textbf{Original Contributions.}~~The main contributions of the current work
can be summarized as follows:
\begin{itemize}[leftmargin=*, noitemsep, topsep=-2pt]
    \item We prove that the notion of feasible set can \emph{not} be learned
    efficiently in MDPs with large/continuous state spaces, even under the
    structure enforced by Linear MDPs. Nevertheless, we show that this problem
    disappears under the \emph{assumption} that the expert's policy is known, by
    providing a sample efficient algorithm for such setting (Section
    \ref{section: extension framework past works}).
    \item To overcome the need for knowing the expert's policy exactly, we propose \emph{Rewards
    Compatibility}, a novel framework that formalizes the intuitive notion of
    \emph{compatibility} of a reward function with expert demonstrations. It
    generalizes the feasible set and allows us to define an original
    learning setting, \emph{IRL classification}, based on a new formulation of IRL \emph{classification} task (Section \ref{section: rewards compatibility}).
    \item For the newly-devised framework, we develop \caty (\catylong), a new
    sample and computationally efficient IRL algorithm for both
    tabular and Linear MDPs. Remarkably, this \caty does not require the
    additional assumption that the expert's policy is known (Section
    \ref{section: finally linear mdps}).
    \item In the tabular setting, we prove a tight minimax lower
    bound to the sample complexity of the  IRL classification problem of
    $\Omega\big(\frac{H^3SA}{\epsilon^2}(S+\log\frac{1}{\delta})\big)$ episodes,
    where $S$ and $A$ are the cardinalities of the state and action spaces, $H$ is the
    horizon, $\epsilon$ the accuracy and $\delta$ the failure probability. This
    bound is \emph{matched} by \caty, up to logarithmic factors. Exploiting a similar construction, we show that
    a lower bound with the same rate holds also for the Reward-Free Exploration
    (RFE) problem, improving by an $H$ factor over the RFE
    state-of-the-art lower bound \cite{jin2020RFE} (Section
    \ref{section: insights rfe and irl}).
    \item Finally, we formulate a novel \emph{Objective-Free Exploration} (OFE)
    setting that isolates the challenges of exploration beyond
    Reinforcement Learning (RL), by unifying RFE and IRL (Section
    \ref{section: ofe}).
\end{itemize}
Additional related works and the proofs of all the results are reported in Appendix \ref{section: additional
related works} and \ref{section: remove assumption expert policy}
-\ref{section: more lower bound}.

\section{Preliminaries}\label{section: preliminaries}


\textbf{Notation.}~~Given an integer $N \in \Nat$, we define
$\dsb{N}\coloneqq\{1,\dotsc,N\}$.
Given sets $\X$ and $\Y$, we denote $\mathcal{H}_d(\X,\Y)\coloneqq\max\{
\sup_{x\in\X}\inf_{y\in\Y}d(x,y),\sup_{y\in\Y}\inf_{x\in\X}d(y,x)\}$ their Hausdorff distance with inner distance $d$. We denote by $\Delta^\X$ the probability simplex over $\X$, and
by $\Delta_\Y^\X$ the set of functions from $\Y$ to $\Delta^\X$. Sometimes, we
denote the dot product between vectors $x,y$ as $\dotp{x,y}\coloneqq x^\intercal
y$. We employ $\mathcal{O},\Omega,\Theta$ for the common asymptotic
notation and
$\widetilde{\mathcal{O}},\widetilde{\Omega},\widetilde{\Theta}$ to omit
logarithmic terms.

\textbf{Markov Decision Processes.}~~A finite-horizon Markov Decision Process (MDP) without reward
\cite{puterman1994markov} is defined as a tuple $\M\coloneqq\tuple{\S,\A,H,
d_0,p}$, where $\S$ and $\A$ are the measurable state and action spaces, $H \in \Nat$ is
the horizon, $d_0\in\Delta^\S$ is the initial-state distribution, and
$p\in\P\coloneqq\Delta_{\SAH}^\S$ is the transition model. Given a (deterministic) reward
function $r\in\mathfrak{R}\coloneqq[-1,1]^{\SAH}$, we denote by $\overline{\M}\coloneqq\M\cup\{r\}$ the
MDP obtained by pairing $\M$ and $r$.
Each policy $\pi\in\Pi\coloneqq\Delta_{\SH}^\A$ induces in $\overline{\M}$ a state-action
probability distribution $d^{p,\pi}\coloneqq\{d^{p,\pi}_h\}_{h\in\dsb{H}}$ (we
omit $d_0$ for simplicity) that assigns, to each subset $\Z\subseteq\S\times\A$, the
probability of being in $\Z$ at stage $h \in \dsb{H}$ when playing $\pi$ in
$\overline{\M}$. We denote with $\S^{p,\pi}_h$ the set of states supported by
$d^{p,\pi}_h$ for any action at stage $h$, and with $\S^{p,\pi}$ the disjoint
union of sets $\{\S^{p,\pi}_h\}_{h \in \dsb{H}}$.
The $Q$-function of policy $\pi$ in MDP $\overline{\M}$ is defined at every
$(s,a,h) \in \S \times \A \times \dsb{H}$ as $Q^{\pi}_h(s,a;p,r)\coloneqq
\E_{p,\pi}[\sum_{t=h}^H r_{t}(s_t,a_t)|s_h=s,a_h=a]$, and the optimal
$Q$-function as $Q^*_h(s,a;p,r)\coloneqq\sup_{\pi \in \Pi} Q^{\pi}_h(s,a;p,r)$,
where the expectation $\E_{p,\pi}$ is computed over the stochastic process
generated by playing policy $\pi$ in the MDP $\overline{\M}$. Similarly, we
define the $V$-function of policy $\pi$ at $(s,h)$ as $V^\pi_h(s;p,r)\coloneqq
\E_{p,\pi}[\sum_{t=h}^H r_{t}(s_t,a_t)|s_h=s]$, and the optimal $V$-function as
$V^*_h(s;p,r)\coloneqq\sup_{\pi \in \Pi} V^{\pi}_h(s;p,r)$. We define the
utility of $\pi$ as $J^\pi(r;p)\coloneqq \E_{s\sim d_0}[V^\pi_1(s;p,r)]$, and
the optimal utility as $J^*(r;p)\coloneqq \E_{s\sim d_0}[V^*_1(s;p,r)]$.
A forward (sampling) model of the environment permits to collect samples
starting from $s\sim d_0$ and following some policy. A generative (sampling)
model consists in an oracle that, given an arbitrary state-action-stage triple
$s,a,h$ in input, returns a sampled next state $s'\sim p_h(\cdot|s,a)$.

\textbf{Linear MDPs.}~~
Based on \cite{jin2020provablyefficient}, we say that
an MDP $\overline{\M}=\tuple{\S,\A,H, d_0,p,r}$ is a \emph{Linear MDP} with
a (known) feature map $\phi:\SA\rightarrow  \RR^d$, if for every $h\in\dsb{H}$,
there exist $d \in \Nat$ unknown (signed) measures $\mu_h=[\mu_h^1,\dotsc,\mu_h^d]^\intercal$
over $\S$ and an unknown vector $\theta_h\in \RR^d$, such that for every
$(s,a)\in\SA$, we have $p_h(\cdot|s,a)=\dotp{\phi(s,a),\mu_h(\cdot)}$ and
$r_h(s,a)=\dotp{\phi(s,a),\theta_h}$. Without loss of generality, we assume
$\|\phi(s,a)\|_2\le 1$ for all $(s,a)\in\SA$, and
$\max\{\|\theta_h\|_2,\||\mu_h|(\S)\|_2\}\le\sqrt{d}$.\footnote{$|\mu_h|(\mathcal{B})$
denotes the vector containing the variation of each measure $\mu_h^i$ over the measurable set $\mathcal{B}$.}
$\M$ is a \emph{Linear MDP without reward} if its transition
model satisfies the assumption described above.

\textbf{BPI and RFE.}~~In both Best-Policy Identification (BPI) \cite{menard2021fast} and
Reward-Free Exploration (RFE) \cite{jin2020RFE}, the learner has to explore the \emph{unknown} MDP to
optimize a certain reward function. In BPI, the learner observes the reward
function $r$ during exploration, and its goal is to output a policy
$\widehat{\pi}$ such that, in the true MDP with transition model $p$ we have 
$\mathbb{P}\big(J^*(r;p)-J^{\widehat{\pi}}(r;p)\le\epsilon\big)\ge 1-\delta$ for every
$\epsilon,\delta\in(0,1)$.
RFE considers the setting in which the reward to optimize is revealed \emph{a
posteriori} of the exploration phase. Thus
the goal of the agent in RFE is to compute an estimate $\widehat{p}$ of the true dynamics $p$
so that $\mathbb{P}\big(\sup_{r\in\mathfrak{R}} \{J^*(r;p)-
J^{\widehat{\pi}_r}(r;p)\}\le\epsilon\big)\ge 1-\delta$ for every $\epsilon,\delta\in(0,1)$, where $\widehat{\pi}_r$
is the optimal policy in the MDP with $\widehat{p}$ as transition model and $r$
as reward function.

\textbf{Online IRL.}~~We consider the online\footnote{``Online'' refers to how we
estimate the transition model $p$, not to the expert's policy $\pie$, for
which we assume to have access to a batch dataset. This is justified by the fact that most of IRL real-world applications involve the presence of a fixed dataset of expert demonstrations previously collected and the agent can explore the environment in order to reconstruct one (or more) reward functions compatible with those demonstrations.} IRL setting
\cite{metelli2021provably,lindner2022active,zhao2023inverse,Xu2023ProvablyEA,Shani2021OnlineAL}
in which, similarly to the online AL setting
\cite{Shani2021OnlineAL,Xu2023ProvablyEA}, we are given a dataset
$\D^E=\{\tuple{s_1^i,a_1^i,\dotsc,s_{H-1}^i,a_{H-1}^i,s_H^i}\}_{i\in\dsb{\tau^E}}$
of $\tau^E \in \Nat$ trajectories collected by executing the expert's policy
$\pie$ in a certain (unknown) MDP $\overline{\M}=\M\cup\{r^E\}$. 
We make the assumption that $\pie$ is optimal under the true (unknown) reward
$r^E$ in $\overline{\M}$. Since the dynamics of $\overline{\M}$ is unknown, we
are allowed to actively explore the environment through a \emph{forward} model
to collect a new state-action dataset $\D$. The goal is to use the latter and demonstrations in
$\D^E$ to estimate a reward function that makes the expert's
policy $\pie$ optimal. Sometimes, we will denote an IRL instance as
$\M\cup\{\pie\}$, and a Linear IRL instance with recovered reward $r$ as an IRL
instance in which $\M\cup\{r\}$ is a Linear MDP. 

\section{Limitations of the Feasible Set}\label{section: extension framework past works}

In this section, after having characterized the feasible set formulation in Linear MDPs, we
show that it
suffers from \emph{statistical} (and \emph{computational}) inefficiency in
problems with large state spaces, even under the Linear MDP assumption. We will
provide a solution to these issues in Section \ref{section: rewards
compatibility}.

\textbf{The Feasible Set.}~~According to the standard definition~\cite[e.g.,][]{metelli2021provably,lindner2022active,metelli2023towards,zhao2023inverse,lazzati2024offline}, the feasible set contains the rewards that make the
expert's policy $\pie$ optimal, as defined below.

\begin{defi}[Feasible Set \cite{lazzati2024offline}]\label{def: fs
implicit}
Let $\M$ be an MDP without reward and let $\pie$ be the
expert's policy. The \emph{feasible set} $\R_{p,\pie}$ of rewards compatible
with $\pie$ in $\M$ is defined as: {
\begin{equation*}
    \R_{p,\pie}\coloneqq\{
        r\in\mathfrak{R}\,|\,J^{\pie}(r;p)=J^*(r;p)
    \}.
\end{equation*}}%
\end{defi}

Without function approximation, the feasible set contains a variety of rewards
for any deterministic policy. In Linear MDPs, due to the feature map, the
feasible set might exhibit some degeneracy.\footnote{We exemplify this
proposition in Appendix \ref{appendix: examples degenerate fs}. In Appendix
\ref{section: missing proofs} we generalize to infinite state spaces.}
Definition \ref{def: fs implicit} can be adapted to Linear MDPs with feature map
$\phi$ as: $\R_{\phi,p,\pie}\coloneqq\{
r\in\mathfrak{R}\,|\,J^{\pie}(r;p)=J^*(r;p) \wedge \exists
\theta:\dsb{H}\to\RR^d,\forall (s,a,h)\in\SAH:\,
r_h(s,a)=\dotp{\phi(s,a),\theta_h}\}$. We omit $\phi$ in $\R_{\phi,p,\pie}$ for
notational simplicity.

\begin{restatable}{prop}{degeneratefs}\label{prop: degeneratefs} Let $\M$ be a Linear
MDP without reward with a finite state space, and let $\phi$ be a feature
mapping. Let $\{\Phi_h^{\pie}\}_{h\in \dsb{H}}$ and $\{\overline{\Phi}_h\}_{h\in \dsb{H}}$ be the sets of
expert's and non-expert's features, defined for every $h \in \dsb{H}$ as:
\begin{align*}
    \Phi_h^{\pie}&\coloneqq
    \big\{\phi(s,a^E) \,|\, s\in\S^{p,\pie}_h,\,
     a^E\in\A^E_h(s)\big\},\qquad
    \overline{\Phi}_h\coloneqq
    \big\{\phi(s,a) \,|\, s\in\S^{p,\pie}_h,\, a\in\A\setminus\A^E_h(s)
    \big\},
\end{align*}
where $\A^E_h(s)\coloneqq\{a\in\A|\pi^E_h(\cdot|s)>0\}$ for every $s \in \S$. If
for none of the $H$ pairs of sets $\tuple{\Phi_h^{\pie},\overline{\Phi}_h}$
there exists a separating hyperplane, then $\R_{p,\pie}=\{\overline{r}\}$, with
$\overline{r}_h(s,a)=0$ $\forall (s,a,h)\in\SAH$ i.e., the feasible set with linear rewards
in $\phi$ contains only the reward function that assigns zero reward everywhere.
\end{restatable}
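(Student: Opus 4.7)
My plan is to use backward induction on the stage $h$, with the inductive hypothesis being that $\theta_k = 0$ for every $k > h$. In a Linear MDP, the reward admits the representation $r_h(s,a) = \langle \phi(s,a), \theta_h\rangle$, and the action--value function is likewise linear in $\phi$, namely $Q^*_h(s,a;p,r) = \langle \phi(s,a),\, \theta_h + \int \mu_h(\mathrm{d}s')\, V^*_{h+1}(s';p,r)\rangle$. The inductive hypothesis forces $V^*_{h+1} \equiv V^{\pi^E}_{h+1} \equiv 0$, so at stage $h$ the analysis reduces to a single-stage problem: $Q^*_h(s,a) = \langle \phi(s,a), \theta_h\rangle$.

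The key step is to show that $r \in \R_{p,\pi^E}$ together with ``no separating hyperplane for $(\Phi_h^{\pi^E}, \overline{\Phi}_h)$'' forces $\theta_h = 0$. Feasibility $J^{\pi^E}(r;p) = J^*(r;p)$ implies that, for every reachable $s \in \S^{p,\pi^E}_h$, every expert action $a^E \in \A^E_h(s)$, and every $a \in \A$, the per-state inequality $\langle \phi(s,a^E), \theta_h\rangle \geq \langle \phi(s,a), \theta_h\rangle$ must hold. I would then argue by contrapositive: assuming $\theta_h \neq 0$, I would exhibit a separating hyperplane by setting $w = \theta_h$ and choosing an intercept (a natural candidate being $c = \inf_{\phi^E \in \Phi_h^{\pi^E}} \langle \phi^E, \theta_h\rangle$) such that $\langle \phi^E, w\rangle \geq c \geq \langle \phi^{NE}, w\rangle$ for all $\phi^E \in \Phi_h^{\pi^E}$ and $\phi^{NE} \in \overline{\Phi}_h$. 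This would contradict the hypothesis at stage $h$ and force $\theta_h = 0$, closing the induction (the base case $h = H$ is the inductive step with the future trivially zero).

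The main obstacle I anticipate is bridging the gap between the per-state optimality condition, which provides inequalities only between features sharing the \emph{same} state, and the global separating-hyperplane condition, which needs $\min_{\phi^E \in \Phi_h^{\pi^E}} \langle \phi^E, \theta_h\rangle \geq \max_{\phi^{NE} \in \overline{\Phi}_h} \langle \phi^{NE}, \theta_h\rangle$ across pairs drawn from potentially different states. Closing this gap likely requires exploiting further structure of the Linear MDP --- for instance, the fact that the shared feature map enforces $V^*_h(s) = \langle \phi(s,a^E), \theta_h\rangle$ to itself be a linear expression in $\phi(s,\cdot)$ --- or invoking a Hahn--Banach/Farkas-type argument on $\mathrm{conv}(\Phi_h^{\pi^E})$ and $\mathrm{conv}(\overline{\Phi}_h)$. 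I expect this bridging step to be the technical crux of the proof, and the place where the precise definition of ``separating hyperplane'' adopted by the paper (weak vs.\ proper separation) plays a decisive role.
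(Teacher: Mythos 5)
Your reduction is sound and runs parallel to the paper's own: the paper does not use backward induction, but instead invokes the explicit characterization of $\R_{p,\pie}$ that follows from the linearity of $Q^*$ in Linear MDPs (Lemma~\ref{lemma: fs explicit finite S} together with Proposition~\ref{prop: linear Q function in linear MDPs}, leading to the sets $\W_{p,\pie}$ and $\Theta_{p,\pie}$ in Appendix~\ref{section: missing proofs}), which likewise boils the problem down, stage by stage, to the existence of a single vector $w_h$ under which the expert's features dominate the non-expert's features at every state of $\S^{p,\pie}_h$. So up to the point where you announce your ``key step'', the two arguments agree in substance, and your inductive bookkeeping (zeroing out future rewards so that $Q^*_h(s,a)=\langle\phi(s,a),\theta_h\rangle$) is a legitimate alternative to the paper's direct use of the linear-$Q^*$ parametrization.

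The genuine gap is that your proposal stops exactly at the step that carries the whole content of the proposition. Passing from the state-wise inequalities $\langle\phi(s,a^E),\theta_h\rangle\ge\langle\phi(s,a),\theta_h\rangle$ (which only compare features sharing the same state $s$) to a hyperplane separating the pooled sets $\Phi^{\pie}_h$ and $\overline{\Phi}_h$ is not achieved by your candidate intercept $c=\inf_{\phi^E\in\Phi^{\pie}_h}\langle\phi^E,\theta_h\rangle$: nothing in the per-state conditions forces a non-expert feature at one state to lie below the expert features of a \emph{different} state, and the alternatives you gesture at (Hahn--Banach/Farkas on the convex hulls, ``further structure of the Linear MDP'') are not carried out. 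This is precisely where the paper's proof does its work: it observes that the same $w_h$ must witness the expert's optimality at all states of $\S^{p,\pie}_h$ simultaneously, reads a nonzero $w_h\in\W_{p,\pie}$ off as the normal of a hyperplane separating $\Phi^{\pie}_h$ from $\overline{\Phi}_h$ (in the closed-halfspace sense it quotes from Bertsekas), and concludes that the absence of separation at every $h$ forces $\W_{p,\pie}=\{0\}$, hence $\Theta_{p,\pie}=\{0\}$ and $\R_{p,\pie}=\{0\}$. Your worry about the missing common threshold is well founded --- any complete write-up has to explain how the pooled-set separation is obtained from the common normal, or why weak (improper) separation suffices --- but identifying the obstacle is not the same as overcoming it: as written, your plan reduces the proposition to its own key claim and leaves that claim unproven. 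A minor additional point: the proposition concludes that the reward \emph{function} is identically zero, i.e., $\langle\phi(s,a),\theta_h\rangle=0$ for all $(s,a,h)$; asserting ``$\theta_h=0$'' as a vector is stronger than what is claimed (and than what holds when the features do not span $\RR^d$).
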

Intuitively, expert's actions must have the largest optimal $Q$-value among all
actions, and linearity imposes the ``separability'' requirement. The result
holds also for MDPs with linear rewards only. We exemplify Proposition
\ref{prop: degeneratefs} in Appendix \ref{appendix: examples degenerate fs}.

\textbf{Learning the Feasible Set.}~~In order to highlight the challenges of
learning the feasible set with large-scale MDPs, based on
\cite{metelli2023towards,lazzati2024offline}, we devise the following PAC
requirement.
\begin{defi}[PAC Algorithm]\label{def: pac framework bad} Let
$\epsilon,\delta\in(0,1)$, and let $\mathfrak{A}$ be an algorithm that
collects $\tau^E$ samples about $\pie$ using a generative model, and $\tau$
episodes from a Linear MDP without reward $\M=\tuple{\S,\A,H,
d_0,p}$ using a forward model. Let $\widehat{\R}$ be the estimate of the
feasible set $\R_{p,\pie}$ outputted by $\mathfrak{A}$. Then, $\mathfrak{A}$ is
$(\epsilon,\delta)$\emph{-PAC} for IRL if $ \mathbb{P}_{\M,\mathfrak{A}}
        \big(\mathcal{H}_d(\R_{p,\pie},\widehat{\R})\le\epsilon\big)
        \ge 1-\delta$, 
    where $\mathbb{P}_{\M,\mathfrak{A}}$ is the probability measure induced
    by $\mathfrak{A}$ in $\M$, and
    $d(r,\widehat{r})\propto\sup_{\pi\in\Pi}\sum_{h\in\dsb{H}} \E_{(s,a)\sim
    d^{p,\pi}_h(\cdot,\cdot)}|r_h(s,a)-\widehat{r}_h(s,a)|$.\footnote{For simplicity, we provide the full
    expression of distance $d$ in Appendix \ref{section: missing proofs}, Equation~\eqref{eq: inner distance d}.}
    The \emph{sample complexity} is the pair $(\tau^E,\tau)$.
\end{defi}
It is worth noting that in Definition~\ref{def: pac framework bad}, we are considering a generative model for collecting samples from the expert's policy, which represents the easiest learning scenario.
The following result shows that, even in this convenient setting, estimating the feasible set is statistically inefficient.
\begin{restatable}[Statistical Inefficiency]{thr}{propnoass}\label{prop: fs not learnable 0 assumptions}
    Let $\M\cup\{\pie\}$ be a Linear IRL instance with finite state space
    $\S$ and deterministic expert's policy, and let $\epsilon,\delta\in(0,1)$.
    If an algorithm $\mathfrak{A}$ is $(\epsilon,\delta)$-PAC, then
    $\tau^E=\Omega(S)$, where $S\coloneqq|\S|$ is the cardinality of the state space.
\end{restatable}
In other words, even under the easiest learning conditions (i.e., generative
model and deterministic expert), the sample complexity scales directly with the
cardinality of the state space $S$, thus, it is infeasible when $S$ is large or even
infinite. Observe that this result extends to any class of MDPs that contains
Linear MDPs. In Appendix \ref{section: additional regularity}, we analyze if
additional assumptions can drop the $\Omega(S)$ dependence.
Nevertheless, if $\pie$ is \textit{known}, it is
possible to construct sample efficient algorithms. Algorithm~\ref{alg: known pie} (whose pseudocode is presented in Appendix~\ref{section: algorithm linear fs appendix}), under the assumption that $\pie$ is known, makes use of an inner RFE routine (Algorithm 1 of
    \cite{wagenmaker2022noharder}) to recover the feasible set.

\begin{restatable}{thr}{upperboundlinearoldframework}\label{thr: upper bound
linear old framework} Assume that $\pie$ (along with its support
$\S^{p,\pie}$) is known. Then, for any $\epsilon,\delta\in(0,1)$, Algorithm
\ref{alg: known pie} is $(\epsilon,\delta)$-PAC for IRL with a number of
episodes $\tau$ upper bounded by:
\begin{align*}
    \tau\le\widetilde{\mathcal{O}}\Big(
            \frac{H^5d}{\epsilon^2}\Big(d+\log\frac{1}{\delta}\Big)    
        \Big).
\end{align*}
\end{restatable}

\textbf{Limitations of the Feasible Set.}~~We can now conclude that the feasible
set suffers from two main limitations. ($i$) \emph{Sample Inefficiency}: If
$\pie$ is unknown, it requires a number of samples that depends on the
cardinality of the state space (Theorem~\ref{prop: fs not learnable 0
assumptions}). ($ii$) \emph{Lack of Practical Implementability}: It contains a
continuum of rewards, thus, no practical algorithm can explicitly compute it. We
will discuss in the next section how to overcome both these issues.

\section{Rewards Compatibility}\label{section: rewards compatibility}

In this section, we present the main contribution of this work: \emph{Rewards
Compatibility}, a novel framework for IRL that allows us to conveniently rephrase the
learning from demonstrations problem as a classification task. We
anticipate that the presentation of the framework is completely general and
independent of structural assumptions of the MDP (e.g., Linear MDP).

\subsection{Compatible Rewards}

In the following, for ease of presentation, we consider the exact setting, i.e., when
$d_0$, $p$, and $\pie$ are known.
In addition, we will drop the dependence on $p$ when clear from the context.

In IRL, an expert agent demonstrates policy $\pie$ assumed
optimal under some (unknown) reward function $r^E$, i.e.,
$J^*(r^E)=J^{\pie}(r^E)$ . The task is to recover a reward $r$ such that
$J^*(r)=J^{\pie}(r)$.
By definition, IRL tells us that $r^E$ makes the demonstrated policy $\pie$
optimal, but what about other policies? We \emph{do not} and \emph{cannot}
know. Since there are (infinite) rewards making $\pie$ optimal (but they
differ in the performance attributed to other policies) we realize that there are
many rewards equally ``compatible'' with $\pie$.\footnote{See Appendix
\ref{section: visual explanation} for a visual intuition.} Clearly, wih no
additional information, we are unable to identify $r^E$.

The feasible set considers only these rewards, i.e., $r \in \mathfrak{R}$ for which
$J^*(r)=J^{\pie}(r)$, and it refuses all the others. This can be interpreted as
the feasible set carrying out a \emph{classification} of rewards based on 
a ``hard'' notion of \emph{compatibility} with demonstrations. In other words,
rewards $r$ satisfying condition $J^*(r)=J^{\pie}(r)$ are compatible with
$\pie$, and the others are not.
Nevertheless, our insight is that
some rewards are \emph{``more'' compatible} with $\pie$ than
others.

\begin{example}[label=exa:cont]\label{example: muffin cake soup} Consider an MDP with
one state and  $H=1$ in which the expert has three actions:
Eating a muffin (M), a cake (C), or some (bad) vegetable soup (S).
The true reward $r^E$ assigns $r^E(M)=+1,r^E(C)=+0.99$ and $r^E(S)=-1$, i.e., the
expert has a (weak) preference for the muffin over the cake, while she hates the
soup; thus, she will demonstrate $\pie=M$. Let $r_g,r_b$ be:
\begin{align*}
    r_g(M)=+0.99,r_g(C)=+1,r_g(S)=-1,\qquad r_b(M)=-1,r_b(C)=-1,r_b(S)=+1.
\end{align*}
Intuitively, $r_g$ is \emph{``more'' compatible} with $\pie$ than $r_b$, because it
establishes that M and C are much better than S, while reward $r_b$ reverses
the preferences. Clearly, we make a small error if we model the preferences of
the expert with $r_g$ instead of the true reward $r^E$. However, the notion of
feasible set is completely blind to the difference between $r_g$ and $r_b$ at
modeling $r^E$, and it refuses both of them.
\end{example}

We propose the following ``soft'' definition of (non)compatibility to capture
this intuition.\footnote{In Appendix \ref{section: multiplicative alternative},
a \emph{multiplicative} alternative definition is presented.}
\begin{defi}[Rewards (non)Compatibility]\label{def: reward compatibility} Let
    $\M\cup\{\pie\}$ be an IRL instance, and let $r \in \mathfrak{R}$ be
    any reward. We define the \emph{(non)compatibility}
    $\overline{\C}_{p,\pie}:\mathfrak{R}\to  \RR_{\ge 0}$ of reward $r$ w.r.t.
    $\M\cup\{\pie\}$ as:
    \begin{align*}
        \overline{\C}_{p,\pie}(r)\coloneqq
        J^*(r;p) - J^{\pie}(r;p).
    \end{align*}
    \end{defi}
In words, the (non)compatibility of reward $r$ w.r.t. policy $\pie$ in problem
$\M$ quantifies the \textit{sub-optimality} of $\pie$ in the MDP $\M\cup\{r\}$.
By definition, rewards $r$ belonging to the feasible set (i.e., $r\in\R_{p,\pie}$) satisfy
$\overline{\C}_{p,\pie}(r)=0$, i.e., they have zero non-compatibility
with $\pie$ in $\M$.\footnote{We use \emph{(non)compatibility} since a reward $r \in \mathfrak{R}$ is maximally compatible when $\overline{\C}_{p,\pie}(r)=0$. Thus, the larger $\overline{\C}_{p,\pie}(r)$, the more $r$ is non-compatible. In this sense, $\overline{\C}_{p,\pie}(r)$ quantifies the non-compatibility of $r$.}

\begin{example}[continues=exa:cont]
    (Non)compatibility discriminates
between $r_g$ and $r_b$. 
    Indeed, we have that $\overline{\C}_{p,\pie}(r^E)=0$,
    $\overline{\C}_{p,\pie}(r_g)=0.01$, and
    $\overline{\C}_{p,\pie}(r_b)=2$. In words, reward $r_g$ suffers from
    very small (non)compatibility, while $r_b$ suffers from large
    (non)compatibility, thus we say that reward $r_g$ is more compatible with
    $\pie$ than $r_b$, as expected.
\end{example}

By definition of IRL, the true reward $r^E$ makes the observed $\pie$ optimal,
but reveals no information about the other policies. Thus, it is meaningful that
$\overline{\C}_{p,\pie}$ considers the suboptimality of $\pie$ only, because
demonstrations from $\pi^E$ do not provide information about other policies, as illustrated below.

\begin{example}
    Let $r_b'$ be such that $r_b'(M)=+0.99,r_b'(C)=-1,r_b'(S)=+1$. Clearly, $r_b'$ is much
    worse than $r_g$ at modeling $r^E$, because it does not capture the fact
    that the expert appreciates the cake but she hates the soup. However,
    demonstrations from $\pi^E$ alone do not provide information about C or S, but only about
    $\pie=M$ (i.e., the expert always eats the muffin). Thus, we
    have that $\overline{\C}_{p,\pie}(r_g)=\overline{\C}_{p,\pie}(r_b')=0.01$,
    i.e., $r_g$ and $r_b'$ are equally compatible with the given demonstrations.
\end{example}

For a discussion on comparing the (non)compatibility of different rewards, see
Appendix \ref{apx: compatibility comparison}.

\subsection{The IRL Classification Formulation}

Our goal is to overcome the limitations of the feasible set highlighted in
Section \ref{section: extension framework past works}. Drawing inspiration from
the notion of ``membership checker'' algorithm in \cite{lazzati2024offline}, we
propose a novel formulation of IRL.

\begin{defi}[IRL Classification Problem and IRL Algorithm]\label{def: IRL
    problem} An \emph{IRL Classification Problem} instance is made of a tuple
    $\tuple{\M,\pie,\R,\Delta}$, where $\M$ is an MDP without reward, $\pie$
    is the expert's policy, $\R\subseteq\mathfrak{R}$ is a set of rewards to
    classify, and $\Delta\in\RR_{\ge0}$ is some threshold. The goal is to
    classify all and only the rewards $r\in\R$ based on their (non)compatibility
    with $\pie$ in $\M$ w.r.t. $\Delta$. In symbols:
    \begin{align*}
        \forall r\in\R:\;\textnormal{ \textbf{if} }\;\overline{\C}_{p,\pie}(r)
        \le\Delta\;\textnormal{ \textbf{then} } \;  \text{ \textnormal{\textbf{return}} \textnormal{True}, \; \textnormal{\textbf{else}} \textnormal{\textbf{return}} \textnormal{False}}.
    \end{align*}
    An \emph{IRL algorithm} takes in input a
    reward $r\in\R$ and outputs a boolean saying whether
    $\overline{\C}_{p,\pie}(r)\le\Delta$.
\end{defi}

Given $r\in\R$, we output whether it makes the expert's policy $\pie$
at most $\Delta$-suboptimal or not. Intuitively, we classify rewards in
$\R$ based on how good $\pie$ performs w.r.t. them. A $\Delta$-(non)compatible
reward guarantees that, among its $\Delta$-optimal policies, there is $\pie$,
but the optimal policy might be different from $\pi^E$ (see Appendix \ref{section: guarantees
forward rl} for how this relates to (forward) RL). Note that we allow for
$\R\neq\mathfrak{R}$ to manage scenarios in which we have some prior knowledge on $r^E$,
i.e., $r^E\in\R\subset\mathfrak{R}$.
 
\begin{remark}
    Permitting non-zero (non)compatibility is equivalent to enlarging the
    feasible set. Let $\R=\mathfrak{R}$, and define the set of rewards
    positively classified as $\R_\Delta$, i.e., $\mathcal{R}_{\Delta}\coloneqq
    \{r \in \mathcal{R} \,|\, \overline{\C}_{p,\pie}(r) \le\Delta \}$. For any
    $\Delta,\Delta'$ s.t. $0\le\Delta\le\Delta'\le2H$, we have: $        \R_{p,\pie}=\mathcal{R}_{0} \subseteq \mathcal{R}_{\Delta}
        \subseteq \mathcal{R}_{\Delta'} \subseteq
        \mathcal{R}_{2H} = \mathfrak{R} $.
\end{remark}

\textbf{Discussion on Reward Compatibility.}~~It should be remarked that:
\begin{itemize}[leftmargin=*, noitemsep, topsep=-1pt]
    \item \emph{The limits of the rewards compatibility framework are the same
    as the limits of the feasible set}. We cannot identify $r^E$ from the
    feasible set or among the rewards with small (non)compatibility. As
    aforementioned, this is an inherent limit of IRL and cannot be overcome with a more refined objective
    formulation, unless further information on $r^E$ is available (e.g., preferences).
    \item \emph{Rewards compatibility offers advantages over
    feasible set}. Differently from the feasible set, as we
    will see in Section \ref{section: finally linear mdps}, it is possible to
    \emph{practically} implement algorithms that
    solve the IRL classification problem, with guarantees of sample efficiency
    even when the state space is large.
\end{itemize}

\subsection{A Learning Framework for Online IRL Classification}

In this section, we combine the online IRL setting presented in Section \ref{section:
preliminaries} with the IRL classification problem of Definition \ref{def: IRL
problem}. Intuitively, the performance of an algorithm depends on its accuracy
at estimating the (non)compatibility of the rewards, as formalized by the
following PAC requirement.

\begin{defi}[PAC Framework]\label{def: new pac framework} Let
$\epsilon,\delta\in(0,1)$, and let $\D^E$ be a dataset of $\tau^E$ expert's
trajectories. An algorithm $\mathfrak{A}$ exploring for $\tau$ episodes is
$(\epsilon,\delta)$-PAC for the IRL classification problem if:
\begin{align*} 
    \mathop{\mathbb{P}}\limits_{\M,\pie,\mathfrak{A}}\Big(
        \sup\limits_{r\in\R} \Big|\overline{\C}_{p,\pie}(r)-
        \widehat{\C}(r)\Big|\le\epsilon\Big)\ge 1-\delta,
\end{align*}
where $\mathbb{P}_{\M,\pie,\mathfrak{A}}$ is the joint probability measure
induced by $\pie$ and $\mathfrak{A}$ in $\M$, and $\widehat{\C}$ is the estimate
of $\overline{\C}_{p,\pie}$ computed by $\mathfrak{A}$. The \emph{sample
complexity} is defined by the pair $(\tau^E,\tau)$.
\end{defi}

\begin{wrapfigure}{r}{.4\textwidth}
    \centering
    \vspace{-0.2cm}
    \includegraphics[width=.38\textwidth]{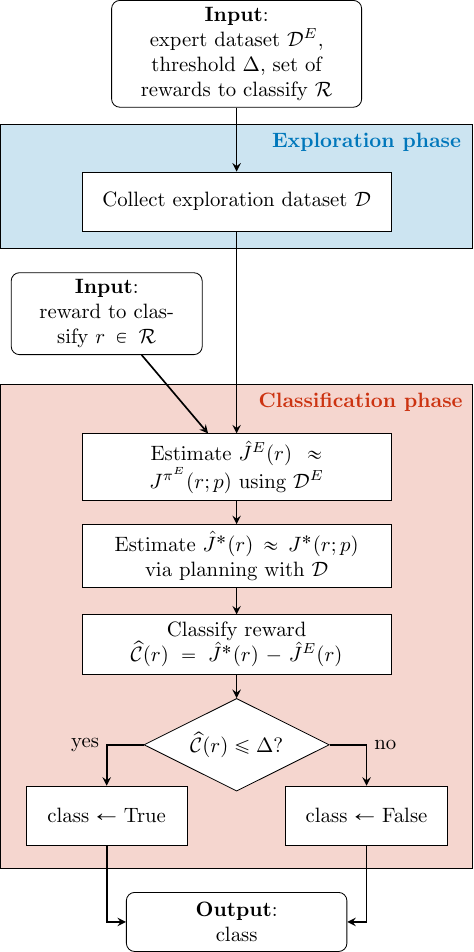}
    \caption{Flow-chart of \caty.}\label{fig:flow}
    \vspace{-1.3cm}
\end{wrapfigure}

Intuitively, our goal is to estimate the (non)compatibility of the rewards in
$\R$ with sufficient accuracy, so that, given a threshold $\Delta \ge 0$, we are able
to classify ``most'' of them correctly w.h.p. (with high probability).
The concept is exemplified in Figure \ref{fig: reward classification}.
Note that the estimation problem is independent of the threshold $\Delta$, which
can be appropriately selected to cope with noise in the demonstrations, (unknown) expert
suboptimality, or to manage the amount of ``false negatives'' and ``false positives''.

\textbf{Remark 4.2.}~~
    \textit{For $\eta\ge0$, let $\mathcal{R}_{\eta}\coloneqq \{r \in \mathcal{R} \,|\,
    \overline{\C}_{p,\pie}(r) \le\eta \}$ and  $\widehat{\R}_{\eta}\coloneqq \{r
    \in \mathcal{R} \,|\, \widehat{\C}(r) \le\eta \}$ denote the sets of rewards
    positively classified using, respectively, the true (non)compatibility
    $\overline{\C}_{p,\pie}$ and the estimate $\widehat{\C}$ constructed by an
    $(\epsilon,\delta)$-PAC algorithm. Then, with probability $1-\delta$, it
    holds that:
    $\widehat{\R}_{\Delta-\epsilon}\subseteq\R_{\Delta}\subseteq
    \widehat{\R}_{\Delta+\epsilon}$. Thus, we can trade-off the amount of
    ``false negatives'' (resp. ``false positives'') by, e.g., choosing the
    threshold $\Delta\gets\Delta+\epsilon$ (resp.
    $\Delta\gets\Delta-\epsilon$).}

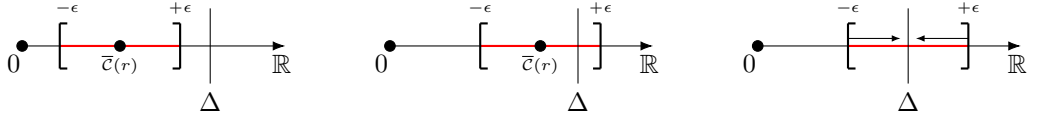
\begin{figure*}[!t]
\centering
\begin{subfigure}[b]{0.3\textwidth}
    \centering
    \begin{tikzpicture}
        \filldraw [black] (0,0) circle (2pt);
        \draw[-{Latex[scale=1.]}] (0,0) -- (3.5,0);
        \draw[thick,red] (0.5,0) -- (2.1,0);
        \filldraw [black] (1.3,0) circle (2pt);
        \draw (2.5,0.5) -- (2.5,-0.5);
        \draw[thick] (0.5,0.3) -- (0.5,-0.3);
        \draw[thick] (0.5,0.3) -- (0.6,0.3);
        \draw[thick] (0.5,-0.3) -- (0.6,-0.3);
        \draw[thick] (2.1,0.3) -- (2.1,-0.3);
        \draw[thick] (2,0.3) -- (2.1,0.3);
        \draw[thick] (2,-0.3) -- (2.1,-0.3);
        \draw (0.1,0) node[anchor=north east] {$0$};
        \draw (3.2,0) node[anchor=north west] {$ \RR$};
        \draw (2.5,-0.5) node[anchor=north] {$\Delta$};
        \draw (1.3,0) node[anchor=north] {\tiny$\overline{\C}(r)$};
        \draw (0.6,0.3) node[anchor=south] {\tiny$-\epsilon$};
        \draw (2.1,0.3) node[anchor=south] {\tiny$+\epsilon$};
    \end{tikzpicture}
    \caption
    {{ Reward $r$ is classified correctly.}}    
    \label{fig:r correct}
\end{subfigure}
\hfill
\begin{subfigure}[b]{0.3\textwidth}  
    \centering 
    \begin{tikzpicture}
        \filldraw [black] (0,0) circle (2pt);
        \draw[-{Latex[scale=1.]}] (0,0) -- (3.5,0);
        \draw[thick,red] (1.2,0) -- (2.8,0);
        \filldraw [black] (2,0) circle (2pt);
        \draw (2.5,0.5) -- (2.5,-0.5);
        \draw[thick] (1.2,0.3) -- (1.2,-0.3);
        \draw[thick] (1.2,0.3) -- (1.3,0.3);
        \draw[thick] (1.2,-0.3) -- (1.3,-0.3);
        \draw[thick] (2.8,0.3) -- (2.8,-0.3);
        \draw[thick] (2.7,0.3) -- (2.8,0.3);
        \draw[thick] (2.7,-0.3) -- (2.8,-0.3);
        \draw (0.1,0) node[anchor=north east] {$0$};
        \draw (3.2,0) node[anchor=north west] {$ \RR$};
        \draw (2.5,-0.5) node[anchor=north] {$\Delta$};
        \draw (2,0) node[anchor=north] {\tiny$\overline{\C}(r)$};
        \draw (1.2,0.3) node[anchor=south] {\tiny$-\epsilon$};
        \draw (2.8,0.3) node[anchor=south] {\tiny$+\epsilon$};
    \end{tikzpicture}
    \caption[]%
    {{ Reward $r$ can be mis-classified.}}
    \label{fig: r maybe wrong}
\end{subfigure}
\hfill
\begin{subfigure}[b]{0.3\textwidth}  
    \centering 
    \begin{tikzpicture}
        \filldraw [black] (0,0) circle (2pt);
        \draw[-{Latex[scale=1.]}] (0,0) -- (3.5,0);
        \draw[thick,red] (1.2,0) -- (2.8,0);
        \draw (2,0.5) -- (2,-0.5);
        \draw[thick] (1.2,0.3) -- (1.2,-0.3);
        \draw[thick] (1.2,0.3) -- (1.3,0.3);
        \draw[thick] (1.2,-0.3) -- (1.3,-0.3);
        \draw[thick] (2.8,0.3) -- (2.8,-0.3);
        \draw[thick] (2.7,0.3) -- (2.8,0.3);
        \draw[thick] (2.7,-0.3) -- (2.8,-0.3);
        \draw[-{Latex[scale=0.7]}] (1.2,0.1) -- (1.9,0.1);
        \draw[-{Latex[scale=0.7]}] (2.8,0.1) -- (2.1,0.1);
        \draw (0.1,0) node[anchor=north east] {$0$};
        \draw (3.2,0) node[anchor=north west] {$ \RR$};
        \draw (2,-0.5) node[anchor=north] {$\Delta$};
        \draw (1.2,0.3) node[anchor=south] {\tiny$-\epsilon$};
        \draw (2.8,0.3) node[anchor=south] {\tiny$+\epsilon$};
    \end{tikzpicture}
    \caption[]%
    {{ Range of uncertain (non)compatibility values.}}    
    \label{fig: range uncertain noncomp values}
\end{subfigure}
\caption
{The axis represents (estimated) (non)compatibility values. (a) Rewards
$r$ whose true (non)compatibility
$\overline{\C}(r)\coloneqq\overline{\C}_{p,\pie}(r)$ is far from threshold
$\Delta$ by at least $\epsilon$, are correctly classified, while (b) in the opposite case, rewards can be mis-classified. (c) The red
interval $[\Delta-\epsilon,\Delta+\epsilon]$ exemplifies the set of rewards
$\{r\in\R\,|\, |\overline{\C}(r)-\Delta|\le\epsilon\}$ that are (potentially)
mis-classified. The length of the interval reduces with $\epsilon$.}
\label{fig: reward classification}
\end{figure*}

\section{\caty: A Provably Efficient Algorithm for IRL}\label{section: finally linear mdps}

In this section, we present \caty (\catylong), a provably efficient algorithm
for solving the \emph{online} IRL \emph{classification} problem. We consider
three different kinds of structure for the MDPs: tabular MDPs, tabular MDPs with
linear rewards, and Linear MDPs. Similarly to RFE, our online IRL classification
setting is made of two phases: ($i$) an \emph{exploration} phase, in which the
algorithm explores the environment using the knowledge of $\R$ and of
the expert's dataset $\D^E$ to collect samples about the dynamics of the MDP,
and ($ii$) a \emph{classification} phase, in which it performs the classification of a reward $r\in\R$ without interactions with the environment.
A flow-chart is reported in Figure~\ref{fig:flow} (pseudocode in Appendix \ref{section: more on algorithm}).

\textbf{\textcolor{vibrantBlue}{Exploration phase.}}~~The \emph{exploration}
phase collects a dataset $\mathcal{D}$ in a way that depends on the
structure of the MDP and of the set of rewards $\R$ to be classified.
Specifically, for Linear MDPs, \caty executes RFLin
\cite{wagenmaker2022noharder}. Instead, for tabular MDPs (with or without linear
reward), \caty instantiates either BPI-UCBVI \cite{menard2021fast} for each
reward $r \in \R$ (when $|\R| = \Theta(1)$, i.e., a ``small'' constant w.r.t. to the size
of the MDP, where ``small'' depends on the size of the state space, see Appendix
\ref{appendix: proof main algorithm}) or RF-Express \cite{menard2021fast}. 
%
Note that \caty in this phase does not use the expert's dataset  $\D^E$.
%

\textbf{\textcolor{vibrantRed}{Classification phase.}}~~The
\emph{classification} performs the estimation $\widehat{\C}(r)$
of the (non)compatibility term $\overline{\C}_{p,\pie}(r)$ for the single input
reward $r\in\R$ by splitting it into two independent estimates:
$\widehat{J}^E(r)\approx J^{\pie}(r;p)$, which is computed with $\D^E$ only, and
$\widehat{J}^*(r)\approx J^*(r;p)$, which is computed with $\D$ only.
Concerning $\widehat{J}^E(r)$, when the reward is linear
$r_h(s,a)=\dotp{\phi(s,a), \theta_h}$, \caty uses $\D^E$ to construct an
empirical estimate $\widehat{\psi}^E\approx \psi^{p,\pi^E}$ of the expert's
expected feature count \cite{arora2018survey}. Otherwise, it directly estimates
$\widehat{d}^E\approx d^{p,\pi^E}$ the expert's occupancy measure. Such
estimates can be used to derive $\widehat{J}^E(r)$ straightforwardly.
Regarding $\widehat{J}^*(r)$, \caty exploits the \emph{planning} phase of the
corresponding RFE (or BPI) algorithm adopted at exploration
phase.\footnote{RFE/BPI algorithms, at planning phase, return a policy, and not
its estimated performance. Since BPI-UCBVI, RF-Express, and RFLin each compute
an estimate of $J^*(r;p)$ as an intermediate step, with negligible abuse of notation, we
assume that they output such estimate.}
Finally, \caty applies the (potentially negative) input threshold $\Delta$ to
the difference $\widehat{J}^*(r)-\widehat{J}^E(r)$ to perform the classification.
See Appendix \ref{section: more on algorithm} for the full pseudo-code.
Clearly, \caty can be implemented in practice, since it considers a single
reward at a time instead of computing the full feasible set, and it is
computationally efficient in linear MDPs, since it uses a computationally
efficient algorithm as subroutine (see \cite{wagenmaker2022noharder}).

%

\textbf{Sample Efficiency.}~~The next result analyzes the sample complexity (Definition \ref{def: new pac
framework}) of \caty.

\begin{restatable}[Sample Complexity of \caty]{thr}{upperboundtabular}\label{thr: bounds tabular} 
Let
$\epsilon,\delta\in(0,1)$. Then \caty is $(\epsilon,\delta)$-PAC for IRL with a
sample complexity upper bounded by:
    \[\begin{array}{lll}
    \text{Tabular MDPs:}&
    \displaystyle \tau^E\le
    \widetilde{\mathcal{O}}\Big(\frac{H^{3}SA}{\epsilon^2}\log\frac{1}{\delta}\Big), &
    \displaystyle \tau\le \widetilde{\mathcal{O}}\Big(\frac{H^3SA}{\epsilon^2}\Big(N+\log\frac{1}{\delta}\Big)\Big),\\[.35cm]
    \text{Tabular MDPs with linear rewards:}&
    \displaystyle \tau^E\le \widetilde{\mathcal{O}}\Big(\frac{H^{3}d}{\epsilon^2}\log\frac{1}{\delta}\Big),& 
        \displaystyle \tau\le \widetilde{\mathcal{O}}\Big(\frac{H^3SA}{\epsilon^2}\Big(N+\log\frac{1}{\delta}\Big)\Big),\\[.35cm]
    \text{Linear MDPs:}&
    \displaystyle \tau^E\le \widetilde{\mathcal{O}}\Big(\frac{H^{3}d}{\epsilon^2}\log\frac{1}{\delta}\Big),&
          \displaystyle \tau\le \widetilde{\mathcal{O}}\Big(\frac{H^5d}{\epsilon^2}\Big(d+\log\frac{1}{\delta}\Big)\Big),
    \end{array}\]
    where $N=0$ if $|\R|=\Theta(1)$, and $N=S$ otherwise.  
\end{restatable}
Some observations are in order.
%
%
We conjecture that the $d^2$ dependence when $|\R|=\Theta(1)$ is
unavoidable in Linear MDPs because of the lower bound for BPI in
\cite{wagenmaker2022noharder}.
In tabular MDPs with deterministic expert, one might use the results in
\cite{Xu2023ProvablyEA} to reduce the rate of $\tau^E$ from
$\widetilde{\mathcal{O}}(SAH^3 \log(\delta^{-1})/\epsilon^2)$ to $\widetilde{\mathcal{O}}(SH^{3/2}\log(\delta^{-1})/\epsilon^2)$. 
Finally, note that the choice $\Delta=\epsilon$ allows us to positively classify
all the rewards in the feasible set $\R_{p,\pie}$ w.h.p. and, in this case, other rewards
positively classified have true (non)compatibility at most $2\epsilon$ w.h.p. In light of this result we conclude that  \emph{rewards compatibility} framework allows the \emph{practical} development of     \emph{sample efficient} algorithms (e.g., \caty) in Linear MDPs with    large/continuous state spaces.
%
%

\section{Statistical Barriers and Objective-Free Exploration}\label{section: statistical barriers}

In this section, we show that \caty is minimax optimal for the number of
exploration episodes in tabular MDPs, and that RFE and IRL share the same
theoretical sample complexity. This allows us to formulate \emph{Objective-Free
Exploration}, a unifying setting for exploration problems.

\subsection{The Theoretical Limits of IRL (and RFE) in the Tabular Setting}\label{section: insights rfe and irl}
In \caty, we use a minimax optimal RFE algorithm for exploration. However,
this does not entail that \caty is minimax optimal for the IRL
classification problem. There might exist another PAC algorithm
with a sample complexity smaller than \caty. The following result
states that, in the tabular setting, the bound in Theorem \ref{thr: bounds
tabular} is tight for the number of
episodes $\tau$.

\begin{restatable} [IRL Classification - Lower Bound]{thr}{instancedependentlowerbound}\label{thr: instance
dependent lower bound} Let $\mathfrak{A}$ be an $(\epsilon,\delta)$-PAC
algorithm for the IRL classification in tabular MDPs. Let
$\tau$ be the number of exploration episodes. Then, there exists
an IRL classification instance such that:
    \begin{align*}
      \text{if }|\R|\ge1:\;\tau\ge\Omega\bigg(
            \frac{H^3SA}{\epsilon^2}\log\frac{1}{\delta}
            \bigg),\qquad \text{if }\R=\mathfrak{R}:\;\tau\ge\Omega\bigg(
            \frac{H^3SA}{\epsilon^2}\Big(
            S+\log\frac{1}{\delta}    
            \Big)    
            \bigg).
    \end{align*}
\end{restatable}

In both cases, the lower bound is \emph{matched} by \caty, up to logarithmic factors. Note that \caty
explores without using $\D^E$, thus, minimax optimality for $\tau$ can be
achieved without the knowledge of $\D^E$ at exploration phase.
As a by-product, we observe that a similar lower bound construction can be made
also for RFE, leading to the following result.

\begin{restatable} [RFE - Refined Lower Bound]{thr}{lowerboundrfe}\label{thr: lower bound rfe} Let
    $\mathfrak{A}$ be an $(\epsilon,\delta)$-PAC algorithm for RFE in tabular
    MDPs. Let $\tau$ be the number of exploration episodes.
    Then, there exists an RFE instance such that:
        \begin{align*}
            \tau\ge\Omega\bigg(
                \frac{H^3SA}{\epsilon^2}\Big(
                S+\log\frac{1}{\delta}    
                \Big)    
                \bigg).
        \end{align*}
\end{restatable}

This bound improves the state-of-the-art RFE lower bound
$\Omega(\frac{H^3SA}{\epsilon^2}(\frac{S}{H}+\log\frac{1}{\delta}))$ (obtained
combining the bounds in \cite{jin2020RFE} and \cite{domingues2021episodic}) by
one $H$ factor, and it is matched by RF-Express
\cite{menard2021fast}.

\subsection{Objective-Free Exploration (OFE)}\label{section: ofe}

What is the most efficient exploration strategy that can be performed in an
unknown environment? It \emph{depends} on the subsequent task that shall be
solved. However, if the task is unknown at the exploration phase, we need a strategy
that suffices for all the tasks that one might be interested in solving. Let us
denote by $\mathscr{F}$ the set of RL and IRL classification tasks. Since \caty
is a sample efficient algorithm for the IRL classification problem, and it uses
RFE as a subroutine, we conclude that the RFE exploration strategy is sufficient
(and also minimax optimal in tabular MDPs) to obtain guarantees for class
$\mathscr{F}$.
Are there other problems for which RFE exploration suffices when the specific
problem instance is revealed \emph{a posteriori} of the exploration phase? We
believe so, and in Appendix \ref{section: more lower bound}, we identify two
additional problems, i.e., Matching Performance (MP) and Imitation Learning from
Observations alone (ILfO) \cite{liu2018imitation}, that represent potential
candidates to belong to $\mathscr{F}$.

More in general, we formulate the \emph{Objective-Free Exploration (OFE)}
problem as follows:

\begin{defi}[Objective-Free Exploration]
Given a tuple $\tuple{\M,\mathscr{F},(\epsilon,\delta)}$, where $\M$ is an
\emph{unknown} environment (e.g., MDP without reward), and $\mathscr{F}$ is a
certain class of tasks (e.g., all RL and IRL problems), the \emph{Objective-Free
Exploration} (OFE) problem aims to find an exploration of the environment
$\M$ (e.g., RFE exploration) that permits to solve \emph{any} task
$f\in\mathscr{F}$ in an $(\epsilon,\delta)$-correct manner.
\end{defi}

This problem is called ``objective-free'' because it does not require the
knowledge of the specific ``objective'' $f\in\mathscr{F}$ to be solved. In
Appendix \ref{section: more on ofe}, we describe a use case for OFE. We believe
this is an interesting problem to be studied in future.

\section{Conclusions}\label{section: conclusions}

In this paper, we have shown that the feasible set cannot be learned efficiently
in problems with large/continuous state spaces even under the strong structure
provided by Linear MDPs.
For this reason, we have introduced the powerful framework of \emph{compatible
rewards}, which formalizes the intuitive notion of compatibility of a
reward function with expert demonstrations, and it allows us to formulate the
IRL problem as a \emph{classification} task.
In this context, we have devised \caty, a provably efficient IRL algorithm for
Linear MDPs with large/continuous state spaces.
Furthermore, in tabular MDPs, we have demonstrated the minimax optimality of
\caty at exploration by presenting a novel lower bound to the IRL classification
problem. As a by-product, our construction improves the current state-of-the-art
lower bound for RFE.
Finally, we have introduced OFE, a unifying problem setting for exploration
problems, which generalizes both RFE and IRL.

\textbf{Limitations.}~~A limitation of our contributions concerns the adoption
of the \emph{Linear MDP} model, whose assumptions are overly strong to be
consistently applied to real-world applications. Nevertheless, while the rewards
compatibility framework is general and not tied to Linear MDPs, we believe that
Linear MDPs represent an important initial step toward the development of
provably efficient IRL algorithms with more general function
approximation structures. Although
a lower bound for Linear MDPs is missing, we believe that it represents an
interesting direction for future works.
Finally, we note that the \emph{empirical
validation} of the proposed algorithm is out of the scope of this work.

\textbf{Future Directions.}~~Promising directions for future works concern the
extension of the analysis of the \emph{rewards compatibility} framework beyond
Linear MDPs to general function approximation and to the offline setting.
In addition, it might be fascinating to extend the notion of reward
compatibility to other kinds of expert feedback (in the context of ReL), and
to other IRL settings (e.g., suboptimal experts). Finally, we believe that OFE
should be analysed in-depth given its practical importance.

\begin{ack}
    AI4REALNET has received funding from European Union's Horizon Europe
    Research and Innovation programme under the Grant Agreement No 101119527.
    Views and opinions expressed are however those of the author(s) only and do
    not necessarily reflect those of the European Union. Neither the European
    Union nor the granting authority can be held responsible for them.

    Funded by the European Union - Next Generation EU within the project NRPP
    M4C2, Investment 1.,3 DD. 341 - 15 march 2022 - FAIR - Future Artificial
    Intelligence Research - Spoke 4 - PE00000013 - D53C22002380006.
\end{ack}

\bibliographystyle{plain}
\bibliography{refs.bib}

\appendix

\newpage

\section{Related Works}\label{section: additional related works}

In this appendix, we report and describe the literature that
most relates to this paper. Theoretical works concerning the online IRL
problem can be grouped in works that concern the feasible set, and works that do
not.

Let us begin with works related to the feasible set.
While the notion of feasible set has been introduced implicitly in
\cite{ng2000algorithms}, the first paper that analyses the sample complexity of
estimating the feasible set in online IRL is \cite{metelli2021provably}. Authors
in \cite{metelli2021provably} adopt the simple generative model in
tabular MDPs, and devise two sample efficient algorithms.
\cite{lindner2022active} focuses on the same problem as
\cite{metelli2021provably}, but adopts a forward model in tabular MDPs. By
adopting RFE exploration algorithms, they devise sample efficient algorithms.
However, as remarked in \cite{zhao2023inverse}, the learning framework
considered in \cite{lindner2022active} suffers from a major issue.
\cite{metelli2023towards} builds upon \cite{metelli2021provably} to construct
the first minimax lower bound for the problem of estimating the feasible set
using a generative model. The lower bound is in the order of
$\Omega\big(\frac{H^3SA}{\epsilon^2}(S+\log\frac{1}{\delta})\big)$, where $S$
and $A$ are the cardinality of the state and action spaces, $H$ is the horizon,
$\epsilon$ is the accuracy and $\delta$ the failure probability. In addition,
\cite{metelli2023towards} develops US-IRL, an efficient algorithm whose sample
complexity matches the lower bound. \cite{poiani2024inverse} analyze a setting
analogous to that of \cite{metelli2023towards}, in which there is availability
of a single optimal expert and multiple suboptimal experts with known suboptimality.
\cite{lazzati2024offline} analyse the problem of estimating the feasible set
when no active exploration of the environment is allowed, but the learner is
given a batch dataset collected by some behavior policy $\pi^b$. Interestingly,
\cite{lazzati2024offline} focuses on two novel learning targets that are suited
for the offline setting, i.e., a subset and a superset of the feasible set.
Authors in \cite{lazzati2024offline} demonstrate that such sets are the tightest
learnable subset and superset of the feasible set, and propose a pessimistic
algoroithm, PIRLO, to estimate them. \cite{zhao2023inverse} analyses the same
offline setting as \cite{lazzati2024offline}, but instead of focusing on the
notion of feasible set directly, it considers the notion of reward mapping, which
considers reward functions as parametrized by their value and advantage
functions, and whose image coincides with the feasible set.

With regards to online IRL works that do not consider the feasible set, we
mention \cite{lopes2009active}, which analyses an active learning framework for
IRL. However, \cite{lopes2009active} assumes that the transition model is known,
and its goal is to estimate the expert policy only. Works
\cite{komanduru2019correctness} and \cite{komanduru2021lowerbound} provide,
respectively, an upper bound and a lower bound to the sample complexity of IRL
for $\beta$-strict separable problems in the tabular setting. However, both the
setting considered and the bound obtained are fairly different from ours. 
Analogously, \cite{dexter2021IRL} provides a sample efficient IRL algorithm for
$\beta$-strict separable problems with continuous state space. However, their
setting is different from ours since they assume that the system can be modelled
using a basis of orthonormal functions.

\subsection{Additional Related Works}
In this section, we collect additional related works that deserve to be mentioned.

\paragraph{Identifiability and Reward Learning.}
As aforementioned, the IRL problem is ill-posed, thus, to retrieve a single
reward, additional constraints shall be imposed. 
\cite{amin2016resolving} analyses the setting in which demonstrations of an
optimal policy for the same reward function are provided across environments
with different transition models. In this way, authors can reduce the
experimental unidentifiability, and recover the state-only reward function.
\cite{cao2021identifiability} and \cite{kim2020domain} concern reward
identifiability but in entropy-regularized MDPs
\cite{ziebart2008maximum,Fu2017LearningRR}. Such setting is in some sense easier
than the common IRL setting, because entropy-regularization permits a unique optimal policy for any reward function. \cite{cao2021identifiability}
uses expert demonstrations from multiple transition models and multiple discount
factors to retrieve the reward function, while \cite{kim2020domain} analyses
properties of the dynamics of the MDP to increase the constraints.
With regards to the more general field of Reward Learning (ReL), we mention
\cite{jeon2020rewardrational}, which introduces a framework that formalizes the
constraints imposed by various kinds of human feedback (like demonstrations or
preferences \cite{wirth2017surveyPbRL}). Intuitively, multiple
feedbacks about the same reward represent additional constraints beyond mere
demonstrations. \cite{skalse2023invariance} characterizes the partial
identifiability of the reward function based on various reward learning data
sources.

\paragraph{Linear MDPs and Extensions.}
As explained for instance in \cite{jin2020provablyefficient}, since lower bounds
to the sample complexity of various RL tasks in tabular MDPs depend explicitly
on the cardinality state space $S$, then we need to add structure to the problem if we
want to develop efficient algorithms that scale to large state spaces. For this
reason, the works \cite{yang2019sampleoptimal,jin2020provablyefficient} analyze the
Linear MDP model, which enforces some linearity constraints to the common MDP
model. In this way, authors are able to provide efficient algorithms for
RL in problems with large/continuous state spaces. However, there are other
settings beyond Linear MDPs that are analysed in the RL literature.
\cite{jiang2017contextual} introduces the notion of Bellman rank as complexity
measure, and provides a sample efficient algorithm for problems with small
Bellman rank. \cite{wang2020general} analyzes general value function
approximation when the function class has a low eluder dimension.
\cite{jin2021eluder} generalizes both the eluder dimension and Bellman
rank complexity measures by defining the Bellman eluder dimension and providing a provably
efficient algorithm. \cite{du2021bilinear} introduces bilinear classes, a
structural framework that, among the others, generalizes Linear MDPs.

\paragraph{Reward-Free Exploration (RFE) in Tabular and Linear MDPs.} The RFE
problem was introduced in \cite{jin2020RFE}, where authors provided a sample
efficient algorithm and a lower bound for tabular MDPs. Later on, the
state-of-the-art sample-efficient algorithms for RFE in tabular MDPs have been
developed in \cite{kaufmann2021adaptiveRFE,menard2021fast,li2023minimaxoptimal}.
It should be remarked that RFE requires more samples than common RL in tabular
MDPs. \cite{wang2020onrewardfree} proposes a sample efficient algorithm for RFE
in linear MDPs. \cite{wagenmaker2022noharder} improves the algorithm of
\cite{wang2020onrewardfree} and, interestingly, demonstrates that RFE is no harder
than RL in Linear MDPs.

\paragraph{Online Apprenticeship Learning (AL).}
The first works that provide a theoretical analysis of the AL setting when
the transition model is unknown are \cite{abbeel2005exploration,syed2007game}.
Recently, \cite{Shani2021OnlineAL} formulates the online AL problem, which
closely resembles the online IRL problem. The main difference is that in online
AL the ultimate goal is to imitate the expert, while in IRL is to recover a
reward function. \cite{Xu2023ProvablyEA} improves the results in
\cite{Shani2021OnlineAL} by combining an RFE algorithm with an efficient algorithm for the estimation of the visitation distribution of the
deterministic expert's policy in tabular MDPs, presented in
\cite{Rajaraman2020TowardTF}. We mention
also \cite{rajaraman2021value,swamy2022minimax} for the sample complexity of
estimating the expert's policy in problems with linear function approximation.
In the context of Imitation Learning from Observation alone (ILfO)
\cite{liu2018imitation}, the work \cite{sun2019provably} proposes a probably
efficient algorithm for large-scale MDPs with unknown transition model.
\cite{liu2022learning} provides an efficient AL algorithm based on GAIL
\cite{ho2016gail} in Linear Kernel Episodic MDPs \cite{zhou2021nearly} with
unknown transition model.

\paragraph{Others.} We mention work \cite{klein2012IRLclassification}, which
considers a classification approach for IRL. However, this is fairly different
from our IRL problem formulation in Section \ref{section: rewards compatibility}.

\section{Additional Results and Proofs for Section \ref{section: extension framework past works}}\label{section: remove assumption expert policy}

In this section, we provide additional results beyond those presented in Section
\ref{section: extension framework past works}, and then we report the missing
proofs. Specifically,
in Appendix \ref{appendix: examples degenerate fs}, we
provide two numerical examples that explain Proposition \ref{prop:
degeneratefs},
in Appendix \ref{section: additional regularity} we show
that some additional regularity assumptions beyond the Linear MDP cannot remove
the dependence on the cardinality of the state space in the sample complexity. In
Appendix \ref{section: algorithm linear fs appendix}, we report and describe the
sample efficient algorithm mentioned in Section \ref{section: extension
framework past works}, while in Appendix \ref{section: missing proofs} we
collect all the missing proofs of this section.

\subsection{Some Examples for Proposition \ref{prop: degeneratefs}}\label{appendix: examples degenerate fs}

The following examples aim to explain Proposition \ref{prop: degeneratefs} in a
simple manner.

\begin{example}[Non-degenerate feasible set]
    Let $\M=\tuple{\S,\A,H,d_0}\cup\{\pie\}$ be an IRL instance such that
    $\S=\{s_1,s_2\},\A=\{a_1,a_2\},H=1,d_0(s_1)=d_0(s_2)=1/2,\pie(s_1)=\pie(s_2)=a_1$.
    Consider the feature mapping $\phi_1$ s.t. $\phi_1(s,a)=\indic{a=a_1}$ for
    all $s\in\S$.
    Then, we have $\Phi^{\pie}=\{1\}$ and $\overline{\Phi}=\{0\}$. Clearly,
    these sets can be separated by any hyperplane $w\in\RR_{>0}$, since $1\cdot
    w > 0\cdot w$, and so $\R_{p,\pie}\neq \{\overline{r}\}$, with
    $\overline{r}_h(s,a)=0$ $\forall (s,a,h)\in\SAH$. Actually,
    $\R_{p,\pie}=\{r\in\mathfrak{R}\,|\,\exists\theta\in(0,1]:\,r_1(s,a)=\dotp{\phi(s,a),\theta}\,\forall
    (s,a)\in\SA\}$.
\end{example}

\begin{example}[Degenerate feasible set]
    Consider the same IRL instance as in the previous example, but this time
    consider the feature mapping $\phi_2$ s.t. $\phi_2(s_1,a)=\indic{a=a_1}$, and
    $\phi_2(s_2,a)=\indic{a=a_2}$. Then, we have $\Phi^{\pie}=\{0,1\}$ and
    $\overline{\Phi}=\{0,1\}$. Clearly, the two sets coincide, thus they cannot
    be separated, and $\R_{p,\pie}= \{\overline{r}\}$, with
    $\overline{r}_h(s,a)=0$ $\forall (s,a,h)\in\SAH$.    
\end{example}

\subsection{Additional Regularity Assumptions of the State Space do not Make the Problem Learnable}\label{section: additional regularity}

In tabular MDPs with small state space $\S$, collecting samples from every state
$s\in\S$ is feasible, and it is exactly what previous works do:
\begin{itemize}
    \item Under the assumption that $\pie$ is deterministic,
    \cite{metelli2021provably,metelli2023towards} collect one sample from every
    $(s,h)\in\SH$ using a generative model, obtaining $\pie$ exactly.
    \item If $\pie$ is stochastic, under the
    assumption that all actions in the support of the expert's policy are played
    with probability at least $\pi_{\min}$ (see Assumption D.1 of
    \cite{metelli2023towards}), both \cite{metelli2023towards,zhao2023inverse} are
    able to learn the support of $\pie$ exactly w.h.p. using $\propto
    1/\pi_{\min}$ samples in the online setting.\footnote{Actually,
    \cite{zhao2023inverse} makes use of a
    concentrability assumption too.}
    \item In the offline setting, assuming that the occupancy measure of the
    expert's policy is at least $d_{\min}$ in all reachable $(s,a)\in\SA$, then
    \cite{lazzati2024offline} learns the support of $\pie$ exactly w.h.p. using
    $\propto 1/d_{\min}$ episodes.
\end{itemize}
However, when $\S$ is large, even under the Linear MDP assumption, this is not possible.
In Section \ref{section: extension framework past works}, we have formalized
this fact with the following proposition:
\propnoass*
Theorem \ref{prop: fs not learnable 0 assumptions} tells us that the Linear
MDP assumption is too weak for the feasible set to be learnable using the PAC
framework of Definition \ref{def: pac framework bad} with a number of samples
independent of the cardinality of the state space. Therefore, we can try to introduce
an additional assumption on the structure of the IRL problem $\M\cup\{\pie\}$
and see whether it helps in alleviating the issue. Let us consider the following first assumption.
\begin{ass}\label{ass: lipschitz features only}
    We assume a Lipschitz continuity property between features and states:
    \begin{align*}
        \forall (s,a,s')\in\SA\times\S:\quad\|\phi(s,a)-\phi(s',a)\|_2\le L\|s-s'\|,
    \end{align*}
    for some $L>0$ and some distance $\|\cdot-\cdot\|$ in $\S$.
\end{ass}
The intuition is that, based on the fact that in Linear MDPs the $Q$-function of
any policy $\pi$ is linear in the feature mapping
$Q^{\pi}_h(\cdot,\cdot)=\tuple{\phi(\cdot,\cdot),w^{\pi}_h}$ for some parameter
vector $w_h^\pi\in \RR^d$ (see \cite{jin2020provablyefficient}), then if we are
able to $\epsilon$-cover the state space $\S$, we can approximate the
$Q$-function $Q^{\pi}_h(s,\cdot)$ in any $s\in\S$ with
the $Q$-function $Q_h^{\pi}(s',\cdot)$ of the closest point $s'$ int the covering, so that
$|Q_h^{\pi}(s,a)-Q_h^{\pi}(s',a)|=|(\phi(s,a)-\phi(s',a))^\intercal
w_h^{\pi}|\le \|\phi(s,a)-\phi(s',a)\|_2\|w_h^{\pi}\|_2\le
L\epsilon\|w_h^{\pi}\|_2$.
However, this assumption is not sufficient.
\begin{restatable}{prop}{proponlyfeaturesass}
    Under the setting of Proposition \ref{prop: fs not learnable 0 assumptions},
    even under Assumption \ref{ass: lipschitz features only}, then an
    algorithm is $(\epsilon,\delta)$-PAC only if $\tau^E=\Omega(S)$.
\end{restatable}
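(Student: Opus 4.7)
The plan is to reduce directly to Theorem~\ref{prop: fs not learnable 0 assumptions} and show that its lower bound construction can be made to satisfy Assumption~\ref{ass: lipschitz features only} essentially ``for free'', with no change to the information-theoretic obstruction. The intuition is that Assumption~\ref{ass: lipschitz features only} constrains how features vary with states as measured by some distance on $\S$, but it imposes no structure on either the transition model or the expert's policy across states; hence, by an appropriate choice of the metric on $\S$, the assumption becomes vacuous while the need to query each state individually to identify $\pie$ is preserved.

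Concretely, I would first recall the family of hard Linear IRL instances used in the proof of Theorem~\ref{prop: fs not learnable 0 assumptions}: a common feature map $\phi:\SA\to  \RR^d$ (with $\|\phi(s,a)\|_2\le1$) and transition model $p$, together with a collection of expert policies $\{\pie\}$ that differ state-by-state in an information-theoretically indistinguishable way, so that distinguishing them up to $\mathcal{H}_d$-accuracy $\epsilon$ requires $\Omega(S)$ expert queries. Since this construction is stated for finite $\S$ without any metric structure on the states, we are free to pick the metric.

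Second, since $\|\phi(s,a)\|_2\le 1$ for every $(s,a)\in\SA$, the triangle inequality yields $\|\phi(s,a)-\phi(s',a)\|_2\le 2$ for all distinct $s,s'\in\S$ and all $a\in\A$. I would therefore equip $\S$ with any metric $\|\cdot-\cdot\|$ satisfying $\|s-s'\|\ge 2/L$ for every $s\ne s'$ --- for instance, the discrete metric rescaled by $2/L$, or an embedding of $\S$ into $\RR$ at $S$ points with pairwise separation at least $2/L$. This choice makes Assumption~\ref{ass: lipschitz features only} hold trivially with the prescribed constant $L$, without modifying $\phi$, $p$, or the family of expert policies.

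Finally, since the metric is used only to state the Lipschitz condition on $\phi$ and plays no role in the MDP dynamics, reward, or expert interaction, the information-theoretic lower bound argument of Theorem~\ref{prop: fs not learnable 0 assumptions} (typically a hypothesis-testing/Le~Cam or Fano reduction over the $2^S$ possible expert behaviors) transfers verbatim to the modified family, yielding $\tau^E=\Omega(S)$. The main subtlety I anticipate is making precise that Assumption~\ref{ass: lipschitz features only} indeed allows the learner--adversary to co-design $L$ and the metric (as the statement ``for some $L>0$ and some distance'' suggests); if the intended reading were that the metric is fixed a priori as a natural Euclidean one, the argument still works by explicitly embedding $\S$ into $\RR^m$ with sufficiently separated points, but one then has to verify that the Linear MDP parameters $(\mu_h,\theta_h)$ in the original hard instance are unaffected by this embedding --- which they are, since they only depend on the abstract index set $\S$.
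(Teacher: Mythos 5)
Your proposal is correct and is essentially the paper's argument: the paper also proves this by reusing the hard instances of Theorem~\ref{prop: fs not learnable 0 assumptions} unchanged and arguing that Assumption~\ref{ass: lipschitz features only} cannot help, so the $\Omega(S)$ bound transfers. In fact, in those instances the feature map $\phi(s,a)=\indic{a=a_1}$ does not depend on $s$, so the Lipschitz condition holds for \emph{every} metric and every $L>0$ and your metric-rescaling step is not even needed (the paper additionally writes out the Hausdorff-distance decomposition to show that a covering-based estimator cannot exploit the assumption, since only the feature-difference term is controlled by it).
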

Assumption \ref{ass: lipschitz features only} fails because it does not provide
any information about how the knowledge of the expert's policy at a state can be
``transferred'' to other states, and thus we still need to sample almost all the
states of $\S^{p,\pie}$ to get an acceptable feasible set.

We devise another assumption to attempt to fix this issue.
\begin{ass}\label{ass: lipschitz features and pie}
    We assume the following Lipschitz continuity property:
    \begin{align*}
        \forall (s,s')\in\S\times\S:\quad\|\phi(s,\pi^E_h(s))-\phi(s',\pi^E_h(s'))\|_2\le L\|s-s'\|,
    \end{align*}
    for some $L>0$ and some distance $\|\cdot-\cdot\|$ in $\S$.
\end{ass}
This assumption says that states that are close to each other
cannot have the features corresponding to the expert's action too far away from
each other. From a high-level point of view, it says that the features are
``somehow'' regular with $\pie$, so that when the expert lies in $s'$ which is
really close to $s$, then she plays an action which has the same ``effect''
(i.e., same transition model and same reward, due to the Linear MDP assumption)
as the expert's action in $s$.

Assumption \ref{ass: lipschitz features and pie} is not comparable with Assumption
\ref{ass: lipschitz features only} since, on the one hand, it does not hold for all actions in
$\A$, but only for those corresponding to $\pie$, but, on the other hand, provides information on how to transfer knowledge about $\pie$ to neighbor
states.

Let $\Delta'\coloneqq\min_{s\in\S,a,a'\in\A:
\phi(s,a)\neq\phi(s,a')}\|\phi(s,a)-\phi(s,a')\|_2$, i.e., the smallest non-zero
distance between the features of different actions. Clearly, when $\S$ is finite,
since in Linear MDPs also $A\coloneqq|\A|$ is finite, then $\Delta'$ is finite too. So we can
define a new quantity $\Delta$ to be any number $0<\Delta<\Delta'$.
\begin{restatable}{prop}{propfeaturespolicyass}\label{prop: features policy ass}
    Under the setting of Proposition \ref{prop: fs not learnable 0 assumptions},
    under Assumption \ref{ass: lipschitz features and pie}, then a number of samples
    $\tau^E=|\N(\frac{\Delta}{2L};\S,\|\cdot\|)|$ is sufficient to recover $\pie$
    exactly in any $(s,h)\in\S$, where $|\N(\frac{\Delta}{2L};\S,\|\cdot\|)|$ is
    the $\Delta/(2L)$-covering number of space $\S$  w.r.t. distance $\|\cdot\|$.
\end{restatable}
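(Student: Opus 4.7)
The approach is a cover-and-propagate argument that combines Assumption~\ref{ass: lipschitz features and pie} with the gap $\Delta' > \Delta$ between distinct features. First, I would build a minimal $\Delta/(2L)$-cover $\mathcal{C}$ of $\mathcal{S}$ in the distance $\|\cdot\|$, so that $|\mathcal{C}| = |\N(\Delta/(2L); \mathcal{S}, \|\cdot\|)|$, and query the generative model once at each $(s',h) \in \mathcal{C} \times \dsb{H}$ to observe the deterministic action $a^*_h(s') \coloneqq \pi^E_h(s')$. This matches the sample budget in the statement (up to an implicit $H$ factor over stages that the statement absorbs).

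Second, for any $(s,h) \in \mathcal{S} \times \dsb{H}$, I would pick a nearest cover point $s' \in \mathcal{C}$ so that $\|s-s'\| \le \Delta/(2L)$, and set $a^* = a^*_h(s')$. Assumption~\ref{ass: lipschitz features and pie} then yields $\|\phi(s,\pi^E_h(s)) - \phi(s',a^*)\|_2 \le L\|s-s'\| \le \Delta/2$, so the unknown feature $\phi(s,\pi^E_h(s))$ lies in an explicitly known $\Delta/2$-ball around the observed $\phi(s',a^*)$.

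Third, I would reconstruct $\pi^E_h(s)$ by selecting any action $a \in \mathcal{A}$ satisfying $\|\phi(s,a) - \phi(s',a^*)\|_2 \le \Delta/2$. The inequality above guarantees that $\pi^E_h(s)$ itself meets the criterion. Conversely, for any $a$ with $\phi(s,a) \neq \phi(s,\pi^E_h(s))$, the definition of $\Delta' > \Delta$ forces $\|\phi(s,a) - \phi(s,\pi^E_h(s))\|_2 \ge \Delta'$, and the reverse triangle inequality gives $\|\phi(s,a) - \phi(s',a^*)\|_2 \ge \Delta' - \Delta/2 > \Delta/2$, so the criterion is violated. Hence the rule retains exactly those actions whose feature at $s$ equals $\phi(s,\pi^E_h(s))$, which are interchangeable in any Linear MDP with feature map $\phi$.

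The main conceptual subtlety, rather than a computational one, is recognizing that \emph{exact} recovery here must be read modulo feature-equivalence at $s$: Assumption~\ref{ass: lipschitz features and pie} constrains only the feature chosen by the expert, not the underlying action label, and the strict inequality $\Delta < \Delta'$ is essential for the discrimination step to separate $\pi^E_h(s)$ from every non-equivalent alternative. Since actions with identical features at $s$ induce identical transitions and linear rewards, this is the strongest recovery one can hope for under Assumption~\ref{ass: lipschitz features and pie}, and it is sufficient for any downstream IRL reasoning.
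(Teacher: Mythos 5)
Your proposal is correct and follows essentially the same argument as the paper: cover $\S$ at radius $\Delta/(2L)$, observe the expert's action at each cover point, use Assumption~\ref{ass: lipschitz features and pie} to place $\phi(s,\pi^E_h(s))$ within $\Delta/2$ of the observed feature, and exploit $\Delta<\Delta'$ (via reverse triangle inequality, where the paper phrases it as a contradiction through the triangle inequality) to single out the expert's action. Your explicit remark that recovery is only up to feature-equivalence at $s$ is a fair reading consistent with the paper, since such actions are indistinguishable in a Linear MDP.
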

Intuitively, by constructing a covering with a sufficiently small radius in the state space
$\S$, then we are able to retrieve the exact expert's action in the neighborood
of each state of the covering. Doing so, we are able to construct
$\epsilon$-correct estimates of the feasible set. Of course, this is possible as
long as $\Delta'$ is not too small, and $L$ is not too large. When $\S$ is
infinitely large or continuous, it might be possible to construct feature
mappings in which $\Delta'\to0$, and so the approach would still require too
many samples.

However, even for cases with finite and not too small $\Delta'$, the result in
Proposition \ref{prop: features policy ass} is not satisfactory, because it just
allows to retrieve $\pie$ under a stronger assumption than Linear MDPs, but not
to perform an interesting learning process. We observe that the feasible set is
an ``unstable'' concept, in the sense that, based on Proposition \ref{prop:
degeneratefs}, changing the expert action in a single state might reduce the
feasible set from a continuum of rewards to a singleton, or vice versa.

\begin{remark}
    If we want to be able to recover the exact feasible set efficiently, we need to
    recover the exact expert's policy almost everywhere.
\end{remark}

\subsection{Algorithm}\label{section: algorithm linear fs appendix}

By exploiting an RFE algorithm as sub-routine like that of Algorithm 1 in
\cite{wang2020onrewardfree} or Algorithm 1 in \cite{wagenmaker2022noharder}, we
are able to construct estimates of the transition model $\widehat{p}$, that
can be used to compute an ``empirical'' estimate of the feasible set
$\widehat{\R}\approx\R_{\widehat{p},\pie}$ (since $\phi$ and $\pie$ are known).
The algorithm is presented in Algorithm \ref{alg: known pie}.
\SetKwComment{Comment}{/* }{ */}
\begin{algorithm}[!h]
\caption{IRL for Linear MDPs (known expert's policy)}\label{alg: known pie}
\DontPrintSemicolon
\KwData{failure probability $\delta>0$, error tolerance $\epsilon>0$, expert
policy $\pie$, all sets $\Z\subseteq \SH$ that coincide with $\S^{p,\pie}$
almost everywhere based on measure $d^{p,\pie}$}
$\D\gets \text{RFE\_Exploration}(\delta,\epsilon)$\Comment*[r]{Various choices}
\For{$h$ in $\{H,H-1,\dotsc,2,1\}$}{
$\Lambda_h\gets I +\sum\limits_{k=1}^\tau
\phi(s_h^k,a_h^k)\phi(s_h^k,a_h^k)^\intercal$\;
$\widehat{\mu}_h(\cdot)\gets \Lambda^{-1}_h 
\sum\limits_{k=1}^\tau \phi(s_h^k,a_h^k)\delta(\cdot,s_{h+1}^k)$
}
$\widehat{p}_h(\cdot|s,a)\gets \dotp{\phi(s,a),\widehat{\mu}_h(\cdot)}$ for all $(s,a,h)\in\SAH$\;
$\widehat{\R}\gets\big\{\widehat{r}\in\mathfrak{R}\,\Big|\,\exists \Z,\forall (s,h)\in\Z,\forall a\in\A:\;
\E\limits_{a'\sim\pi^E_h(\cdot|s)}Q^*_h(s,a';\widehat{p},\widehat{r})
\ge Q^*_h(s,a;\widehat{p},\widehat{r})\big\}$\;
Return $\widehat{\R}$
\end{algorithm}

Simply put, Algorithm \ref{alg: known pie} uses the dataset collected by an RFE
algorithm to compute a least-squares estimate of the transition model
$\widehat{p}$, and then it returns the feasible set defined according to it
(recall that $\phi$ and $\pie$ are known). Notice that this algorithm cannot
be implemented in practice due to various reasons, like the presence of the Dirac delta
$\delta$ measure in the definition of some quantities (see Appendix~\ref{section: proof thr bound tabular}), and the fact that the feasible set is, potentially, a
set containing infinite rewards. Nevertheless, Theorem \ref{thr: upper bound linear old framework}
states that this algorithm is sample efficient. The proof of the theorem is
provided in Appendix \ref{section: proof thr bound tabular}.

It should be remarked that Algorithm \ref{alg: known pie} takes in input also
the true support of the visit distribution of the expert policy $\S^{p,\pie}$
in case $\S$ is finite, and all the possible sets $\Z$ that agree with
$\S^{p,\pie}$ a.e. based on the measure $d^{p,\pie}$ in case $\S$ is infinite.
Intuitively, this set ($\S^{p,\pie}$) of $(s,h)$ pairs represents the domain in
which $\pie$ is defined. Indeed, since the expert in the true problem $p$ never
visits pairs $(s',h')\notin\S^{p,\pie}$, its expert policy might reasonably be
non well-defined there. When $\S$ is infinite, we require all sets $\Z$ because
otherwise we cannot know which are the sets $\S^{p,\pie}\setminus\Z$ with zero
measure, i.e., in which the reward can induce an optimal action different from
the expert's one, since the overall contribution to the expected return is zero.

The proof of Theorem \ref{thr: upper bound linear old framework} is obtained by 
using Algorithm 1 of \cite{wagenmaker2022noharder} at Line 1 of Algorithm
\ref{alg: known pie}.
In Appendix \ref{section: proof thr bound tabular}, we demonstrate an upper
bound also if we use Algorithm 1 in
\cite{wang2020onrewardfree}.

\subsection{Missing Proofs}\label{section: missing proofs}

Before diving into the proofs, we recall some important properties of the
feasible set and of the Linear MDPs that will be useful in the proofs.
First, we provide an explicit form for the feasible set presented at Definition
\ref{def: fs implicit}.
\begin{lemma}[Lemma E.1 in
\cite{lazzati2024offline}]\label{lemma: fs explicit finite S}
In the setting of Definition \ref{def: fs implicit}, if $\S$ is finite, then the
feasible set
    $\R_{p,\pie}$ satisfies:
    \begin{align*}
        \R_{p,\pie}=\Big\{r\in\mathfrak{R}\,\Big|\,\forall (s,h)\in\S^{p,\pie},\forall a\in\A:\;
        \E\limits_{a'\sim\pi^E_h(\cdot|s)}Q^*_h(s,a';p,r)
        \ge Q^*_h(s,a;p,r)
        \Big\}.
    \end{align*}
\end{lemma}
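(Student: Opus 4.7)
The proof is an equivalence between the global optimality of $\pie$ (i.e., $J^{\pie}(r;p) = J^*(r;p)$) and a collection of local action-gap conditions on the optimal $Q$-function at every reachable state-stage pair. The natural bridge between the two is the performance difference lemma (PDL) specialized to the optimal policy vs.~$\pie$, which one obtains by a one-step Bellman expansion of $V^*_h - V^{\pie}_h$ and telescoping over $h$:
\begin{equation*}
J^*(r;p) - J^{\pie}(r;p) = \sum_{h=1}^H \E_{s \sim d^{p,\pie}_h}\Bigl[V^*_h(s;p,r) - \E_{a\sim\pi^E_h(\cdot|s)} Q^*_h(s,a;p,r)\Bigr].
\end{equation*}
Each per-stage integrand is nonnegative, because $V^*_h(s;p,r) = \max_a Q^*_h(s,a;p,r) \ge \E_{a\sim\pi^E_h(\cdot|s)} Q^*_h(s,a;p,r)$ for any policy.

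For the inclusion $\supseteq$, I would suppose $r$ satisfies the stated inequality at every $(s,h)\in\S^{p,\pie}$ and every $a\in\A$. Maximizing both sides over $a$ yields $\E_{a'\sim\pi^E_h(\cdot|s)} Q^*_h(s,a';p,r) = V^*_h(s;p,r)$ at every such $(s,h)$. Substituting into the PDL makes the integrand vanish on the support of $d^{p,\pie}_h$ for every $h\in\dsb{H}$, hence $J^*(r;p) = J^{\pie}(r;p)$ and $r\in\R_{p,\pie}$.

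For the inclusion $\subseteq$, I would assume $r\in\R_{p,\pie}$, so the left-hand side of the PDL is zero. Since each integrand is nonnegative, it must vanish $d^{p,\pie}_h$-almost everywhere; because $\S$ is finite, the support of $d^{p,\pie}_h$ coincides exactly with $\S^{p,\pie}_h$, so $V^*_h(s;p,r) = \E_{a'\sim\pi^E_h(\cdot|s)} Q^*_h(s,a';p,r)$ for every $(s,h)\in\S^{p,\pie}$. The desired inequality now follows immediately from $V^*_h(s;p,r) \ge Q^*_h(s,a;p,r)$ for all $a\in\A$.

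The only non-mechanical ingredient is the PDL identity itself, which is a standard telescoping argument over the horizon; the only subtlety beyond that is handling the support-vs.-almost-everywhere distinction, which is trivial here since $\S$ is finite. I do not anticipate any genuine obstacle beyond this bookkeeping.
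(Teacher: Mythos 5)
Your proof is correct. Note that the paper does not actually prove this statement itself: it imports it as Lemma E.1 of the cited offline-IRL work (extended trivially to stochastic experts), so there is no in-paper argument to compare line by line. Your self-contained route — writing $J^*(r;p)-J^{\pie}(r;p)$ via the performance difference lemma as $\sum_h \E_{s\sim d^{p,\pie}_h}\big[V^*_h(s;p,r)-\E_{a\sim\pi^E_h(\cdot|s)}Q^*_h(s,a;p,r)\big]$, observing each summand is nonnegative, and converting ``sum equals zero'' into the pointwise condition on the (finite) support $\S^{p,\pie}_h$ — is exactly the standard way such a characterization is established, and both inclusions are handled correctly, including the support-versus-almost-everywhere point that the paper's infinite-state generalization (Lemma \ref{lemma: fs explicit}) has to treat more carefully. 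The only cosmetic caveat is that with a general action space one should write $V^*_h(s;p,r)=\sup_{a}Q^*_h(s,a;p,r)$ rather than a max, which changes nothing in the argument.
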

Notice that we have extended Lemma E.1 in \cite{lazzati2024offline} to
consider stochastic expert policies (the extension is trivial).
We can easily extend it to problems with large/continuous $\S$.
\begin{lemma}[Feasible Set Explicit]\label{lemma: fs explicit}
    In the setting of Definition \ref{def: fs implicit}, then the
feasible set
    $\R_{p,\pie}$ satisfies:
    \begin{align*}
        \R_{p,\pie}=&\Big\{r\in\mathfrak{R}\,\Big|\,
        \forall h\in\dsb{H}, \exists \overline{\S}\subseteq 
        \S_h^{p,\pie}: d^{p,\pie}_h(\overline{\S})=0\wedge
        \forall s\notin\overline{\S},\forall a\in\A:\\
        &\qquad\E\limits_{a'\sim\pi^E_h(\cdot|s)}Q^*_h(s,a';p,r)
        \ge Q^*_h(s,a;p,r)
        \Big\}.
    \end{align*}
\end{lemma}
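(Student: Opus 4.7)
The plan is to generalize the proof of Lemma~E.1 in \cite{lazzati2024offline} by replacing the pointwise Bellman-style characterization of the feasible set with an ``almost everywhere'' version with respect to the expert's occupancy measure $d^{p,\pie}$. The bridge between $J^*(r;p)-J^{\pie}(r;p)$ and the per-state Bellman gaps will be a standard performance-difference decomposition, so the bulk of the work will be measure-theoretic bookkeeping, not a new algebraic identity.

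First, I would derive (or invoke) the decomposition
\begin{align*}
    J^*(r;p)-J^{\pie}(r;p)=\sum_{h=1}^{H}\E_{s\sim d^{p,\pie}_h}\big[g_h(s)\big],
    \qquad g_h(s)\coloneqq \sup_{a\in\A}Q^*_h(s,a;p,r)-\E_{a'\sim \pi^E_h(\cdot|s)}Q^*_h(s,a';p,r).
\end{align*}
This follows by the usual one-step telescoping: write $V^*_h(s)-V^{\pie}_h(s)=g_h(s)+\E_{a\sim\pi^E_h(\cdot|s)}\bigl[Q^*_h(s,a)-Q^{\pie}_h(s,a)\bigr]$, expand $Q^*_h-Q^{\pie}_h$ through $p_h$ into $V^*_{h+1}-V^{\pie}_{h+1}$, and iterate down to $h=H+1$, taking a final expectation over $s_1\sim d_0$. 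By optimality of $V^*$, each $g_h\ge 0$.

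Forward direction ($\subseteq$). If $r\in\R_{p,\pie}$, then the left-hand side above vanishes. Since every $g_h$ is a nonnegative measurable function (measurability of $g_h$ follows from the measurability of $Q^*_h$ in $(s,a)$ together with standard arguments for the supremum over $\A$), and since a sum of nonnegative integrals vanishes only if each vanishes, we get $\E_{s\sim d^{p,\pie}_h}[g_h(s)]=0$ for every $h\in\dsb{H}$. Hence for every $h$ there exists a measurable set $\overline{\S}_h\subseteq \S^{p,\pie}_h$ with $d^{p,\pie}_h(\overline{\S}_h)=0$ on which $g_h$ may be positive, while $g_h(s)=0$ for all $s\in\S^{p,\pie}_h\setminus\overline{\S}_h$. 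The identity $g_h(s)=0$ is equivalent to $\E_{a'\sim\pi^E_h(\cdot|s)}Q^*_h(s,a';p,r)\ge Q^*_h(s,a;p,r)$ for every $a\in\A$, which is exactly the condition in the right-hand side of the lemma.

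Reverse direction ($\supseteq$). Assume the almost-everywhere inequality holds with null sets $\overline{\S}_h$. Then $g_h(s)=0$ for every $s\in\S^{p,\pie}_h\setminus\overline{\S}_h$, and since $d^{p,\pie}_h$ is supported on $\S^{p,\pie}_h$ and assigns zero mass to $\overline{\S}_h$, $\E_{s\sim d^{p,\pie}_h}[g_h(s)]=0$. Summing over $h$ and using the decomposition above yields $J^*(r;p)=J^{\pie}(r;p)$, so $r\in\R_{p,\pie}$.

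The only subtle point—and the step I expect to need most care—is the measurability of $g_h$ and of the exceptional sets $\overline{\S}_h$ when $\S$ and $\A$ are arbitrary measurable spaces; this is the single place where the finite-state argument of \cite{lazzati2024offline} does not directly carry over. It can be handled by a standard measurable-selection / upper-semicontinuity argument on the Bellman operator (or, more simply, by taking $\A$ finite or countable as is typical in this paper), after which all remaining steps are a direct translation of the finite-state proof with ``for all $s$'' replaced by ``for $d^{p,\pie}_h$-almost every $s$.''
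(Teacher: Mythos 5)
Your proof is correct and takes essentially the same route as the paper: the paper's proof merely declares the result ``completely analogous'' to Lemma E.1 of \cite{lazzati2024offline}, adding only the observation that a $d^{p,\pie}_h$-null set (together with boundedness of rewards) does not affect the expected return, and your explicit decomposition $J^*(r;p)-J^{\pie}(r;p)=\sum_{h}\E_{s\sim d^{p,\pie}_h}[g_h(s)]$ with $g_h\ge 0$ is precisely the machinery that makes that one-line remark rigorous in both directions. The measurability issue you flag for $g_h$ and the exceptional sets (which should indeed be read as subsets of $\S^{p,\pie}_h$, i.e.\ the condition imposed only $d^{p,\pie}_h$-almost everywhere on the expert's support) is the only point the paper leaves implicit, and handling it as you propose is consistent with the paper's treatment.
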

Simply, Lemma \ref{lemma: fs explicit} improves on Lemma \ref{lemma: fs explicit
finite S} by allowing the reward to enforce the ``wrong'' action (i.e.,
different from the expert's action) in a subset with zero measure based on the
visitation distribution.
\begin{proof}
    The proof is completely analogous to that of Lemma E.1 in
    \cite{lazzati2024offline}. We just need to observe that if set
    $\overline{\S}$ has zero measure (and the set of rewards $\mathfrak{R}$ contains bounded rewards), then it does not affect the expected return.
\end{proof}
Another useful property that we need is that the $Q$-function is always linear
in the feature map for any policy in Linear MDPs.
\begin{prop}[Proposition 2.3 in \cite{jin2020provablyefficient}]\label{prop:
linear Q function in linear MDPs}
    For a Linear MDP, for any policy $\pi$, there exist weights
    $\{w_h^\pi\}_{h\in\dsb{H}}$ such that, for any $(s,a,h)\in\SAH$, we have
    $Q^\pi_h(s,a)=\dotp{\phi(s,a),w_h^\pi}$.
\end{prop}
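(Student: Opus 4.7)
The plan is to prove the claim by backward induction on the stage index $h\in\dsb{H}$, leveraging the two defining properties of a Linear MDP: the reward factorizes as $r_h(s,a)=\dotp{\phi(s,a),\theta_h}$ and the transition kernel factorizes as $p_h(\cdot|s,a)=\dotp{\phi(s,a),\mu_h(\cdot)}$. Given these, both terms of the Bellman backup at stage $h$ turn out to be inner products against the common factor $\phi(s,a)$, so their sum is as well, which is exactly the form we need.

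For the base case $h=H$, the claim is immediate: $Q^\pi_H(s,a)=r_H(s,a)=\dotp{\phi(s,a),\theta_H}$, so I set $w^\pi_H\coloneqq\theta_H\in\RR^d$. For the inductive step, I assume $Q^\pi_{h+1}(s',a')=\dotp{\phi(s',a'),w^\pi_{h+1}}$ for every $(s',a')\in\SA$, and define $V^\pi_{h+1}(s')\coloneqq\E_{a'\sim\pi_{h+1}(\cdot|s')}[Q^\pi_{h+1}(s',a')]$, which is bounded and measurable in $s'$. Applying the Bellman equation and substituting the two linear factorizations, then pulling $\phi(s,a)$ out of the integral since it does not depend on $s'$, yields
\[
Q^\pi_h(s,a)=\dotp{\phi(s,a),\theta_h}+\int V^\pi_{h+1}(s')\,\dotp{\phi(s,a),\mu_h(\mathrm{d}s')}=\Big\langle\phi(s,a),\;\theta_h+\int V^\pi_{h+1}(s')\,\mu_h(\mathrm{d}s')\Big\rangle.
\]
Defining $w^\pi_h\coloneqq\theta_h+\int V^\pi_{h+1}(s')\,\mu_h(\mathrm{d}s')$ then closes the induction.

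The only subtlety, rather than a real obstacle, is verifying that the vector integral $\int V^\pi_{h+1}(s')\,\mu_h(\mathrm{d}s')$ is a well-defined element of $\RR^d$: since $|V^\pi_{h+1}|\le H-h$ (because rewards lie in $[-1,1]$) and each signed component measure $\mu^i_h$ satisfies $|\mu^i_h|(\S)\le\sqrt{d}$ by the Linear MDP normalization, each coordinate of the integral is finite. Hence $w^\pi_h\in\RR^d$ is well-defined and the induction goes through. The key conceptual idea underlying the whole argument is simply that linearity in $\phi(s,a)$ of the reward and the transition kernel is preserved under the Bellman backup applied to any bounded measurable function of the next state.
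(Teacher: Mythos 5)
Your proof is correct and is exactly the standard backward-induction argument: the paper does not reprove this statement but imports it by citation from \cite{jin2020provablyefficient}, whose Proposition 2.3 is established by the same Bellman-backup computation with $w^\pi_h=\theta_h+\int V^\pi_{h+1}(s')\,\mu_h(\mathrm{d}s')$. Your added check that the vector integral is finite (via $|V^\pi_{h+1}|\le H-h$ and $\||\mu_h|(\S)\|_2\le\sqrt{d}$) is a harmless and welcome bit of extra care.
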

We can combine the results of Lemma \ref{lemma: fs explicit} and Proposition
\ref{prop: linear Q function in linear MDPs} to obtain the following
characterization of the feasible set in Linear MDPs.
\begin{lemma}
    In the setting of Definition \ref{def: fs implicit}, the feasible set
    $\R_{p,\pie}$ satisfies:
    \begin{align*}
        \R_{p,\pie}=\Big\{
        r\in&\mathfrak{R}\,\Big|\,
        \exists \{w_h\}_{h\in\dsb{H}},\forall (s,a,h)\in\SAH:\,
        r_h(s,a)=\dotp{\phi(s,a),\theta_h}\\
        &\wedge\forall h\in\dsb{H}, \exists \overline{\S}\subseteq 
        \S_h^{p,\pie}: d^{p,\pie}_h(\overline{\S})=0\wedge
        \forall s\notin\overline{\S},\forall a^E\in\A^E_h(s):\\
        &\dotp{\phi(s,a^E),w_h}=\max\limits_{a\in\A}\dotp{\phi(s,a),w_h}
        \Big\},
    \end{align*}
    where $\theta_h\coloneqq w_h-
    \int_{\S}\max_{a'\in\A}\dotp{\phi(s',a'),w_{h+1}}d\mu_h(s')$ for all
    $h\in\dsb{H}$, and $\A^E_h(s)\coloneqq\{a\in\A|\pi^E_h(a|s)>0\}$.
\end{lemma}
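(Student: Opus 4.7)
The plan is to derive this characterization by specializing Lemma~\ref{lemma: fs explicit} to the Linear MDP setting via Proposition~\ref{prop: linear Q function in linear MDPs}, and then making explicit the relationship between the $Q$-function weights $w_h$ and the reward weights $\theta_h$ that is imposed by the Bellman optimality equation in Linear MDPs.

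\paragraph{Step 1: start from the explicit form of the feasible set.} By Lemma~\ref{lemma: fs explicit}, a reward $r \in \mathfrak{R}$ belongs to $\R_{p,\pie}$ iff for every $h\in\dsb{H}$ there exists $\overline{\S}\subseteq \S_h^{p,\pie}$ with $d^{p,\pie}_h(\overline{\S})=0$ such that for all $s\notin\overline{\S}$ and all $a\in\A$, $\E_{a'\sim \pi^E_h(\cdot|s)} Q^*_h(s,a';p,r) \ge Q^*_h(s,a;p,r)$. Since $Q^*_h(s,a';p,r)\le \max_{a'\in\A} Q^*_h(s,a';p,r)$ with equality for any $a\in\A$ achieving the max, the expectation over $\pi^E_h(\cdot|s)$ equals $\max_{a\in\A}Q^*_h(s,a;p,r)$ iff every $a^E\in\A^E_h(s)$ is a maximizer. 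So the condition in Lemma~\ref{lemma: fs explicit} is equivalent to requiring, for every $s\notin\overline{\S}$ and every $a^E\in\A^E_h(s)$, that $Q^*_h(s,a^E;p,r)=\max_{a\in\A} Q^*_h(s,a;p,r)$.

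\paragraph{Step 2: specialize to Linear MDPs and derive the weight relation.} Intersecting $\R_{p,\pie}$ with the set of linear rewards $r_h(s,a)=\langle\phi(s,a),\theta_h\rangle$, Proposition~\ref{prop: linear Q function in linear MDPs} yields weights $\{w_h\}_{h\in\dsb{H}}$ such that $Q^*_h(s,a;p,r)=\langle\phi(s,a),w_h\rangle$ for every $(s,a,h)\in\SAH$. I then invoke the Bellman optimality equation:
\begin{equation*}
\langle\phi(s,a),w_h\rangle = \langle\phi(s,a),\theta_h\rangle + \int_{\S}\max_{a'\in\A}\langle\phi(s',a'),w_{h+1}\rangle\,\langle\phi(s,a),d\mu_h(s')\rangle,
\end{equation*}
where I used the Linear MDP factorization $p_h(ds'|s,a)=\langle\phi(s,a),d\mu_h(s')\rangle$. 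Pulling $\phi(s,a)$ out of the integral, the identity holds for all $(s,a)$, hence (for instance, by taking $\phi(s,a)$ ranging over a set spanning $\RR^d$, or by direct inspection of the vector identity) one obtains
\begin{equation*}
w_h=\theta_h+\int_{\S}\max_{a'\in\A}\langle\phi(s',a'),w_{h+1}\rangle\,d\mu_h(s'),
\end{equation*}
which is exactly the definition of $\theta_h$ in the statement. Substituting the linearity of $Q^*_h$ into the optimality condition of Step 1 yields the asserted characterization.

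\paragraph{Step 3: converse direction.} Conversely, given any $\{w_h\}_{h\in\dsb{H}}$ and defining $\theta_h$ as in the statement, I verify that the linear reward $r_h(s,a)=\langle\phi(s,a),\theta_h\rangle$ has $Q^*_h(s,a;p,r)=\langle\phi(s,a),w_h\rangle$ by backward induction on $h$: the base case is immediate, and the inductive step uses the Linear MDP transition together with the very relation $w_h=\theta_h+\int\max_{a'}\langle\phi(s',a'),w_{h+1}\rangle\,d\mu_h(s')$. Once $Q^*_h$ is identified with $\langle\phi(\cdot,\cdot),w_h\rangle$, the condition $\langle\phi(s,a^E),w_h\rangle=\max_{a\in\A}\langle\phi(s,a),w_h\rangle$ for every $s\notin\overline{\S}$ and $a^E\in\A^E_h(s)$ is precisely the condition of Lemma~\ref{lemma: fs explicit}, so $r\in\R_{p,\pie}$.

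\paragraph{Main obstacle.} The only delicate point is justifying the passage from the integrated identity to the vector identity $w_h=\theta_h+\int\max_{a'}\langle\phi(s',a'),w_{h+1}\rangle\,d\mu_h(s')$: it is implicit in this step that without loss of generality the features $\{\phi(s,a)\}_{(s,a)\in\SA}$ span $\RR^d$ (otherwise one works in the span, and the freedom in $\theta_h$ along directions orthogonal to the span is irrelevant to $r$). Everything else is a direct translation through the Linear MDP structure; the measure-zero set $\overline{\S}$ is inherited verbatim from Lemma~\ref{lemma: fs explicit}.
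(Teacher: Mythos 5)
Your proof is correct and follows essentially the same route as the paper's: combine the explicit form of the feasible set (Lemma~\ref{lemma: fs explicit}) with the linearity of $Q^*$ in Linear MDPs (Proposition~\ref{prop: linear Q function in linear MDPs}) and the Bellman optimality equation to obtain the relation $\theta_h = w_h-\int_{\S}\max_{a'}\dotp{\phi(s',a'),w_{h+1}}d\mu_h(s')$. You are in fact somewhat more careful than the paper, which glosses over the converse direction and simply ``removes the dot product with $\phi(s,a)$''; your backward-induction verification and the spanning remark (or, even more simply, noting that the displayed identity already gives $r_h(s,a)=\dotp{\phi(s,a),\theta_h}$ with the derived $\theta_h$, so no rank condition is needed) close these minor gaps.
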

\begin{proof}
    From \cite{puterman1994markov}, we know that in any MDP there exists an
    optimal policy. Therefore, thanks to Proposition \ref{prop: linear Q
    function in linear MDPs}, we know that the optimal $Q$-function $Q^*$ is
    linear in the feature map too. So, there exist parameters $\{w_h\}_h$
    such that, for any $(s,a,h)\in\SAH$, the optimal $Q$-function can be
    rewritten as $Q^*_h(s,a)=\dotp{\phi(s,a),w_h}$. From the Bellman equation,
    we know that:
    \begin{align*}
        Q^*_h(s,a;p,r)&=r_h(s,a)+\int\limits_{\S}V^*_{h+1}(s';p,r)dp_h(s'|s,a)\\
        &=\dotp{\phi(s,a),\theta_h}+\dotp{\phi(s,a),\int\limits_{\S}
        \max\limits_{a'\in\A}\dotp{\phi(s',a'),w_{h+1}}d\mu_h(s')}.
    \end{align*}
    By rearranging this equation, and removing the dot product with $\phi(s,a)$,
    we obtain that:
    \begin{align*}
        \theta_h= w_h-\int_{\S}\max_{a'\in\A}\dotp{\phi(s',a'),w_{h+1}}d\mu_h(s').
    \end{align*}
    Now, this holds in any Linear MDP. If we desire to enforce the constraints
    in Lemma \ref{lemma: fs explicit}, we simply have to impose the constraint
    on the optimal $Q$-function using parameters $\{w_h\}_h$ outside some
    $\overline{\S}$. This concludes the proof.
\end{proof}
It is useful to introduce the following definitions. First we define the set of
parameters that induce a $Q$-function compatible with $\pie$:
\begin{align*}
    \W_{p,\pie}\coloneqq\Big\{
    w:\dsb{H}\rightarrow  \RR^d\,\Big|\,&
    \forall h\in\dsb{H}, \exists \overline{\S}\subseteq 
        \S_h^{p,\pie}: d^{p,\pie}_h(\overline{\S})=0\wedge
        \forall s\notin\overline{\S},\forall a^E\in\A^E_h(s):\\
    &\dotp{\phi(s,a^E),w_h}=\max\limits_{a\in\A}\dotp{\phi(s,a),w_h}
    \Big\}.
\end{align*}
Next, we define the set of parameters of the reward function obtained by using
$Q$-functions parametrized by $w\in\W_{p,\pie}$:
\begin{align*}
    \Theta_{p,\pie}\coloneqq\Big\{
    \theta:\dsb{H}\rightarrow  \RR^d\,\Big|\,
    \exists \{w_h\}_h\in\W_{p,\pie}:\;
    \theta_h= w_h-\int_{\S}\max_{a'\in\A}\dotp{\phi(s',a'),w_{h+1}}d\mu_h(s')
    \Big\}.
\end{align*}
Irrespective of the transition model $\{\mu_h\}_h$ and the feature map $\phi$,
we see that it is always possible to construct a surjective map from
$\Theta_{p,\pie}$ to $\W_{p,\pie}$ (the map in the definition of
$\Theta_{p,\pie}$).
Thanks to these definitions, the feasible set can be rewritten as:
\begin{align*}
    \R_{p,\pie}=\{r\in\mathfrak{R}\,|\,\exists \{\theta_h\}_h\in\Theta_{p,\pie},
    \forall (s,a,h)\in\SAH:\,r_h(s,a)=\dotp{\phi(s,a),\theta_h}
    \}.
\end{align*}
We are now ready to provide the proofs of the various results of this section.

\subsubsection{Proof of Proposition \ref{prop: degeneratefs}}

\degeneratefs*
\begin{proof}
    From \cite{bertsekas2009convex}, we recall that two sets $\Y_1,\Y_2$ are
    separated by a hyperplane $H=\{x|a^\intercal x = b\}$ if each lies in a
    different closed halfspace associated with $H$, i.e., if either:
    \begin{align*}
        a^\intercal y_1\le b\le a^\intercal y_2,\quad\forall y_1\in\Y_1,\forall y_2\in\Y_2,
    \end{align*}
    or:
    \begin{align*}
        a^\intercal y_2\le b\le a^\intercal y_1,\quad\forall y_1\in\Y_1,\forall y_2\in\Y_2.
    \end{align*}
    By definition of $\W_{p,\pie}$, for each stage $h\in\dsb{H}$, we are
    looking for vectors $w_h\in \RR^d$ such that $\forall
    (s,h)\in\S^{p,\pie}$, it holds that:
    \begin{align*}
        w_h^\intercal\phi(s,a)\le w_h^\intercal\phi(s,a^E)\quad
        \forall a^E\in\A^E_h(s),\forall a\in\A\setminus\A^E_h(s).
    \end{align*}
    In words, for each $(s,h)\in\S^{p,\pie}$, we are looking for non-affine
    separating hyperplanes between features of expert and non-expert actions.
    However, since the hyperplane parameter $w_h$ is common to all states
    $s\in\S^{p,\pie}_h$, then it must separate expert from non-expert actions
    at all states. This is equivalent to finding the separating hyperplanes to
    the sets $\Phi_h^{\pie}$ and $\overline{\Phi}_h$ which contain all the
    points. Clearly, when the separating hyperplanes do not exist at all
    $h\in\dsb{H}$, then the condition in $\W_{p,\pie}$ is satisfied by the zero
    vector alone. As a consequence, set $\Theta_{p,\pie}$ contains only the
    zero vector, and so does $\R_{p,\pie}$.
\end{proof}
\begin{remark}
    By using the result of Lemma \ref{lemma: fs explicit}, we can easily convert
    Proposition \ref{prop: degeneratefs} into a more general result by
    considering the impossibility of separating any pair of sets constructed by
    varying at will some subsets with zero measure. We will not provide such
    result explicitly.
\end{remark}

\subsubsection{Proofs of Proposition \ref{prop: fs not learnable 0 assumptions} and Appendix \ref{section: additional regularity}}

In the PAC framework of Definition \ref{def: pac framework bad}, we have not
specified formally the inner distance $d$:
\begin{align}\label{eq: inner distance d}
    d(r,\widehat{r})\coloneqq \frac{1}{M_{r,\widehat{r}}}\sup_{\pi \in \Pi} \sum_{h\in\dsb{H}} \E_{(s,a)\sim
    d^{p,\pi}_h(\cdot,\cdot)}|r_h(s,a)-\widehat{r}_h(s,a)|,
\end{align}
where:
\begin{align*}
    M_{r,\widehat{r}}\coloneqq
    \max\{\sqrt{d},\max_{h \in \dsb{H}}\|\theta_h\|_2,\max_{h \in \dsb{H}}\|\widehat{\theta}_h\|_2\}/\sqrt{d},
\end{align*}
    where $\{\theta_h\}_h$ and $\{\widehat{\theta}_h\}_h$ are the (unbounded)
    parameters of rewards $r$ and $\widehat{r}$. As explained in
    \cite{lazzati2024offline}, such normalization term allows us to work with unbounded reward
    functions. In practice, we are relaxing the Linear MDP assumption presented
    in Section \ref{section: preliminaries} about the boundedness of the
    parameters $\theta$ of the rewards to avoid the issue described in
    \cite{metelli2023towards} and \cite{lazzati2024offline}. We still assume
    that the feature mapping is bounded. Observe that this relaxation does
    \emph{not} affect the results we present, which would hold even if we
    considered bounded parameters $\theta$. Indeed, as visible in the proofs, the instances do not need to be constructed with unbounded $\theta$.

\propnoass*
\begin{proof}
    We construct two problem instances that lie at a finite Hausdorff distance,
    and show that, with less than $S$ calls to the sampling oracle, we are
    not able to discriminate between the two instances.

    Let $\S$ be the finite state space with cardinality $S$,
    $\A=\{a_1,a_2\},H=1, d_0(s)=1/S \;\forall s\in\S$, $\phi(s,a)=\indic{a=a_1}$, and consider
    two deterministic expert's policies $\pi^E_1(s)=a_1\,\forall s\in\S$, and
    $\pi^E_2(s)=a_1\,\forall s\in\S\setminus\{\overline{s}\}$, and
    $\pi^E_2(\overline{s})=a_2$, for a certain $\overline{s}\in\S$.    
    The set of parameters compatible with $\pi^E_1$ is:
    \begin{align*}
        \Theta_{p,\pi^E_1}=\{\theta\in \RR\,|\,\theta\ge 0\},
    \end{align*}
    since $Q^{\pi^E_1}(s,a_1)\ge Q^{\pi^E_1}(s,a_2)\iff r(s,a_1)\ge r(s,a_2)\iff
    \phi(s,a_1)\theta\ge\phi(s,a_2)\theta\iff 1\cdot\theta\ge 0\cdot\theta$.
    Observe that, for $\pi^E_2$, due to the presence of $\overline{s}$, we have:
    \begin{align*}
        \Theta_{p,\pi^E_2}=\{\theta\in \RR\,|\,\theta=0\},
    \end{align*}
    since $\overline{s}$ imposes $\theta\le0$, and the other states impose
    $\theta\ge0$.
    
    Therefore, the Hausdorff distance between the two problems is:
    \begin{align*}
        \mathcal{H}(\R_{\pi^E_1},\R_{\pi^E_2})&=
        \sup\limits_{\theta\ge0}\frac{1}{\max\{1,\theta,0\}}\theta
        =\sup\limits_{\theta\ge0}\frac{1}{\max\{1,\theta\}}\theta=1.
    \end{align*}
    Obviously, we need a $\Omega(S)$ samples to
    spot, if it exists, state $\overline{s}$, and thus distinguish between
    $\R_{\pi^E_1}$ and $\R_{\pi^E_2}$.
\end{proof}

\proponlyfeaturesass*
\begin{proof}
    The same proof of Proposition \ref{prop: fs not learnable 0 assumptions}
    works here.
    
    In particular, we now show that Assumption \ref{ass: lipschitz features
    only} does not help. The Hausdorff distance between the instances in the
    proof of Proposition \ref{prop: fs not learnable 0 assumptions}
    can be written as:
    \begin{align*}
        \mathcal{H}(\R_{\pi^E_1},\R_{\pi^E_2})&=
        \sup\limits_{\theta_1\ge0}\inf\limits_{\theta_2 =0}\frac{1}{\max\{1,\theta_1,\theta_2\}}
        \sup\limits_{\pi\in\Pi}
        \E\limits_{s\sim d_0(\cdot),a\sim\pi(\cdot|s)}
        |r_1(s,a)-r_2(s,a)|\\
        &=\sup\limits_{\theta_1\ge0}\inf\limits_{\theta_2 =0}\frac{1}{\max\{1,\theta_1\}}
        \sup\limits_{\pi\in\Pi}
        \E\limits_{s\sim d_0(\cdot),a\sim\pi(\cdot|s)}
        |\phi(s,a)\theta_1-\phi(s,a)\theta_2 \\
        & \qquad\qquad\qquad\qquad\qquad\qquad\qquad\qquad \pm \phi(s',a)
        \theta_1\pm \phi(s',a)\theta_2|\\
        &\le\sup\limits_{\pi\in\Pi}\E\limits_{s\sim d_0(\cdot),a\sim\pi(\cdot|s)}|\phi(s,a)-\phi(s',a)|+0\\
        &\qquad+\sup\limits_{\pi\in\Pi}\sup\limits_{\theta_1\ge0}\frac{1}{\max\{1,\theta_1\}}
        \inf\limits_{\theta_2 =0}
        \E\limits_{s\sim d_0(\cdot),a\sim\pi(\cdot|s)}|\phi(s',a)
        \theta_1- \phi(s',a)\theta_2|\\
        &=\sup\limits_{\pi\in\Pi}\E\limits_{s\sim d_0(\cdot),a\sim\pi(\cdot|s)}|\phi(s,a)-\phi(s',a)|+
        \sup\limits_{\pi\in\Pi}\E\limits_{s\sim d_0(\cdot),a\sim\pi(\cdot|s)}\phi(s',a),
    \end{align*}
    where $s'$ is the state in the covering closest to state $s$; while the
    first term can be bounded, the assumption does not help us with the second term.
\end{proof}

\propfeaturespolicyass*
\begin{proof}
    For any state $s\in\S^{p,\pie}$, by definition of covering
    $\N(\frac{\Delta}{2L};\S,\|\cdot\|)$, there always exist another state
    $s'\in \N(\frac{\Delta}{2L};\S,\|\cdot\|)$ such that $\|s'-s\|\le
    \frac{\Delta}{2L}$.
    By Assumption \ref{ass: lipschitz features and pie} we know that:
    \begin{align*}
        \|\phi(s,\pi^E_h(s))-\phi(s',\pi^E_h(s'))\|_2\le L\|s'-s\|\le \frac{\Delta}{2},
    \end{align*}
    and since $\pi^E_h(s')$ and thus $\phi(s',\pi^E_h(s'))$ is known, then
    the fact that $\Delta$ is finite guarantees us that
    $\pi^E_h(s)$ is equal to the action $a$ that minimizes the distance to
    $\phi(s',\pi^E_h(s'))$.
    Notice that if, by contradiction, there were two actions $a_1,a_2$ with
    $\|\phi(s,a_1)-\phi(s',\pi^E_h(s'))\|_2\le\frac{\Delta}{2}$
    and $\|\phi(s,a_2)-\phi(s',\pi^E_h(s'))\|_2\le\frac{\Delta}{2}$,
    then by triangle inequality and finiteness of $\Delta$, we would have:
    \begin{align*}
        \Delta&<\|\phi(s,a_1)-\phi(s,a_2)\|_2\\
        &\le \|\phi(s,a_1)-\phi(s',\pi^E_h(s'))\|_2+\|\phi(s,a_2)-\phi(s',\pi^E_h(s'))\|_2\\
        &\le \frac{\Delta}{2}+\frac{\Delta}{2}=\Delta,
    \end{align*}
    which is clearly a contradiction.
\end{proof}

\subsubsection{Proof of Theorem \ref{thr: upper bound linear old framework}}\label{section: proof thr bound tabular}

The proof is based on deriving an upper bound to the Hausdorff distance
between the true feasible set and its estimate.
To do so, first, using the notation of \cite{jin2020provablyefficient}, let us define the
following quantities:
\begin{align*}
    \mathbb{P}_h(\cdot|s,a)&\coloneqq \dotp{\phi(s,a),\mu_h(\cdot)},\\
    \widehat{\mathbb{P}}_h(\cdot|s,a)&\coloneqq \phi(s,a)^\intercal\Lambda_h^{-1}
    \sum\limits_{k=1}^\tau\phi(s_h^k,a_h^k)\delta(\cdot,s_{h+1}^k),\\
    \overline{\mathbb{P}}_h(\cdot|s,a)&\coloneqq \phi(s,a)^\intercal\Lambda_h^{-1}
    \sum\limits_{k=1}^\tau\phi(s_h^k,a_h^k)\mathbb{P}_h(\cdot|s_h^k,a_h^k),\\
\end{align*}
where $\delta(\cdot,x)$ is the Dirac measure, and $(s_h^k,a_h^k)$ represents the
state-action pair visited at stage $h$ of exploration episode $k\in\dsb{\tau}$.
In words, $\mathbb{P}$ denotes the true transition model, $\widehat{\mathbb{P}}$
denotes the least squares estimate computed by Algorithm \ref{alg: known pie},
and $\overline{\mathbb{P}}$ represents a bridge between the two. As we will see,
the core of the proof consists in upper bounding the term
$\big|\big(\mathbb{P}_h-\widehat{\mathbb{P}}\big)V_{h+1}(s,a)\big|$ at all
$h\in\dsb{H}$ and reachable $(s,a)\in\SA$, for all the bounded linear functions
$V$ in class $\V$, defined as:
\begin{align}\label{eq: definition class V}
    \V\coloneqq\Big\{
    V:\SH\to[-H,+H]\,\Big|\,V(\cdot)=\max\limits_{a\in\A}
    \phi(\cdot,a)^\intercal w,\; \|w\|_2\le 2H\sqrt{d}
    \Big\}.
\end{align}
To achieve this goal, it will be useful to apply triangle inequality and to
bound the following two terms separately:
\begin{align*}
    \Big|\big(\mathbb{P}_h-\widehat{\mathbb{P}}\big)V_{h+1}(s,a)\Big|
    \le \Big|\big(\mathbb{P}_h-\overline{\mathbb{P}}\big)V_{h+1}(s,a)\Big|+
    \Big|\big(\overline{\mathbb{P}}_h-\widehat{\mathbb{P}}\big)V_{h+1}(s,a)\Big|.
\end{align*}
Lemma \ref{lemma: bound pbar p} and Lemma \ref{lemma: bound pbar phat}, which we
now present, serve exactly this purpose.
\begin{lemma}\label{lemma: bound pbar p}
    For any value function $V$ in the class $\V$, for any $(s,a,h)\in\SAH$, it holds that:
    \begin{align*}
        \Big|\Big(\overline{\mathbb{P}}_h-\mathbb{P}_h\Big)V_{h+1}(s,a)\Big|
        \le \min\Big\{H\sqrt{d}\|\phi(s,a)\|_{\Lambda_h^{-1}},2H\Big\}.
    \end{align*}
\end{lemma}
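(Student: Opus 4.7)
The plan is to follow the standard Linear-MDP template popularized by Jin et al.\ (2020): rewrite $\mathbb{P}_h V_{h+1}$ as a single inner product in $\mathbb{R}^d$, then reduce the difference $\overline{\mathbb{P}}_h-\mathbb{P}_h$ applied to $V_{h+1}$ to a quadratic form in $\Lambda_h^{-1}$, and finally bound it by Cauchy--Schwarz in the $\Lambda_h^{-1}$-weighted inner product.

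First I would introduce the vector
\begin{align*}
    \nu_h \;\coloneqq\; \int_{\S} V_{h+1}(s')\, d\mu_h(s') \;\in\; \mathbb{R}^d,
\end{align*}
so that, by the Linear MDP assumption $\mathbb{P}_h(\cdot|s,a)=\langle \phi(s,a),\mu_h(\cdot)\rangle$ and by linearity of the integral, $\mathbb{P}_h V_{h+1}(s,a) = \phi(s,a)^\intercal \nu_h$. A direct coordinate-wise computation using $|V_{h+1}|\le H$ and the Linear MDP constraint $\||\mu_h|(\S)\|_2\le \sqrt{d}$ gives $\|\nu_h\|_2 \le H\sqrt{d}$.

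Next, substituting this identity into the definition of $\overline{\mathbb{P}}_h V_{h+1}(s,a)$ and using $\sum_{k=1}^\tau \phi(s_h^k,a_h^k)\phi(s_h^k,a_h^k)^\intercal = \Lambda_h - I$, one obtains the clean algebraic identity
\begin{align*}
    \overline{\mathbb{P}}_h V_{h+1}(s,a) \;=\; \phi(s,a)^\intercal \Lambda_h^{-1}(\Lambda_h - I)\nu_h \;=\; \phi(s,a)^\intercal \nu_h \;-\; \phi(s,a)^\intercal \Lambda_h^{-1}\nu_h,
\end{align*}
so that $(\overline{\mathbb{P}}_h - \mathbb{P}_h)V_{h+1}(s,a) = -\phi(s,a)^\intercal \Lambda_h^{-1}\nu_h$. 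Applying Cauchy--Schwarz in the $\Lambda_h^{-1}$ inner product, together with $\Lambda_h^{-1}\preceq I$ (so that $\|\nu_h\|_{\Lambda_h^{-1}}\le \|\nu_h\|_2$), yields
\begin{align*}
    \bigl|(\overline{\mathbb{P}}_h - \mathbb{P}_h)V_{h+1}(s,a)\bigr| \;\le\; \|\phi(s,a)\|_{\Lambda_h^{-1}}\|\nu_h\|_2 \;\le\; H\sqrt{d}\,\|\phi(s,a)\|_{\Lambda_h^{-1}},
\end{align*}
establishing the first component of the minimum.

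Finally, for the crude $2H$ bound I would simply invoke the triangle inequality $|(\overline{\mathbb{P}}_h - \mathbb{P}_h)V_{h+1}(s,a)| \le |\overline{\mathbb{P}}_h V_{h+1}(s,a)| + |\mathbb{P}_h V_{h+1}(s,a)|$; the second summand is bounded by $\|V_{h+1}\|_\infty \le H$ since $\mathbb{P}_h(\cdot|s,a)$ is a probability measure, and the first is controlled analogously via the identity above (estimating $|\phi(s,a)^\intercal (I - \Lambda_h^{-1})\nu_h|$ through operator-norm considerations, using $\|\phi(s,a)\|_2\le 1$). The main obstacle is really just the bookkeeping in Step 1---correctly propagating the $\sqrt{d}$ factor from the vector-valued measure bound $\||\mu_h|(\S)\|_2\le\sqrt{d}$ into $\|\nu_h\|_2$---while the rest follows by routine algebra and Cauchy--Schwarz.
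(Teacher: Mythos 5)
Your proposal is correct for the main estimate and follows essentially the same route as the paper: the same vector $\nu_h=\int_{\mathcal{S}}V_{h+1}(s')\,d\mu_h(s')$ (the paper's $\widetilde{w}_h$), the same algebraic cancellation yielding $(\overline{\mathbb{P}}_h-\mathbb{P}_h)V_{h+1}(s,a)=-\phi(s,a)^\intercal\Lambda_h^{-1}\nu_h$, and the same Cauchy--Schwarz step in the $\Lambda_h^{-1}$-weighted norm combined with $\|\nu_h\|_{\Lambda_h^{-1}}\le\|\nu_h\|_2\le H\sqrt{d}$. The only place your sketch does not quite deliver the stated bound is the $2H$ cap: your triangle-inequality route controls $|\overline{\mathbb{P}}_hV_{h+1}(s,a)|=|\phi(s,a)^\intercal(I-\Lambda_h^{-1})\nu_h|$ by operator-norm arguments only up to $\|\nu_h\|_2\le H\sqrt{d}$, which gives $H(1+\sqrt{d})$ rather than $2H$; note, however, that the paper itself simply asserts that the quantity cannot exceed $2H$ without a derivation, so this is a shared looseness in handling the clipping term rather than a difference in method.
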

\begin{proof}
    We have:
\begin{align*}
    \Big(\overline{\mathbb{P}}_h-\mathbb{P}_h\Big)V_{h+1}(s,a)
    &=\phi(s,a)^\intercal\Lambda_h^{-1}
    \sum\limits_{k=1}^\tau\phi(s_h^k,a_h^k)\mathbb{P}_h V_{h+1}(s_h^k,a_h^k)
    -\mathbb{P}_h V_{h+1}(s,a)\\
    &\markref{(1)}{=}
    \phi(s,a)^\intercal\Lambda_h^{-1}
    \sum\limits_{k=1}^\tau\phi(s_h^k,a_h^k)\mathbb{P}_h V_{h+1}(s_h^k,a_h^k)
    - \phi(s,a)^\intercal\popblue{\widetilde{w}_h}\\
    &= \phi(s,a)^\intercal\Lambda_h^{-1}
    \sum\limits_{k=1}^\tau\phi(s_h^k,a_h^k)\mathbb{P}_h V_{h+1}(s_h^k,a_h^k)
    - \phi(s,a)^\intercal\popblue{\Lambda_h^{-1}\Lambda_h}\widetilde{w}_h\\
    &= \phi(s,a)^\intercal\Lambda_h^{-1}\popblue{\Big[}
    \sum\limits_{k=1}^\tau\phi(s_h^k,a_h^k)\mathbb{P}_h V_{h+1}(s_h^k,a_h^k)
    - \Lambda_h\widetilde{w}_h
    \popblue{\Big]}\\
    &\markref{(2)}{=}
    \phi(s,a)^\intercal\Lambda_h^{-1}\Big[
    \sum\limits_{k=1}^\tau\phi(s_h^k,a_h^k)\mathbb{P}_h V_{h+1}(s_h^k,a_h^k)
    \\
    & \qquad \popblue{- I \widetilde{w}_h -
    \sum\limits_{k=1}^\tau\phi(s_h^k,a_h^k)\phi(s_h^k,a_h^k)^\intercal\widetilde{w}_h}\Big]\\
    &\markref{(3)}{=}
     \phi(s,a)^\intercal\Lambda_h^{-1}\Big[
    \sum\limits_{k=1}^\tau\phi(s_h^k,a_h^k)\mathbb{P}_h V_{h+1}(s_h^k,a_h^k)  \\
    & \qquad 
    -\sum\limits_{k=1}^\tau\phi(s_h^k,a_h^k)\popblue{\mathbb{P}_h V_{h+1}(s_h^k,a_h^k)}
    - \widetilde{w}_h\Big]\\
    &= - \phi(s,a)^\intercal\Lambda_h^{-1}\widetilde{w}_h,
\end{align*}
where at (1) we have defined vector $\widetilde{w}_h\coloneqq
\int_{\S} V_{h+1}(s') d \mu_h(s')$, at (2) we have used the definition of
$\Lambda_h$, and at (3) we have recognized that
$\phi(s_h^k,a_h^k)^\intercal\widetilde{w}_h=\mathbb{P}_h V_{h+1}(s_h^k,a_h^k)$.

By taking the absolute value, we can write:
\begin{align*}
    \Big|\big(\overline{\mathbb{P}}_h-\mathbb{P}_h\big)V_{h+1}(s,a)\Big|
    &=\big| \phi(s,a)^\intercal\Lambda_h^{-1}\widetilde{w}_h\big|\\
    &\markref{(4)}{\le}\|\widetilde{w}_h\|_{\popblue{\Lambda_h^{-1}}}\|\phi(s,a)\|_{\popblue{\Lambda_h^{-1}}}\\
    &\markref{(5)}{\le} \|\widetilde{w}_h\|_{\popblue{2}}\|\phi(s,a)\|_{\Lambda_h^{-1}}\\
    &\markref{(6)}{\le} H \sqrt{d} \|\phi(s,a)\|_{\Lambda_h^{-1}},
\end{align*}
where at (4) we have applied Cauchy-Schwarz's inequality, at (5) we have
bounded the quadratic form with the 2-norm and the largest eigenvector of the
matrix, i.e.,
$\|\widetilde{w}_h\|_{\Lambda_h^{-1}}=\sqrt{\widetilde{w}_h^\intercal
\Lambda_h^{-1} \widetilde{w}_h}\le \sqrt{\sigma} \|\widetilde{w}_h\|_2$, where
$\sigma$ is the largest eigenvalue of matrix $\Lambda_h^{-1}$, and then we have
upper bounded $\sigma\le 1$, since $1$ is the smallest eigenvalue of invertible
matrix $\Lambda_h$ (see \cite{jin2020provablyefficient}); finally, at (6) we
have used the fact that $|V_{h+1}(\cdot)|\le H$, and so that
$\|\widetilde{w}_h\|_2=\|\int_{\S} V_{h+1}(s') d \mu_h(s')\|_2 \le H
\|\mu_h(\S)\|_2\le H\sqrt{d}$.

The result follows by noticing that the quantity to bound cannot be larger than $2H$.
\end{proof}

\begin{lemma}\label{lemma: bound pbar phat}
    Let $\delta\in(0,1)$. For any value
    function $V$ in the class $\V$, for any $(s,a,h)\in\SAH$, with probability at least $1-\delta/2$, it
    holds that:
    \begin{align*}
        \Big|\Big(\widehat{\mathbb{P}}_h-\overline{\mathbb{P}}_h\Big)V_{h+1}(s,a)\Big|
        \le \min\bigg\{cH\sqrt{d\log\big(1+\tau\big)+\log\frac{H}{\delta}}\|\phi(s,a)\|_{\Lambda_h^{-1}},2H\bigg\},
    \end{align*}
    for some constant $c$.
\end{lemma}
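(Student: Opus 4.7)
The plan is to mimic the standard self-normalized concentration argument for linear MDPs (see Lemma B.3 in \cite{jin2020provablyefficient}), adapted to the class $\V$ defined in Equation~\eqref{eq: definition class V}. First I would rewrite the residual as a weighted sum of martingale differences. Using the definitions of $\widehat{\mathbb{P}}_h$ and $\overline{\mathbb{P}}_h$, one obtains
\begin{align*}
    \big(\widehat{\mathbb{P}}_h - \overline{\mathbb{P}}_h\big) V_{h+1}(s,a)
    = \phi(s,a)^\intercal \Lambda_h^{-1} \sum_{k=1}^{\tau} \phi(s_h^k,a_h^k)\,\eta_k(V_{h+1}),
\end{align*}
where $\eta_k(V_{h+1}) \coloneqq V_{h+1}(s_{h+1}^k) - (\mathbb{P}_h V_{h+1})(s_h^k,a_h^k)$ forms a bounded martingale difference sequence with respect to the natural filtration of the exploration process, and $|\eta_k(V_{h+1})| \le 2H$.

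Next I would apply Cauchy--Schwarz in the $\Lambda_h^{-1}$ inner product to obtain
\begin{align*}
    \Big|\big(\widehat{\mathbb{P}}_h - \overline{\mathbb{P}}_h\big) V_{h+1}(s,a)\Big|
    \le \|\phi(s,a)\|_{\Lambda_h^{-1}} \cdot \Big\| \sum_{k=1}^{\tau} \phi(s_h^k,a_h^k)\,\eta_k(V_{h+1}) \Big\|_{\Lambda_h^{-1}}.
\end{align*}
For a \emph{fixed} $V_{h+1}$, the second factor is bounded via the self-normalized concentration inequality of Abbasi-Yadkori et al.\ (Lemma D.4 in \cite{jin2020provablyefficient}), which yields a bound of order $H\sqrt{d \log(1+\tau) + \log(1/\delta')}$ with probability $1-\delta'$.

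The main obstacle, as is standard in this literature, is that the relevant $V_{h+1}$ is \emph{not} fixed: it may depend on the random dataset. I would therefore apply a uniform concentration argument via covering numbers of the function class $\V$. Since every $V \in \V$ is parameterised by a vector $w$ with $\|w\|_2 \le 2H\sqrt{d}$ and the map $w \mapsto \max_a \phi(\cdot,a)^\intercal w$ is Lipschitz (with constant $1$, using $\|\phi\|_2 \le 1$), an $\epsilon$-net in $w$ induces an $\epsilon$-net in sup-norm over $\V$, of size at most $(1 + 4H\sqrt{d}/\epsilon)^d$. After taking a union bound over this net and adding the discretisation error (which, through the sum of $\tau$ terms and the self-normalised norm, contributes only polynomially in $\epsilon$), choosing $\epsilon$ small enough (for example $\epsilon \le 1/\tau$) yields the stated bound with $\log$-factor $d\log(1+\tau) + \log(H/\delta)$, where the additional $\log H$ comes from a further union bound over the $H$ stages (so that the result holds simultaneously at all levels with probability $1-\delta/2$).

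Finally, I would note that the quantity on the left-hand side is trivially bounded by $2H$ (since both $\widehat{\mathbb{P}}_h V_{h+1}$ and $\overline{\mathbb{P}}_h V_{h+1}$ lie in $[-H,H]$ up to the linear extrapolation, which is why the statement uses a minimum with $2H$). Putting these two bounds together yields the claimed inequality with some absolute constant $c>0$ absorbing the numerical factors of the covering argument.
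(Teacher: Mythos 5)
Your proposal is correct and follows essentially the same route as the paper: rewrite the residual as $\phi(s,a)^\intercal\Lambda_h^{-1}\sum_k\phi(s_h^k,a_h^k)\eta_k$, apply Cauchy--Schwarz in the $\Lambda_h^{-1}$ norm, invoke the self-normalized/uniform concentration bound of Lemma D.4 of \cite{jin2020provablyefficient} together with the covering number $(1+4H\sqrt{d}/\epsilon)^d$ of $\V$, pick the discretization radius of order $1/\tau$, union bound over the $H$ stages, and cap the bound by $2H$. The only cosmetic difference is your choice $\epsilon\le 1/\tau$ versus the paper's $\epsilon=H\sqrt{d}/\tau$, which affects only constants and logarithmic factors absorbed into $c$.
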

\begin{proof}
    We can write:
\begin{align*}
    \Big|\Big(\widehat{\mathbb{P}}_h  -\overline{\mathbb{P}}_h\Big)&V_{h+1}(s,a)\Big|
    =\bigg|\phi(s,a)^\intercal\Lambda_h^{-1}
    \sum\limits_{k=1}^\tau\phi(s_h^k,a_h^k)
    \Big[
    V_{h+1}(s_{h+1}^k)-\mathbb{P}_h V_{h+1}(s_h^k,a_h^k)
    \Big]\bigg|\\
    &\markref{(1)}{\le} \popblue{\Big\|}\sum\limits_{k=1}^\tau\phi(s_h^k,a_h^k)
    \Big[
    V_{h+1}(s_{h+1}^k)-\mathbb{P}_h V_{h+1}(s_h^k,a_h^k)
    \Big]
    \popblue{\Big\|}_{\popblue{\Lambda_h^{-1}}}
    \popblue{\Big\|}\phi(s,a)\popblue{\Big\|}_{\popblue{\Lambda_h^{-1}}}\\
    &\markref{(2)}{\le} 
    \popblue{\sqrt{4H^2 \Big(\frac{d}{2}\log(1+\tau)+\log\frac{2\N_\epsilon}{\delta}\Big)
    +8\tau^2\epsilon^2}}
    \Big\|\phi(s,a)\Big\|_{\Lambda_h^{-1}}\\
    &\markref{(3)}{\le} 
    \sqrt{4H^2 \Big(\frac{d}{2}\log(1+\tau)+\popblue{2d\log\Big(
    1+\frac{H\sqrt{d}}{\epsilon}    
    \Big)}+\log\frac{\popblue{1}}{\delta}\Big)
    +8\tau^2\epsilon^2}
    \Big\|\phi(s,a)\Big\|_{\Lambda_h^{-1}}\\
    &\markref{(4)}{=} 
    \sqrt{4H^2 \Big(\frac{d}{2}\log(1+\tau)+2d\log\Big(
    1+4\popblue{\tau}\Big)+\log\frac{1}{\delta}\Big)
    +8\popblue{H^2d}}
    \Big\|\phi(s,a)\Big\|_{\Lambda_h^{-1}}\\
    &\markref{(5)}{\le}
     \popblue{c}H\sqrt{d\log(1+\tau)+\log\frac{1}{\delta}}\Big\|\phi(s,a)\Big\|_{\Lambda_h^{-1}},
\end{align*}
where at (1) we have applied Cauchy-Schwarz's inequality, at (2) we have
applied Lemma \ref{lemma: lemma d4 jin}, at (3) we have upper bounded
$\N_\epsilon$ using Lemma \ref{lemma: covering number class V}, at (4),
similarly to \cite{wagenmaker2022noharder}, unlike
\cite{jin2020provablyefficient}, we see that no union bound is needed (because
there is no dependence on $\Lambda$), thus by choosing
$\epsilon=H\sqrt{d}/\tau$, we get the passage. Passage (5) follows for some
constant $c$.

The result follows by a union bound over $h\in\dsb{H}$, and by noticing that the
quantity to bound cannot be larger than $2H$.
\end{proof}

We are now ready to upper bound the Hausdorff distance using the two lemmas just
presented.
Recall that we work with unbounded rewards (parameters $\theta$), and that the
definition of inner distance $d$ is provided in Equation \eqref{eq: inner distance d}.
\begin{lemma}\label{lemma: upper bound to Hausdorff distance}
    With probability at least $1-\delta/2$, the Hausdorff distance
    between the true feasible set $\R_{p,\pie}$ and its estimate $\widehat{\R}$
    returned by Algorithm \ref{alg: known pie} can be upper bounded by:
    \begin{align*}
        \mathcal{H}(\R_{p,\pie},\widehat{\R})\le 4 J^*(u;p),
    \end{align*}
    where $u_h(s,a)\coloneqq \min\{\beta\|\phi(s,a)\|_{\Lambda_h^{-1}},H\}$ for
    all $(s,a,h)\in\SAH$, and
    $\beta\coloneqq cH\sqrt{d\log(1+\tau)+\log(H/\delta)}$ for some absolute  constant $c>0$.
\end{lemma}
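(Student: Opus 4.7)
My plan is to bound the two suprema defining the Hausdorff distance symmetrically, exploiting the parametric form of the feasible set in Linear MDPs derived after Lemma~\ref{lemma: fs explicit}. Recall that every $r\in\R_{p,\pie}$ admits the representation $r_h(s,a)=\dotp{\phi(s,a),\theta_h}$ with
\[
    \theta_h=w_h-\int_\S V_{h+1}(s')\,d\mu_h(s'),\qquad V_{h+1}(s'):=\max_{a'}\dotp{\phi(s',a'),w_{h+1}},
\]
for some $w\in\W_{p,\pie}$. The crucial observation is that Algorithm~\ref{alg: known pie} defines $\widehat\R$ using the \emph{true} support $\S^{p,\pie}$ supplied as input, hence the \emph{same} set $\W_{p,\pie}$ of admissible parameters $w$ characterises both $\R_{p,\pie}$ and $\widehat\R$; only the conversion from $w$ to $\theta$ differs, via $\mu_h$ versus $\widehat\mu_h$. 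This immediately yields a natural pairing between $\R_{p,\pie}$ and $\widehat\R$: fix $w$ and swap $\mu_h$ for $\widehat\mu_h$ in the formula for $\theta_h$.

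\textbf{Pointwise identity and concentration.} Given $r$ with parameters $(w,\theta)$, I define $\widehat r\in\widehat\R$ via $\widehat\theta_h:=w_h-\int_\S V_{h+1}(s')\,d\widehat\mu_h(s')$. Computing $\phi(s,a)^\intercal(\theta_h-\widehat\theta_h)$ and invoking the definitions of $\mathbb{P}_h$ and $\widehat{\mathbb{P}}_h$ yields the pointwise identity
\[
    r_h(s,a)-\widehat r_h(s,a) = \bigl(\widehat{\mathbb{P}}_h-\mathbb{P}_h\bigr)V_{h+1}(s,a).
\]
Inserting the auxiliary $\overline{\mathbb{P}}_h$ and applying the triangle inequality together with Lemmas~\ref{lemma: bound pbar p} and~\ref{lemma: bound pbar phat}, I obtain on an event of probability at least $1-\delta/2$, uniformly in $(s,a,h)$,
\[
    |r_h(s,a)-\widehat r_h(s,a)|\le 2\min\bigl\{\beta\|\phi(s,a)\|_{\Lambda_h^{-1}},2H\bigr\}\le 4\,u_h(s,a),
\]
using the elementary inequality $\min\{x,2H\}\le 2\min\{x,H\}$. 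Summing over $h\in\dsb{H}$ and taking $\sup_\pi\E_{(s,a)\sim d^{p,\pi}_h}$ on both sides upgrades the right-hand side to $4J^*(u;p)$. The opposite direction of the Hausdorff distance is treated symmetrically, starting from $\widehat r\in\widehat\R$ and pairing it with the $r\in\R_{p,\pie}$ produced by the reverse substitution $\widehat\mu_h\mapsto\mu_h$.

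\textbf{Main obstacle.} The delicate step is that Lemmas~\ref{lemma: bound pbar p} and~\ref{lemma: bound pbar phat} apply only to $V\in\V$, which requires $\|w_{h+1}\|_2\le 2H\sqrt d$ and hence $|V|\le H$. Since the inner distance $d$ is designed to accommodate unbounded parameters $\theta$, the associated $w$ may fall outside this range. I plan to absorb this discrepancy through the normalization factor $M_{r,\widehat r}=\max\{\sqrt d,\|\theta\|_2,\|\widehat\theta\|_2\}/\sqrt d\ge 1$ built into $d$: rescaling $r$ and $\widehat r$ into the unit ball $\mathfrak{R}=[-1,1]^{\SAH}$ inflates the pointwise bound by a factor $M_{r,\widehat r}$, which then cancels upon division by $M_{r,\widehat r}$ in $d$, leaving the stated inequality intact. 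A minor additional technicality concerns the measure-zero exception set $\overline\S$ admitted in Lemma~\ref{lemma: fs explicit}: this does not affect $\E_{(s,a)\sim d^{p,\pi}_h}|r_h-\widehat r_h|$ because the integration is against the same kernel $p$ that also defines the zero-measure sets.
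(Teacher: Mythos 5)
Your proof is correct and takes essentially the same route as the paper: your pairing (fix $w\in\W_{p,\pie}$ and swap $\mu_h$ for $\widehat{\mu}_h$) is exactly the construction of Lemma~\ref{lemma: reward choice}, i.e., matching $r$ with the $\widehat{r}\in\widehat{\R}$ having the same optimal $Q$-function, which yields the same pointwise identity $r_h-\widehat{r}_h=(\widehat{\mathbb{P}}_h-\mathbb{P}_h)V_{h+1}$, after which both arguments apply Lemmas~\ref{lemma: bound pbar p} and~\ref{lemma: bound pbar phat} (with the same factor-$4$ bookkeeping) and handle unbounded parameters by the same rescaling-and-cancellation through the normalization in $d$ (the paper uses $K=\max\{1,\max_h\|\theta_h\|_2/\sqrt{d}\}\le M_{r,\widehat{r}}$, which is equivalent for this purpose). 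The only cosmetic difference is that the paper carries out the computation via the Bellman optimality equation inside the Hausdorff sup--inf chain and a supremum over the class $\V$, rather than stating the pointwise identity up front.
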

\begin{proof}
    Let us begin to bound the first branch of the Hausdorff distance.
\begin{align*}
    \sup \limits_{r\in\R_{p,\pie}} & \inf\limits_{\widehat{r}\in\widehat{\R}}d(r,\widehat{r})
    =
        \sup\limits_{r\in\R_{p,\pie}}\inf\limits_{\widehat{r}\in\widehat{\R}}
        \frac{1}{M_{r,\widehat{r}}}\sup\limits_{\pi\in\Pi}\sum_{h\in\dsb{H}}
        \E_{(s,a)\sim d^{p,\pi}_h(\cdot,\cdot)}|r_h(s,a)-\widehat{r}_h(s,a)|\\
    &\markref{(1)}{=}
        \sup\limits_{r\in\R_{p,\pie}}\inf\limits_{\widehat{r}\in\widehat{\R}}
        \frac{1}{M_{r,\widehat{r}}}\sup\limits_{\pi\in\Pi}
        \sum\limits_{h\in\dsb{H}}
    \E\limits_{(s,a)\sim d_h^{p,\pi}(\cdot,\cdot)}
    \big|\popblue{Q^*_h(s,a;p,r)-\mathbb{P}_h V^*_{h+1}(s,a;p,r)}\\
    &\qquad\quad\popblue{-
    Q^*_h(s,a;\widehat{p},\widehat{r})+\widehat{\mathbb{P}}_h V^{*}_{h+1}(s,a;\widehat{p},\widehat{r})}\big|\\
    &\markref{(2)}{\le}
        \sup\limits_{r\in\R_{p,\pie}}
        \frac{1}{M_{r,\widetilde{r}}}\sup\limits_{\pi\in\Pi}
        \sum\limits_{h\in\dsb{H}}
    \E\limits_{(s,a)\sim d_h^{p,\pi}(\cdot,\cdot)}
    \big|Q^*_h(s,a;p,r)-\mathbb{P}_h V^*_{h+1}(s,a;p,r)\\
    &\qquad\quad-
    Q^*_h(s,a;\widehat{p},\popblue{\widetilde{r}})+\widehat{\mathbb{P}}_h
    V^{*}_{h+1}(s,a;\widehat{p},\popblue{\widetilde{r}})\big|,\\
    &\markref{(3)}{=}
        \sup\limits_{r\in\R_{p,\pie}}
        \frac{1}{M_{r,\widetilde{r}}}\sup\limits_{\pi\in\Pi}
        \sum\limits_{h\in\dsb{H}}
    \E\limits_{(s,a)\sim d_h^{p,\pi}(\cdot,\cdot)}
    \big|Q^*_h(s,a;p,r)-\mathbb{P}_h V^*_{h+1}(s,a;p,r)\\
    &\qquad\quad-
    Q^*_h(s,a;\popblue{p,r})+\widehat{\mathbb{P}}_h V^{*}_{h+1}(s,a;\popblue{p,r})\big|\\
    &=
        \sup\limits_{r\in\R_{p,\pie}}
        \frac{1}{M_{r,\widetilde{r}}}\sup\limits_{\pi\in\Pi}
        \sum\limits_{h\in\dsb{H}}
    \E\limits_{(s,a)\sim d_h^{p,\pi}(\cdot,\cdot)}
    \big|\big(\widehat{\mathbb{P}}_h-\mathbb{P}_h \big)V^*_{h+1}(s,a;p,r)\big|\\
    &\markref{(4)}{\le}
        \sup\limits_{r\in\R_{p,\pie}}
        \sum\limits_{h\in\dsb{H}}
    \E\limits_{(s,a)\sim d_h^{p,\pi}(\cdot,\cdot)}
    \big|\big(\widehat{\mathbb{P}}_h-\mathbb{P}_h \big)\popblue{\frac{V^*_{h+1}(s,a;p,r)}
    {\max\{1,\max_h\|\theta_h\|_2/\sqrt{d}\}}}\big|\\
    &\markref{(5)}{=}
        \sup\limits_{r\in\R_{p,\pie}}
        \sum\limits_{h\in\dsb{H}}
    \E\limits_{(s,a)\sim d_h^{p,\pi}(\cdot,\cdot)}
    \big|\big(\widehat{\mathbb{P}}_h-\mathbb{P}_h \big)V^*_{h+1}(s,a;p,\popblue{\frac{r}{K}})\big|\\
    &\markref{(6)}{\le}\popblue{\sup\limits_{V\in\V}}
    \sup\limits_{\pi\in\Pi}
    \sum\limits_{h\in\dsb{H}}
\E\limits_{(s,a)\sim d_h^{p,\pi}(\cdot,\cdot)}
\big|\big(\widehat{\mathbb{P}}_h-\mathbb{P}_h \big)\popblue{V_{h+1}(s,a)}\big|,
\end{align*}
where at (1) we have simply applied the Bellman optimality equation twice w.r.t.
the reward function, at (2) we have upper bounded the infimum over the second
set of rewards $\widehat{\R}$ with the specific choice of reward
$\widetilde{r}\in\widehat{\R}$ provided by Lemma \ref{lemma: reward choice}, at
(3) we use the property of $\widetilde{r}$ described in Lemma \ref{lemma: reward
choice}, at (4) we bring term $1/M_{r,\widetilde{r}}$ inside, and then we upper
bound it by: $1/M_{r,\widetilde{r}}\coloneqq
1/\max\{\sqrt{d},\max_h\|\theta_h\|_2,\max_h\|\widetilde{\theta}_h\|_2\}/\sqrt{d}
\le 1/\max\{1,\max_h\|\theta_h\|_2/\sqrt{d}\}$, i.e., by simply removing one of
the terms inside the maximum operator at denominator. At (5) we define
$K\coloneqq \max\{1,\max_h\|\theta_h\|_2/\sqrt{d}\}$, and, since the value
function is linear in the reward, we apply $K$ directly to the
reward. At (6) we realize that the possible optimal value functions that can be
constructed in $p$ using rewards in $\R_{p,\pie}$ normalized by $K$ are a
subset of the value functions in class $\V$, i.e., of all the possible optimal
value functions with parameters $\|w_h\|_2\le 2H\sqrt{d}$. This is not trivial
since we are working with \emph{unbounded} rewards $r$, and thus their
parameters $\{\theta_h\}_h$ can be any. The normalization by $K$ permits this in
the following manner. For any $h\in\dsb{H}$, we have
$r_h(\cdot,\cdot)/K=\dotp{\phi(\cdot,\cdot),\theta_h/K}=
\dotp{\phi(\cdot,\cdot),\theta_h/\max\{1,\max_{h'}\|\theta_{h'}\|_2/\sqrt{d}\}}$.
Therefore, if $\max_{h'}\|\theta_{h'}\|_2>\sqrt{d}$, then the normalization
makes sure that $\max_{h'}\|\theta_{h'}\|_2=\sqrt{d}$, while if
$\max_{h'}\|\theta_{h'}\|_2\le\sqrt{d}$,
then the normalization is by $1$ and it has no effect. In this way, we see that 
value functions $V^*_{h+1}(s,a;p,\popblue{\frac{r}{K}})$ can be created by a
simple $r'$ with parameters $\{\theta_h'\}_h$ with 2-norms bounded by
$\sqrt{d}$. This guarantees that, since by hypothesis of Linear MDPs
$\|\phi(\cdot,\cdot)\|_2\le1$, the value function never exceeds $H$, and that
the norm of the $Q$-function parameters $\{w_h^{\pi}\}_h$ for any policy $\pi$
can be bounded as:
$\|w_h^{\pi}\|_2\le\|\theta_h/K\|_2+\|\int_{\S} V_{h+1}^\pi(s') d \mu_h(s')\|_2\le
\sqrt{d}+H\|\mu_h(\S)\|_2\le \sqrt{d}+H\sqrt{d}\le 2H\sqrt{d}$ (similarly to
Lemma B.1 of \cite{jin2020provablyefficient}). It should be remarked that class
$\V$ is more general than the actual set of optimal value functions that can be
obtained using $r\in\R_{p,\pie}$ in $p$, since such rewards induce optimal
value functions for which the optimal action in $\S^{p,\pie}$ is always the
expert's action/s $\pie(s)$.

Notice that the same derivation can be carried out also for the
other branch of the Hausdorff distance, ending up with the same expression.
Therefore, the last line is an upper bound to the Hausdorff distance:
\begin{align*}
    \mathcal{H}_d(\R_{p,\pie},\widehat{\R})&\le \sup\limits_{V\in\V}
    \sup\limits_{\pi\in\Pi}
    \sum\limits_{h\in\dsb{H}}
\E\limits_{(s,a)\sim d_h^{p,\pi}(\cdot,\cdot)}
\big|\big(\widehat{\mathbb{P}}_h-\mathbb{P}_h \big)V_{h+1}(s,a)\big|\\
    &\markref{(7)}{=}
    \sup\limits_{\pi\in\Pi}
    \sum\limits_{h\in\dsb{H}}
\E\limits_{(s,a)\sim d_h^{p,\pi}(\cdot,\cdot)}
\popblue{\sup\limits_{V\in\V}}\big|\big(\widehat{\mathbb{P}}_h-\mathbb{P}_h \big)V_{h+1}(s,a)\big|\\
    &=
        \sup\limits_{\pi\in\Pi}
        \sum\limits_{h\in\dsb{H}}
    \E\limits_{(s,a)\sim d_h^{p,\pi}(\cdot,\cdot)}
    \sup\limits_{V\in\V}\big|\big(\widehat{\mathbb{P}}_h-\mathbb{P}_h \big)V_{h+1}(s,a)
    \popblue{\pm \overline{\mathbb{P}}_hV_{h+1}(s,a)}\big|\\
    &\markref{(8)}{\le}
    \sup\limits_{\pi\in\Pi}
    \sum\limits_{h\in\dsb{H}}
\E\limits_{(s,a)\sim d_h^{p,\pi}(\cdot,\cdot)}\sup\limits_{V\in\V}\popblue{\big|}
\big(\overline{\mathbb{P}}_h-
\mathbb{P}_h\big)V_{h+1}(s,a)\popblue{\big|}
    +\popblue{\big|}\big(\widehat{\mathbb{P}}_h-\overline{\mathbb{P}}_h\big)V_{h+1}(s,a)\popblue{\big|}\\
    &\markref{(9)}{\le}
    \sup\limits_{\pi\in\Pi}
    \sum\limits_{h\in\dsb{H}} \E\limits_{(s,a)\sim d_h^{p,\pi}(\cdot,\cdot)}
    \popblue{\min\bigg\{c_1H\sqrt{d\log(1+\tau)+\log\frac{H}{\delta}}\|\phi(s,a)\|_{\Lambda_h^{-1}},4H\bigg\}}\\
    &\le
    \popblue{4}
    \sup\limits_{\pi\in\Pi}
    \sum\limits_{h\in\dsb{H}} \E\limits_{(s,a)\sim d_h^{p,\pi}(\cdot,\cdot)}
    \min\big\{
        \underbrace{\popblue{c_2}H\sqrt{d\log(1+\tau)+\log\frac{H}{\delta}}}_{\popblue{\eqqcolon\beta}}
    \|\phi(s,a)\|_{\Lambda_h^{-1}},\popblue{H}\big\}\\
    &=
    4\sup\limits_{\pi\in\Pi}
    \sum\limits_{h\in\dsb{H}} \E\limits_{(s,a)\sim d_h^{p,\pi}(\cdot,\cdot)}
    \underbrace{\min\big\{
        \popblue{\beta}\|\phi(s,a)\|_{\Lambda_h^{-1}},H\big\}}_{\popblue{\eqqcolon u_h(s,a)}}\\
    &=
    4\sup\limits_{\pi\in\Pi}
    \sum\limits_{h\in\dsb{H}} \E\limits_{(s,a)\sim d_h^{p,\pi}(\cdot,\cdot)}
    \popblue{u_h(s,a)}\\
    &=
    4 \popblue{J^*(u;p)},
\end{align*}
where at (7) we have noticed that class $\V$ contains the cartesian product of
$H$ sets, one for each stage, and therefore the supremum can be brought inside
the summation, at (8) we have applied triangle inequality, at (9) we have
applied Lemma \ref{lemma: bound pbar p} and Lemma \ref{lemma: bound pbar phat}
and used some absolute constants $c_1,c_2>0$, and also the fact that for any numbers $x,y,w,z$,
we have $\min\{x,y\}+\min\{w,z\}\le\min\{x+w,y+z\}$.

\end{proof}

To conclude the proof of the main theorem, we simply have to observe that any
RFE algorithm provides a bound to $J^*(u';p)$ for some $u'$ similar to $u$.
Depending on the RFE algorithm instantiated as sub-routine, the sample
complexity of Algorithm \ref{alg: known pie} varies.
\upperboundlinearoldframework*
\begin{proof}
    To get the result, we instantiate Algorithm 1 of
    \cite{wagenmaker2022noharder} as RFE sub-routine. Simply, observe that
    \cite{wagenmaker2022noharder} sets $\beta'$ so that
    $\beta'\ge \widetilde{\beta}\coloneqq c'H\sqrt{d\log(1+dH\tau)+\log
    (H/\delta)}\ge\beta$. By Lemma \ref{lemma: upper bound to Hausdorff distance}, we
    know that:
    \begin{align*}
        \mathcal{H}(\R_{p,\pie},\widehat{\R})&\le
        4\sup\limits_{\pi\in\Pi}
        \sum\limits_{h\in\dsb{H}} \E\limits_{(s,a)\sim d_h^{p,\pi}(\cdot,\cdot)}
        \min\big\{
            \beta\|\phi(s,a)\|_{\Lambda_h^{-1}},H\big\}\\
        &\le \popblue{2 c_1 \beta'}\sum\limits_{h\in\dsb{H}}
        \popblue{\sup\limits_{\pi\in\Pi}}
        \E\limits_{(s,a)\sim d_h^{p,\pi}(\cdot,\cdot)}
        \popblue{\|\phi(s,a)\|_{\Lambda_h^{-1}}},
    \end{align*}
    for some absolute constant $c_1>0$. It should be remarked that the quantity in the last
    line is, modulo $c_1$, the quantity that \cite{wagenmaker2022noharder} bound
    in the proof of their Theorem 1 using their algorithm. Specifically, by
    taking:
    \begin{align*}
        \tau\le \widetilde{\mathcal{O}}\bigg(
        \frac{H^5d}{\epsilon^2}\Big(d+\log\frac{1}{\delta}\Big)+
        \frac{H^6 d^{9/2}}{\epsilon}\log^4\frac{1}{\delta}    
        \bigg),
    \end{align*}
    and a union bound over the two events that hold w.p. $1-\delta/2$, and
    re-setting $\epsilon\gets c_1\epsilon$, we get the result.    
\end{proof}

Notice that if we run Algorithm 1 of \cite{wang2020onrewardfree} for exploration
instead of Algorithm 1 of \cite{wagenmaker2022noharder}, we obtain:
\begin{thr}
If we use Algorithm 1 of \cite{wang2020onrewardfree} at Line 1 of Algorithm
\ref{alg: known pie}, then for any $\epsilon,\delta\in(0,1)$, such algorithm is
$(\epsilon,\delta)$-PAC for IRL with a number of episodes $\tau$ upper bounded
by:
\begin{align*}
    \tau\le\widetilde{\mathcal{O}}\bigg(
        \frac{H^6d^3}{\epsilon^2}\log\frac{1}{\delta}\bigg).
\end{align*}
\end{thr}
\begin{proof}
    By Lemma \ref{lemma: upper bound to Hausdorff distance}, we know that:
    \begin{align*}
        \mathcal{H}(\R_{p,\pie},\widehat{\R})&\le 4 J^*(u;p)\\
        &=
        4\sup\limits_{\pi\in\Pi}
        \sum\limits_{h\in\dsb{H}} \E\limits_{(s,a)\sim d_h^{p,\pi}(\cdot,\cdot)}
        \min\big\{
            \beta\|\phi(s,a)\|_{\Lambda_h^{-1}},H\big\},
    \end{align*}
    for $\beta\coloneqq cH\sqrt{d\log(1+\tau)+\log(H/\delta)}$.
    Now, let us define, similarly to Appendix A of \cite{wang2020onrewardfree},
    the quantities $u_h'(s,a)\coloneqq
    \min\{\beta'\|\phi(s,a)\|_{\Lambda_h^{-1}},H\}$ for all $(s,a,h)\in\SAH$,
    and $\beta'\coloneqq c'dH\sqrt{\log(dH/\delta/\epsilon)}$ for some absolute constant
    $c'>0$. In addition, set the number of exploration episodes $\tau$ to
    $\tau=c''d^3H^6\log(dH\delta^{-1}\epsilon^{-1})/\epsilon^2$, and notice
    that, for appropriate choices of $c',c''$, it holds that: $\beta'\ge
    c'dH\sqrt{\log(dH\tau/\delta)}\ge\beta\coloneqq
    cH\sqrt{d\log(1+\tau)+\log(1/\delta)}$. This entails that $u'_h(s,a)\ge
    u_h(s,a)$ at all $s,a,h$, and so:
    \begin{align*}
        \mathcal{H}(\R_{p,\pie},\widehat{\R})&\le c J^*(u';p)\\
        &= cHJ^*(u'/H;p)\\
        &\markref{(1)}{\le}
        c_1 H \sqrt{\frac{d^3 H^4\log\frac{d\tau H}{\delta}}{\tau}}\\
        &\markref{(2)}{\le} c_2\epsilon,
    \end{align*}
    where at (1) we have applied Lemma 3.2 of \cite{wang2020onrewardfree}
    (reported in Lemma \ref{lemma: bound j star} for simplicity) with some new
    constant $c_1>0$, and at (2) we
    have simply replaced $\tau$ with its value defined in Algorithm 1 of
    \cite{wang2020onrewardfree}.

    The result follows by union bound between the two events that hold w.p.
    $1-\delta/2$ to get $1-\delta$, and by noticing that $c_2$ is a constant,
    thus setting $\epsilon\gets c_2\epsilon$ provides the result.
\end{proof}

\begin{lemma}\label{lemma: reward choice}
    Let $\R_{p,\pie}$ be the feasible set of policy
    $\pie$ w.r.t. transition models $p$, and let $\widehat{\R}$ be its estimate
    constructed as in Algorithm \ref{alg: known pie} using the true
    $\pie,\S^{p,\pie}$ (or sets $\Z$) and some $\widehat{p}$. For any reward $r\in
    \R_{p,\pie}$, the reward
    $\widehat{r}$ such that, for all $(s,a,h)\in\SAH$:
    \begin{align*}
        \widehat{r}_h(s,a)=r_h(s,a)+\int\limits_{s'\in\S}p_h(s'|s,a)V_{h+1}^{*}(s';p,r)
        -\int\limits_{s'\in\S}\widehat{p}_h(s'|s,a)V_{h+1}^{*}(s';\widehat{p},\widehat{r}),
    \end{align*}
    belongs to $\widehat{\R}$.
    Moreover, observe that:
    $Q^{*}_h(s,a;p,r)=Q^{*}_h(s,a;\widehat{p},\widehat{r})$
    at all $(s,a,h)\in\SAH$.
    In addition, for any reward $\widehat{r}\in\widehat{\R}$, it is possible to
    construct a reward $r$ in analogous manner so that $r\in\R_{p,\pie}$, and
    such that $Q^{*}_h(s,a;p,r)=Q^{*}_h(s,a;\widehat{p},\widehat{r})$
    at all $(s,a,h)\in\SAH$.
\end{lemma}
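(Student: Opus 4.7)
The plan is to establish the identity $Q^*_h(s,a;p,r)=Q^*_h(s,a;\widehat{p},\widehat{r})$ for all $(s,a,h)$ by backward induction, and then use it to transfer the expert-optimality constraints defining $\R_{p,\pie}$ into those defining $\widehat{\R}$. First I would observe that the recursive definition of $\widehat{r}$ is well-posed: at stage $H$ both $V^*_{H+1}(\,\cdot\,;p,r)$ and $V^*_{H+1}(\,\cdot\,;\widehat{p},\widehat{r})$ are identically zero, so $\widehat{r}_H \equiv r_H$; given $\widehat{r}_{h+1},\dotsc,\widehat{r}_H$ one computes $V^*_{h+1}(\,\cdot\,;\widehat{p},\widehat{r})$ via the Bellman optimality operator and then defines $\widehat{r}_h$ via the displayed formula. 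This backward construction terminates after $H$ steps.

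Next, I would prove by backward induction on $h$ that $Q^*_h(\cdot,\cdot;p,r)=Q^*_h(\cdot,\cdot;\widehat{p},\widehat{r})$ everywhere on $\SA$. The base case $h=H+1$ is trivial. For the inductive step, by the Bellman optimality equation applied to $(\widehat{p},\widehat{r})$ and by the very definition of $\widehat{r}_h$,
\begin{align*}
Q^*_h(s,a;\widehat{p},\widehat{r})
&= \widehat{r}_h(s,a)+\int_{\S}\widehat{p}_h(s'|s,a)V^*_{h+1}(s';\widehat{p},\widehat{r})\\
&= r_h(s,a)+\int_{\S}p_h(s'|s,a)V^*_{h+1}(s';p,r)
=Q^*_h(s,a;p,r),
\end{align*}
where the cancellation of the two $\widehat{p}$-integrals is immediate and the last equality is the Bellman optimality equation for $(p,r)$. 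Note that this derivation does not even require the inductive hypothesis about $V^*_{h+1}$, because the construction of $\widehat{r}_h$ already encodes the value functions as integrals; the induction is needed only to ensure that the $V^*_{h+1}(\,\cdot\,;\widehat{p},\widehat{r})$ appearing in the definition coincides, after the fact, with the one generated by the Bellman recursion applied to $\widehat{r}$.

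Having established the $Q^*$-identity, membership $\widehat{r}\in\widehat{\R}$ follows immediately. By the explicit form of the feasible set (Lemma \ref{lemma: fs explicit}), the hypothesis $r\in\R_{p,\pie}$ means that for each $h$ there exists $\overline{\S}\subseteq\S^{p,\pie}_h$ with $d^{p,\pie}_h(\overline{\S})=0$ such that for every $s\notin\overline{\S}$, $a^E\in\A_h^E(s)$, $a\in\A$, $\E_{a'\sim\pi^E_h(\cdot|s)}Q^*_h(s,a';p,r)\ge Q^*_h(s,a;p,r)$. Substituting $Q^*_h(\cdot;\widehat{p},\widehat{r})$ for $Q^*_h(\cdot;p,r)$ gives exactly the defining condition of $\widehat{\R}$ used in Algorithm~\ref{alg: known pie} with the same support set (passed as an input, per the lemma's hypothesis). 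The converse direction is obtained by swapping the roles of $(p,r)$ and $(\widehat{p},\widehat{r})$: given $\widehat{r}\in\widehat{\R}$ define $r_h(s,a)=\widehat{r}_h(s,a)-\int\widehat{p}_h(s'|s,a)V^*_{h+1}(s';\widehat{p},\widehat{r})+\int p_h(s'|s,a)V^*_{h+1}(s';p,r)$ via the analogous backward recursion, and repeat the argument.

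The main obstacle is purely notational rather than mathematical: one must be careful that the recursive definition of $\widehat{r}$ actually yields the value functions $V^*_{h+1}(\,\cdot\,;\widehat{p},\widehat{r})$ appearing in its own right-hand side (i.e.\ that the "fixed point" in the definition is consistent). The backward construction resolves this unambiguously, since each $\widehat{r}_h$ depends only on quantities at stages $\ge h+1$ that have already been defined. A minor point worth checking is boundedness: the shaping term in $\widehat{r}$ can be as large as $2H$ in magnitude, so strictly speaking $\widehat{r}$ need not belong to $\mathfrak{R}=[-1,1]^{\SAH}$; this is why the surrounding analysis is carried out with the unbounded-parameter relaxation noted in Appendix~\ref{section: missing proofs}.
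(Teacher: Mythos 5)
Your proof is correct and takes essentially the same route as the paper's: rearranging the defining identity of $\widehat{r}$ and applying the Bellman optimality equation gives $Q^{*}_h(s,a;p,r)=Q^{*}_h(s,a;\widehat{p},\widehat{r})$ for all $(s,a,h)$, after which the constraints from Lemma~\ref{lemma: fs explicit} transfer directly (with the same zero-measure sets $\Z$), and the converse is symmetric. Your extra remarks on the well-posedness of the backward recursion and on the unbounded-reward relaxation are sound refinements that the paper leaves implicit.
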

\begin{proof}
    First, we consider the case when $\S$ is finite.
    By rearranging the terms in the definition of $\widehat{r}$, we see that, for
    all $(s,a,h)\in\SAH$:
    \begin{align*}
        \widehat{r}_h(s,a)+\sum\limits_{s'\in\S}\widehat{p}_h(s'|s,a)
        V_{h+1}^{*}(s';\widehat{p},\widehat{r})
        =r_h(s,a)+\sum\limits_{s'\in\S}p_h(s'|s,a)V_{h+1}^{*}(s';p,r),
    \end{align*}
    which, by the Bellman optimality equation, entails that
    $Q^{*}_h(s,a;p,r)=Q^{*}_h(s,a;\widehat{p},\widehat{r})$.

    We recall that $\widehat{\R}$ is defined as:
    \begin{align*}
        \widehat{\R}=\big\{\widehat{r}\in\mathfrak{R}\,\Big|\,\forall (s,h)\in\S^{p,\pie},\forall a\in\A:\;
\E\limits_{a'\sim\pi^E_h(\cdot|s)}Q^*_h(s,a';\widehat{p},\widehat{r})
\ge Q^*_h(s,a;\widehat{p},\widehat{r})\big\},
    \end{align*}
    while thanks to Lemma \ref{lemma: fs explicit}, the feasible set
    $\R_{p,\pie}$ can be written as:
    \begin{align*}
        \R_{p,\pie}=\big\{r\in\mathfrak{R}\,\Big|\,\forall (s,h)\in\S^{p,\pie},\forall a\in\A:\;
\E\limits_{a'\sim\pi^E_h(\cdot|s)}Q^*_h(s,a';p,{r})
\ge Q^*_h(s,a;p,{r})\big\}.
    \end{align*}
    It is clear that, if $Q^{*}_h(s,a;p,r)=Q^{*}_h(s,a;\widehat{p},\widehat{r})$
    for all $(s,a,h)\in\SAH$, then since $r\in\R_{p,\pie}$ we necessarily have
    $\widehat{r}\in\widehat{\R}$.
    
    The proof of the opposite case is completely analogous.

    In the case with infinite $\S$, notice that both the feasible set
    $\R_{p,\pie}$ in Lemma \ref{lemma: fs explicit} and the definition of
    $\widehat{\R}$ in Algorithm \ref{alg: known pie} make use of the same sets
    $\Z$. Thus, we simply make the choice of reward with same $\Z$ and proceed
    like in the finite case.
\end{proof}

\begin{lemma}[Covering Number of Class $\V$]\label{lemma: covering number class V}
    Let $\V$ be defined as in Equation \eqref{eq: definition class V}, and define
    distance dist in $\V$ as $\text{dist}(V,V')\coloneqq
    \sup_{s\in\S}|V(s)-V'(s)|$. Then, the $\epsilon$-covering number
    $|\N(\epsilon;\V,\text{dist})|$ of set $\V$ with distance dist can be bounded as:
    \begin{align*}
        \log |\N(\epsilon;\V,\text{dist})|\le d\log\Big(1+\frac{4H\sqrt{d}}{\epsilon}\Big).
    \end{align*}
\end{lemma}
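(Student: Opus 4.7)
The plan is to reduce the covering problem for the function class $\V$ to a covering problem for the Euclidean ball of parameters, exploiting the fact that each $V \in \V$ is determined by its weight vector $w$ with $\|w\|_2 \le 2H\sqrt{d}$.

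First, I would establish a Lipschitz-type bound between the distance on $\V$ and the Euclidean distance on the parameters. Given two functions $V_w(\cdot) = \max_{a \in \A} \phi(\cdot,a)^\intercal w$ and $V_{w'}(\cdot) = \max_{a \in \A} \phi(\cdot,a)^\intercal w'$ in $\V$, the elementary inequality $|\max_a f(a) - \max_a g(a)| \le \max_a |f(a) - g(a)|$ together with Cauchy--Schwarz and the Linear MDP assumption $\|\phi(s,a)\|_2 \le 1$ gives
\begin{align*}
\text{dist}(V_w, V_{w'}) = \sup_{s \in \S}\bigl|V_w(s) - V_{w'}(s)\bigr| \le \sup_{s \in \S}\max_{a \in \A}\bigl|\phi(s,a)^\intercal(w - w')\bigr| \le \|w - w'\|_2.
\end{align*}
Therefore, any $\epsilon$-cover of the ball $B_d(2H\sqrt{d}) \coloneqq \{w \in \RR^d : \|w\|_2 \le 2H\sqrt{d}\}$ with respect to the Euclidean norm induces an $\epsilon$-cover of $\V$ with respect to dist, by mapping each covering point $w$ to the function $V_w$.

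Then I would invoke the standard covering-number bound for Euclidean balls: for the ball of radius $R$ in $\RR^d$, the $\epsilon$-covering number is at most $(1 + 2R/\epsilon)^d$. Applied with $R = 2H\sqrt{d}$, this yields $|\N(\epsilon; B_d(2H\sqrt{d}), \|\cdot\|_2)| \le (1 + 4H\sqrt{d}/\epsilon)^d$, and taking logarithms gives exactly the stated bound.

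There is no serious obstacle here: the only subtlety is that the map $w \mapsto V_w$ need not be injective (different weight vectors may yield the same function), but the argument only requires the map to be $1$-Lipschitz from $(B_d(2H\sqrt{d}), \|\cdot\|_2)$ to $(\V, \text{dist})$, which is what the first display establishes.
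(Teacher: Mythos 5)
Your proposal is correct and follows essentially the same route as the paper's proof: bound $\text{dist}(V_w,V_{w'})\le\|w-w'\|_2$ via the max-difference inequality, Cauchy--Schwarz, and $\|\phi(s,a)\|_2\le 1$, then transfer the standard $(1+2R/\epsilon)^d$ covering bound for the parameter ball of radius $R=2H\sqrt{d}$. Your remark on non-injectivity of $w\mapsto V_w$ is a harmless extra observation; nothing further is needed.
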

\begin{proof}
    The proof follows that of Lemma D.6 of \cite{jin2020provablyefficient}, but
    is simpler because of the different form of $\V$.

    For any $V_1,V_2\in\V$ parametrized by $w_1,w_2$, we write:
    \begin{align*}
        \text{dist}(V_1,V_2)&=\sup\limits_{s\in\S}\Big|
        \max\limits_{a\in\A}\dotp{\phi(s,a),w_1}
        -\max\limits_{a\in\A}\dotp{\phi(s,a),w_2}
        \Big|\\
        &\markref{(1)}{\le}
        \popblue{\max\limits_{(s,a)\in\SA}}\Big|
        \phi(s,a)^\intercal(w_1-w_2)
        \Big|\\
        &\markref{(2)}{\le}
        \popblue{\sup\limits_{\phi:\|\phi\|_2\le1}}\Big|
        \popblue{\phi}^\intercal(w_1-w_2)
        \Big|\\
        &\markref{(3)}{=}
        \popblue{\|}w_1-w_2\popblue{\|_2},
    \end{align*}
    where at (1) we have used the common bound that the absolute difference of
    maxima is upper bounded by the maximum of the absolute difference of the two
    functions, at (2) we have used the fact that the feature map is always
    bounded by 1 in 2-norm, and at (3) we have recognized the dual norm of the
    2-norm, i.e., itself.

    If we construct an $\epsilon$-cover of $\W\coloneqq\{w\in \RR^d\,|\,\|w\|_2\le
    2H\sqrt{d}\}$ w.r.t. the 2-norm, we get a covering number bounded by
    $|\N(\epsilon;\W,\|\cdot\|_2)|\le (1+4H\sqrt{d}/\epsilon)^d$. Clearly, this
    value upper bounds the covering number of class $\V$ and the result follows.
\end{proof}

\begin{lemma}[Lemma D.4 of \cite{jin2020provablyefficient}]\label{lemma: lemma
d4 jin}
Let $\{s_k\}_{k=1}^\infty$ be a stochastic process on state space $\S$ with
corresponding filtration $\{\F_k\}_{k=0}^\infty$. Let $\{\phi_k\}_{k=0}^\infty$
be an $ \RR^d$-valued stochastic process where $\phi_k\in\F_{k-1}$, and
$\|\phi_k\|_2\le 1$. Let $\Lambda_\tau = I+\sum_{k=1}^\tau
\phi_k\phi_k^\intercal$. Then, for any $\delta>0$, with probability at least
$1-\delta$, for all $\tau\ge0$, and any $V\in\V$ so that $\sup_{s\in\S}|V(s)|\le
H$, we have:
\begin{align*}
    \bigg\|\sum\limits_{k=1}^\tau \phi_k \Big(
    V(s_k)-\E\big[V(s_k)|\F_{k-1}\big]    
    \Big)\bigg\|_{\Lambda_\tau^{-1}}
    \le 
    4H^2 \Big[\frac{d}{2}\log(1+\tau)+\log\frac{\N_\epsilon}{\delta}\Big]
    +8\tau^2\epsilon^2,
\end{align*}
where $\N_\epsilon$ is the $\epsilon$-covering number of $\V$ with respect to
the distance $\text{dist}(V,V')\coloneqq \sup_{s\in\S}|V(s)-V'(s)|$.
\end{lemma}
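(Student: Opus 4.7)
The plan is to prove this by combining a self-normalized martingale concentration inequality for a fixed $V$ with a covering argument over the class $\V$, and then handling the approximation error between an arbitrary $V\in\V$ and its closest net point. This is the standard template for uniform concentration in Linear MDPs introduced in \cite{jin2020provablyefficient}; my sketch follows that template.

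First, for a fixed $V:\S\to[-H,H]$, consider the martingale difference sequence $\xi_k\coloneqq V(s_k)-\E[V(s_k)|\F_{k-1}]$, which is $\F_k$-measurable, mean-zero given $\F_{k-1}$, and bounded by $2H$ (hence $2H$-sub-Gaussian). Since $\phi_k\in\F_{k-1}$, I can apply the self-normalized concentration for vector-valued martingales (e.g., Theorem 1 of Abbasi-Yadkori, P\'al, Szepesv\'ari, 2011) to obtain, with probability at least $1-\delta$ and simultaneously for all $\tau\ge0$,
\begin{align*}
\Big\|\sum_{k=1}^\tau \phi_k\,\xi_k\Big\|_{\Lambda_\tau^{-1}}^2 \le 4H^2\Big[\tfrac{1}{2}\log\det(\Lambda_\tau)+\log\tfrac{1}{\delta}\Big].
\end{align*}
Using $\|\phi_k\|_2\le1$ and the determinant-trace inequality, $\det(\Lambda_\tau)\le(1+\tau)^d$, which yields the $\tfrac{d}{2}\log(1+\tau)$ term.

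Next, I upgrade this pointwise bound to a uniform bound over $\V$ via an $\epsilon$-net. Let $\mathcal{C}_\epsilon$ be a minimal cover of $\V$ of size $\N_\epsilon$ with respect to $\mathrm{dist}(V,V')=\sup_s|V(s)-V'(s)|$. Applying the fixed-$V$ bound with $\delta\gets\delta/\N_\epsilon$ and taking a union bound over $V'\in\mathcal{C}_\epsilon$ gives the claimed $4H^2\big[\tfrac{d}{2}\log(1+\tau)+\log\tfrac{\N_\epsilon}{\delta}\big]$ term uniformly over the net. For an arbitrary $V\in\V$, pick $V'\in\mathcal{C}_\epsilon$ with $\sup_s|V(s)-V'(s)|\le\epsilon$ and decompose the sum as $\sum_k\phi_k(V(s_k)-\E[V(s_k)|\F_{k-1}])=\sum_k\phi_k(V'(s_k)-\E[V'(s_k)|\F_{k-1}])+\sum_k\phi_k(\Delta(s_k)-\E[\Delta(s_k)|\F_{k-1}])$ where $\Delta\coloneqq V-V'$.

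The residual term is bounded crudely: $\|\phi_k\|_{\Lambda_\tau^{-1}}\le\|\phi_k\|_2\le1$ and $|\Delta(s_k)-\E[\Delta(s_k)|\F_{k-1}]|\le 2\epsilon$, so by triangle inequality in the $\Lambda_\tau^{-1}$-norm and squaring, the residual contributes at most $(2\tau\epsilon)^2=4\tau^2\epsilon^2$; adding this to the net bound and absorbing constants yields the $+8\tau^2\epsilon^2$ slack. The main delicate point, and the only place where care is needed, is precisely this approximation step: one must work in the squared $\Lambda_\tau^{-1}$-norm and use $\|a+b\|^2\le 2\|a\|^2+2\|b\|^2$ so that the contribution of the net concentration and the residual enter additively with the correct constants. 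Everything else is a mechanical combination of the self-normalized inequality, the bound $\det(\Lambda_\tau)\le(1+\tau)^d$, and the union bound over $\mathcal{C}_\epsilon$.
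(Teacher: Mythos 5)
Your outline is the same argument as in the source this lemma is quoted from (the paper itself gives no proof, it simply imports Lemma D.4 of Jin et al.): a self-normalized martingale bound for a fixed $V$, a union bound over an $\epsilon$-net of $\V$, and a crude bound on the residual term, with the decomposition $\|a+b\|^2_{\Lambda_\tau^{-1}}\le 2\|a\|^2_{\Lambda_\tau^{-1}}+2\|b\|^2_{\Lambda_\tau^{-1}}$. (You are also right to read the left-hand side as the \emph{squared} $\Lambda_\tau^{-1}$-norm; that is how the lemma is stated in the source and how it is used later in the paper, where the square root of the right-hand side is taken.)

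The one place where your accounting does not close is the constants, and as written it would not yield the stated bound. You declare $\xi_k$ to be $2H$-sub-Gaussian, which in the Abbasi-Yadkori bound gives $2\sigma^2=8H^2$ in front of the log term, not the $4H^2$ you write for the fixed-$V$ step; and you then also apply the factor-$2$ decomposition, which doubles whatever constant the net term carries. So your chain of inequalities produces at best $8H^2\big[\tfrac{d}{2}\log(1+\tau)+\log\tfrac{\N_\epsilon}{\delta}\big]+8\tau^2\epsilon^2$, exceeding the claimed $4H^2[\cdot]+8\tau^2\epsilon^2$. The fix is the sharper sub-Gaussian constant: conditioned on $\F_{k-1}$, $V(s_k)$ takes values in an interval of length $2H$, so $\xi_k=V(s_k)-\E[V(s_k)\,|\,\F_{k-1}]$ is conditionally $H$-sub-Gaussian, and the fixed-$V$ self-normalized bound reads $2H^2\big[\tfrac12\log\det\Lambda_\tau+\log\tfrac1\delta\big]$. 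After the union bound over the net and the factor $2$ from the decomposition, the net term becomes exactly $4H^2\big[\tfrac{d}{2}\log(1+\tau)+\log\tfrac{\N_\epsilon}{\delta}\big]$, while your residual estimate $\big(\sum_{k\le\tau}2\epsilon\|\phi_k\|_{\Lambda_\tau^{-1}}\big)^2\le 4\tau^2\epsilon^2$ doubled gives $8\tau^2\epsilon^2$, matching the statement. With that single correction your proof is complete and identical in structure to the original.
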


\begin{lemma}[Lemma 3.2 of \cite{wang2020onrewardfree}]\label{lemma: bound j star}
    With probability $1-\delta/2$, for the function $u'$ defined as
    $u_h'(s,a)\coloneqq\min\big\{\beta'\|\phi(s,a)\|_{\Lambda_h^{-1}},H
        \big\}$, with $\beta'\coloneqq c' dH\sqrt{\log(dH\delta^{-1}\epsilon^{-1})}$, we have:
    \begin{align*}
        J^*(u'/H)\le c
        \sqrt{\frac{d^3 H^4\log\frac{d\tau H}{\delta}}{\tau}},
    \end{align*}
    for some absolute constant $c>0$.
\end{lemma}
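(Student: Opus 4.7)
The plan is to adapt the optimism-in-the-face-of-uncertainty machinery for linear MDPs to the reward-free setting, in which the reward being optimized is the uncertainty bonus $u'$ itself. Let $\pi^1,\dots,\pi^\tau$ be the exploration policies played by Algorithm~1 of \cite{wang2020onrewardfree}, and let $\widetilde{V}^k_1$ denote the optimistic value function computed at episode $k$ with respect to the bonus $u'$ and the running design matrices $\Lambda_h$. First, using a uniform concentration bound over the linear value-function class $\V$ (in the spirit of Lemma~\ref{lemma: lemma d4 jin} together with the cover of Lemma~\ref{lemma: covering number class V}), I would argue that the choice $\beta'=\widetilde{\mathcal{O}}(dH)$ makes $\widetilde{V}^k_1$ optimistic, namely $\widetilde{V}^k_1(s)\ge V^*_1(s;p,u')$ for every reachable $s$ simultaneously over all episodes, with probability at least $1-\delta/2$. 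Summing over $k$ yields
\begin{equation*}
\tau\, J^*(u';p)\;\le\;\sum_{k=1}^\tau \mathbb{E}_{s\sim d_0}\bigl[\widetilde{V}^k_1(s)\bigr].
\end{equation*}

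Next, I would unfold each $\widetilde{V}^k_1$ along the trajectory actually played by $\pi^k$, turning the right-hand side into the empirical cumulative bonus
\begin{equation*}
\sum_{k=1}^\tau\sum_{h=1}^H \min\bigl\{\beta'\|\phi(s_h^k,a_h^k)\|_{\Lambda_h^{-1}},H\bigr\}
\end{equation*}
plus a martingale error between expected and sampled visitations that can be controlled by Azuma–Hoeffding, contributing an $\widetilde{\mathcal{O}}(H\sqrt{\tau\log(1/\delta)})$ slack absorbed in the final rate. The elliptical-potential lemma bounds $\sum_k \|\phi(s_h^k,a_h^k)\|_{\Lambda_h^{-1}}^2 \le 2d\log(1+\tau/d)$ at each stage, so Cauchy--Schwarz gives a stage-$h$ contribution of order $\beta'\sqrt{d\tau\log\tau}$, and summing over $h$ yields $\widetilde{\mathcal{O}}(\beta' H\sqrt{d\tau})$.

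Dividing by $\tau$ and by the extra factor of $H$ that arises because the optimistic $\widetilde{V}^k_1$ bundles one additional horizon of UCB inflation on top of the reward $u'$, and then substituting $\beta'=\widetilde{\mathcal{O}}(dH)$, I recover the claimed $J^*(u'/H;p)\le\widetilde{\mathcal{O}}(\sqrt{d^3H^4\log(dH\tau/\delta)/\tau})$ rate once all polylogarithmic terms are collected. The main obstacle is the uniform optimism step: $\beta'$ must dominate the estimation error $|(\widehat{\mathbb{P}}_h-\mathbb{P}_h)V|$ \emph{simultaneously} over the entire class $\V$ of linear value functions encountered during planning (not merely for a fixed $V$), which forces a chained covering argument and a union bound over the $H$ stages; this is the only place where the precise polynomial-in-$d,H$ and polylogarithmic dependence of $\beta'$, and hence of the final rate, is pinned down.
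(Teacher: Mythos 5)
This lemma is not proved in the paper at all: it is imported verbatim as Lemma~3.2 of \cite{wang2020onrewardfree} and used as a black box in the analysis of Algorithm~\ref{alg: known pie} when Algorithm~1 of \cite{wang2020onrewardfree} is the exploration subroutine, so there is no in-paper proof to compare against. Your sketch essentially reconstructs the original argument of \cite{wang2020onrewardfree}: optimistic LSVI with the uncertainty as (normalized) exploration reward, a regret-style bound on the sum of optimistic values via realized bonuses, the elliptical-potential lemma plus an Azuma martingale term, then division by $\tau$. Two steps need more care than you give them. First, the bonus $u'$ in the statement is built from the \emph{final} matrices $\Lambda_h$ (after all $\tau$ episodes), while the optimistic value $\widetilde{V}^k_1$ at episode $k$ is built from the running matrices $\Lambda^k_h$; transferring optimism from episode $k$ to the final bonus requires the monotonicity $\Lambda^k_h \preceq \Lambda_h$, hence $\|\phi(s,a)\|_{\Lambda_h^{-1}}\le\|\phi(s,a)\|_{(\Lambda^k_h)^{-1}}$ and $J^*(u'/H)\le J^*(u^k/H)\le \widetilde{V}^k_1(s_1)$, which you only gesture at. Second, the factor $H$ is not an ``extra horizon of UCB inflation'' to be divided out at the end: in \cite{wang2020onrewardfree} the exploration reward is already the normalized $u^k/H$ (so optimistic values stay of order $H$), and the statement concerns $J^*(u'/H)$ directly by linearity of $J^*$ in the reward. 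Indeed, your own bookkeeping gives $\beta' H\sqrt{d\tau}\approx\sqrt{d^3H^4\tau}$ for the cumulative bonuses, so dividing by $\tau$ alone already yields the claimed $\sqrt{d^3H^4\log(dH\tau/\delta)/\tau}$ rate; dividing by a further $H$ as you propose would assert a bound a full factor of $H$ stronger than the lemma, which signals that the normalization is being double-counted somewhere in your unrolling. With those two points repaired, your outline coincides with the standard proof of the cited result.
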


\section{Additional Insights on Compatibility}\label{section: more on compatibility}

In this appendix, we collect and describe additional insights to the notion of
\emph{rewards compatibility} introduced in Section \ref{section: rewards
compatibility}. The appendix is organized in the following manner: Appendix
\ref{section: visual explanation} provides a visual explanation to the notion of
rewards compatibility, in Appendix \ref{section: multiplicative alternative} we
analyse a multiplicative alternative to the definition of rewards compatibility,
and Appendix \ref{section: guarantees forward rl} discusses the conditions under
which a learned reward can be used for ``forward'' RL, by comparing rewards with
small (non)compatibility with rewards learned in previous works.

\subsection{A Visual Explanation for Rewards Compatibility}\label{section: visual explanation}

\begin{figure*}
    \centering
    \begin{subfigure}[b]{0.475\textwidth}
        \centering
        \begin{tikzpicture}
            \filldraw [black] (0,0) circle (2pt);
            \draw (0,0) -- (2,0);
            \draw[ultra thick] (0,0) -- (-2,0);
            \draw (0,0) -- (0,2);
            \draw (0,0) -- (0,-2);
            \draw (0,0) -- (1.41,1.41);
            \draw (0,0) -- (1.41,-1.41);
            \draw (0,0) -- (-1.41,1.41);
            \draw (0,0) -- (-1.41,-1.41);
            \draw (0,0) -- (1.85,0.76);
            \draw (0,0) -- (-1.85,0.76);
            \draw[ultra thick] (0,0) -- (1.85,-0.76);
            \draw (0,0) -- (-1.85,-0.76);
            \draw (0,0) -- (0.76,1.85);
            \draw (0,0) -- (0.76,-1.85);
            \draw (0,0) -- (-0.76,1.85);
            \draw (0,0) -- (-0.76,-1.85);
            \draw (0,2) node[anchor=south] {$d^{p,\pi_1}$};
            \draw (0.76,1.85) node[anchor=south west] {$d^{p,\pi_2}$};
            \draw (1.41,1.41) node[anchor=south west] {$\dotsc$};
            \draw (1.85,-0.76) node[anchor=west] {$\bf d^{p,\pie}$};
            \draw (-2,0) node[anchor=east] {$\bf d^{p,\overline{\pi}}$};
        \end{tikzpicture}
    \end{subfigure}
    \hfill
    \begin{subfigure}[b]{0.475\textwidth}  
        \centering 
        \begin{tikzpicture}
            \filldraw[draw=white, fill=red!20] (0,0) -- (1.41,-1.41) -- (2,0) -- cycle;
            \filldraw [black] (0,0) circle (2pt);
            \draw (0,0) -- (2,0);
            \draw (0,0) -- (-2,0);
            \draw (0,0) -- (0,2);
            \draw (0,0) -- (0,-2);
            \draw (0,0) -- (1.41,1.41);
            \draw (0,0) -- (1.41,-1.41);
            \draw (0,0) -- (-1.41,1.41);
            \draw (0,0) -- (-1.41,-1.41);
            \draw (0,0) -- (1.85,0.76);
            \draw (0,0) -- (-1.85,0.76);
            \draw[ultra thick] (0,0) -- (1.85,-0.76);
            \draw (0,0) -- (-1.85,-0.76);
            \draw (0,0) -- (0.76,1.85);
            \draw (0,0) -- (0.76,-1.85);
            \draw (0,0) -- (-0.76,1.85);
            \draw (0,0) -- (-0.76,-1.85);
            \draw (1.85,-0.76) node[anchor=west] {$\bf d^{p,\pie}$};
        \end{tikzpicture}
    \end{subfigure}
    \caption[]
    { In this figure, the point at the center represents the initial state
    $s_0= d_0$ of the environment $\M$, and each ray starting from it
    represents the occupancy measure $d^{p,\pi}$ of some policy $\pi$. The
    figure aims to provide the intuition that policies with rays close to each
    other induce similar visit distributions (e.g., both point towards the same
    direction in some grid-world), and policies with rays far away from each
    other point toward very different directions (i.e., they have different
    occupancy measures). The red area in the right denotes the set of directions
    (occupancy measures $d^{p,\pi}$ for some $\pi$) that are close in
    $\|\cdot\|_1$ norm to the direction of the expert $d^{p,\pie}$.
    } 
    \label{fig: ill-posedness}
\end{figure*}
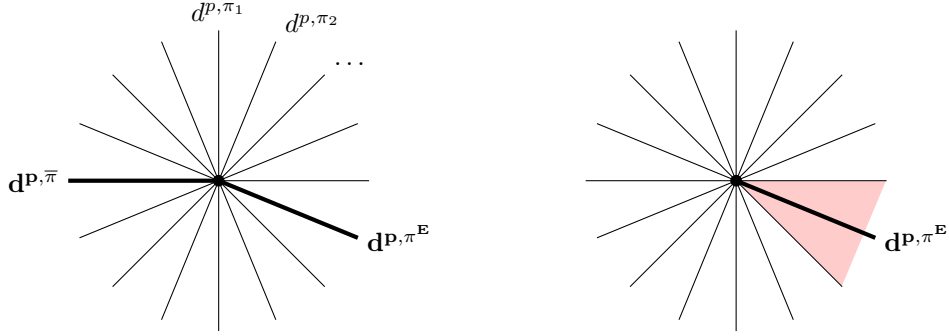

In this appendix, we aim to provide a visual intuition to the notion of
rewards compatibility. For this reason, the reader should keep in mind
Figure \ref{fig: ill-posedness}.

As explained in Section \ref{section: rewards compatibility}, even in the limit of infinite samples, i.e., even
if we know $\M=\tuple{\S,\A,H, d_0,p}\cup\{\pie\}$ exactly, and even if we
assume that the expert is exactly optimal, i.e.,
$J^*(r^E;p)-J^{\pie}(r^E;p)=0$ (where $r^E$ is the true reward optimized by the
expert),
then we still do not have idea of how other
policies perform. Expert demonstrations only provide information about the
performance of a \textit{single} policy, $\pie$, w.r.t. to the reference $J^*(r^E;p)$
under the unknown $r^E$, i.e., demonstrations say that $\pie$ in $r^E$ performs
as good as $J^*(r^E;p)$. But what about other policies? Demonstrations provide no information.

To see this, consider Figure \ref{fig: ill-posedness}, in which each line
exemplifies the visitation distribution induced by some policy $\pi\in\Pi$, and the
point in the middle represents the starting state $s_0= d_0$. Intuitively,
observing $d^{p,\pie}$ along with knowing that $J^{\pie}(r^E;p)$ is good
(i.e., because of expert demonstrations), does not tell us anything about the
distribution $d^{p,\overline{\pi}}$ induced by some other policy
$\overline{\pi}$ potentially arbitrarily different from $d^{p,\pie}$.
Indeed, it might be the case that $J^{\overline{\pi}}(r^E;p)$ is acceptable, or
that it is as good as $J^{\pie}(r^E;p)$, or that it is very bad. We cannot
know from demonstrations only.

For this reason, if we consider the set of rewards with 0-(non)compatibility,
i.e., the feasible reward set, we notice that it contains
the rewards $r$ that make $\overline{\pi}$ optimal
$J^{\overline{\pi}}(r;p)=J^*(r;p)$, but also the rewards $r'$ that make
$\overline{\pi}$ nearly optimal
$J^{\overline{\pi}}(r';p)\approx J^*(r';p)$, and also the rewards 
$r''$ that make
$\overline{\pi}$ a very bad-performing policy
$J^{\overline{\pi}}(r'';p)\ll J^*(r'';p)$.
Indeed, as long as both $r,r',r''$ make the direction pointed by $d^{p,\pie}$
in Figure \ref{fig: ill-posedness} a good direction, then they are in accordance
with the constraint imposed by the demonstrations. The additional Degrees of
Freedom (DoF) provided by policies beyond $\pie$ (e.g.,
$\overline{\pi},\dotsc$) permit the ill-posedness of IRL.

We said that expert demonstrations provide information just about the
performance of a single policy, $\pie$. However, to be precise, in the context
of IRL, this is not correct. Indeed, differently from the mere learning from
demonstrations setting, in which we just assume that $\pie$ is a very
good-performing policy, in IRL we assume that the underlying problem is an MDP,
i.e., that the expert agent is optimizing a reward function $r^E$.\footnote{When
this assumption does not hold, we incur in model misspecification
\cite{skalse2023misspecification,shah2019feasibility}.} This additional structure
(i.e., that the underlying environment is indeed an MDP), makes sure that the
performances of various directions $d^{p,\pi}$ in Figure \ref{fig:
ill-posedness} are measured through a dot product with a fixed reward function
$r$, i.e.:
\begin{align*}
    J^\pi(r;p)=\sum\limits_{h\in\dsb{H}}\dotp{d^{p,\pi}_h,r_h}.
\end{align*}
For this reason, we have the guarantee that the directions in the red area
surrounding $d^{p,\pie}$ are almost as good as $d^{p,\pie}$. Indeed, for all
policies $\pi$ such that $\sum_{h\in
\dsb{H}}\|d^{p,\pi}_h-d^{p,\pie}_h\|_1\le\epsilon$, i.e., for all policies
$\epsilon$-close to $\pie$ in 1-norm, we can write:
\begin{align*}
    |J^{\pie}(r^E;p)-J^{\pi}(r^E;p)|=
    \Big|\sum\limits_{h\in\dsb{H}}\dotp{d^{p,\pie}_h-d^{p,\pi}_h,r^E_h}\Big|
    \le
    \sum\limits_{h\in\dsb{H}}\|d^{p,\pi}_h-d^{p,\pie}_h\|_1
    \le\epsilon.
\end{align*}
In other words, policies $\pi$ and $\pie$ have similar performances.

However, it should be remarked that, since we aim to recover the rewards
explaining the expert's preferences, then we are guaranteed that policies close
in 1-norm perform similarly under any reward function (by definition of 1-norm),
and so we do not risk to incur in the error of representing $d^{p,\pie}$ and a
direction $d^{p,\pi}$ inside the red area of Figure \ref{fig: ill-posedness}
with very different performances.
 
\subsection{A Multiplicative Compatibility}\label{section: multiplicative alternative}

In Section \ref{section: rewards compatibility}, we have defined an
\emph{additive} notion of (non)compatibility, based on the difference of
performance between $\pie$ and $\pi^*$ (the optimal policy). Here, we analyze a
\emph{multiplicative} notion of (non)compatibility, based on the ratio of the
performances.\footnote{E.g., see Theorem 7.2.7 in \cite{puterman1994markov},
which is inspired by \cite{ornstein1969existence}. }

We make the following observation. Any reward $r\in\mathfrak{R}$ induces, in the
considered environment $p$, an ordering in the space of policies $\Pi$, based on
the performance $J^{\pi}(r;p)$ of each policy $\pi\in\Pi$. It is easy to notice
that for any scaling and translation parameters $\alpha \in \RR_{>
0},\beta\in \RR$, the reward constructed as $r'(\cdot,\cdot)=\alpha
r(\cdot,\cdot)+\beta$ induces the same ordering as $r$ in the space of
policies.\footnote{Indeed, simply observe that, for any $\pi\in\Pi$:
$J^{\pi}(r';p)=J^{\pi}(\alpha r+\beta;p)=\alpha J^{\pi}(r;p)+\beta$.}

For this reason, it seems desirable to use a notion of (non)compatibility such
that rewards $r$ and $r'(\cdot,\cdot)=\alpha r(\cdot,\cdot)+\beta$ for some
$\alpha,\beta$, suffer from the same (non)compatibility w.r.t. some expert
policy $\pie$. However, observe that, for the notion of compatibility
$\overline{\C}$ in Definition \ref{def: reward compatibility}, we have that, for any
$r\in\mathfrak{R}$:
\begin{align*}
    &\overline{\C}_{p,\pie}(r+\beta)=\overline{\C}_{p,\pie}(r)\qquad\forall \beta\in \RR,\\
    &\overline{\C}_{p,\pie}(\alpha r)=\alpha\overline{\C}_{p,\pie}(r)\neq\overline{\C}_{p,\pie}(r)
    \qquad\forall\alpha\in \RR_{>0}.
\end{align*}
Simply put, for the additive notion of (non)compatibility $\overline{\C}$, the
scale ($\alpha$) of a reward matters, and rescaling the reward modifies the (non)compatibility.

To solve this issue, one might introduce a \emph{multiplicative} notion of
compatibility $\F$ (defined only for non-negative rewards and setting $\F_{p,\pie}(r)=0$ when
the denominator is 0):
\begin{align*}
    \F_{p,\pie}(r)\coloneqq\frac{J^{\pie}(r;p)}{J^{*}(r;p)}.
\end{align*}
Clearly, the larger $\F_{p,\pie}(r)$, the closer is the performance of $\pie$
to the optimal performance.
Observe that, for this definition,we have:
\begin{align*}
    & \F_{p,\pie}(\alpha r)=\F_{p,\pie}(r)\qquad\forall\alpha\in \RR_{>0}\\
    & \F_{p,\pie}(r+\beta)\neq \F_{p,\pie}(r)\qquad\forall \beta\in \RR,
\end{align*}
i.e., this definition does not care about the scaling $\alpha$ of the reward,
but it is sensitive to the actual position $\beta$ of that reward.

Therefore, both $\overline{\C}$ and $\F$ suffer from some ``rescaling'' issues.
Is it possible to devise a notion of compatibility, i.e., a measure of
suboptimality, for a policy, that is independent of both the scale $\alpha$ and
position $\beta$?
Formally, we are looking for a function (notion of distance) $f: \RR\times \RR\to
 \RR_{\ge 0}$ such that, for any $J_1,J_2\in \RR$:
\begin{align}\label{eq: what i want for compatibility}
    f(\alpha J_1+\beta,\alpha J_2+\beta)=f(J_1,J_2),
\end{align}
for all $\alpha\in \RR_{> 0},\beta\in \RR$. Unfortunately, this is
not possible, since it is easy to show that all the functions $f$ of this kind
are of the following type:
\begin{align*}
    \forall J_1,J_2\in \RR\times \RR:\;f(J_1,J_2)=
    \begin{cases}
        K_+\qquad\text{if }J_1>J_2\\
        K_0\qquad\text{if }J_1=J_2\\
        K_-\qquad\text{if }J_1<J_2
    \end{cases},
\end{align*}
for some reals $K_+,K_0,K_-$. In words, any function $f$ that satisfies Equation
\eqref{eq: what i want for compatibility} is able to express just an ordering
between inputs $J_1$ and $J_2$, but not an actual measure of sub-optimality/compatibility.

We conclude by stating that we prefer to use $\overline{\C}$ instead of $\F$ for
the following reasons:
\begin{itemize}
    \item First, most RL literature prefers the additive notion of suboptimality
    towards the multiplicative one.
    \item The additive notion of suboptimality is simpler to analyze w.r.t. the
    multiplicative one.
\end{itemize}
 
\subsection{When can a learned reward be used for ``forward'' RL?}\label{section: guarantees forward rl}

In this appendix, we exploit the intuition developed in Appendix \ref{section:
visual explanation} to discuss under which conditions we can exploit demonstrations
alone to recover a single reward that \emph{can be used for ``forward'' RL},
i.e., to recover a single reward $r$ for which we have the guarantee that any
$\epsilon$-optimal policy $\pi$ to $r$ in the true environment $p$ has similar
performance in the same environment $p$ under the true reward $r^E$, that is,
policy $\pi$ is an $f(\epsilon)$-optimal policy to $r^E$ in $p$, for some
function $f$.

Applications of IRL range from Apprenticeship Learning (AL), to reward design,
to interpretability of expert's preferences. Concerning AL, it
is common to ``use'' the reward $r$ learned through IRL to optimize our learning
agent. But what properties $r$ should satisfy in order to obtain performance
guarantees on our learning agent w.r.t. the true (unknown) $r^E$? We now list and analyze
various plausible requirements.
\begin{itemize}
\item First, we might ask that, being $\pie$ optimal w.r.t. $r^E$, then
$\pie\in\argmax_\pi J^\pi(r)$, i.e., that the expert policy $\pie$ is optimal
under the learned reward $r$. However, this requirement is not satisfactory for
the following reason. Reward $r$ might induce more than one optimal policy (e.g.,
it might induce both $\overline{\pi},\pie$ as optimal), and optimal policies
other than $\pie$ (e.g., $\overline{\pi}$) are not guaranteed to perform well
under $r^E$ (actually, $\overline{\pi}$ can be any policy in $\Pi$). Clearly,
this is not satisfactory. Observe that there are rewards in the feasible set
$\R_{p,\pie}$ for which multiple policies are optimal (thus, not all the
rewards in the feasible set are satisfactory).
\item We might additionally ask that $\pie$ is the unique optimal policy of
reward $r$ (similarly to what happens in entropy-regularized MDPs
\cite{ziebart2008maximum,Fu2017LearningRR}). However, this is not
satisfactory for the following reason. In practice, it is really difficult
(almost impossible) to compute the optimal policy of a given reward. Thus,
what is usually done in RL, is to settle for an $\epsilon$-optimal policy. Since
any policy can be $\epsilon$-optimal under reward $r$, then no guarantee we can
have for such policy w.r.t. $r^E$.
\item What if we ask that $\pie$ is at least $\epsilon$-optimal under $r$
(i.e., the requirement provided by $\epsilon$-(non)compatible rewards)?
Well, this is not satisfactory because optimal policies can be any, and
because there might be other $\epsilon$-optimal policies that can perform
arbitrarily bad under $r^E$.
\end{itemize}
All the three requirements described above on $r$ do not provide guarantees that
optimizing the considered reward $r$ provides a policy with satisfactory
performance w.r.t. the true $r^E$. However, as mentioned in Section
\ref{section: rewards compatibility} and in Appendix \ref{section: visual
explanation}, expert demonstrations \emph{do not provide any information about
the performance of policies other than $\pie$ under $r^E$}. 
\begin{remark}
    If we want to be sure that an $\epsilon$-optimal policy $\pi$ for the
    learned reward $r$ in $p$ is if $f(\epsilon)$-optimal for $r^E$ in $p$ (for
    some function $f$), then, clearly, we need that \emph{all the (at least)
    $\epsilon$-optimal policies under the learned $r$ have visitation distribution
    close to that of $\pie$ in 1-norm} (see Appendix \ref{section: visual explanation}).
\end{remark}
We stress that many IRL algorithms for AL, like max-margin
\cite{abbeel2004apprenticeship}, learn a reward function just as a mere
mathematical tool to compute a policy $\pi$ which is close in 1-norm
$\|d^{\pi}-d^{\pie}\|_1$ to $\pie$.

\paragraph{A remark about works on the feasible set.}
If we look at recent works about the feasible set
\cite{metelli2023towards,lazzati2024offline,zhao2023inverse}, it might seem that
these works are able to provide guarantees between $r,r^E$ under distance $d^{all}$
(see Section 3.1 of \cite{zhao2023inverse}), defined as:
\begin{align*}
d^{all}(r,r^E)\coloneqq \sup\limits_{\pi\in\Pi}|J^\pi(r)-J^\pi(r^E)|.
\end{align*}
If $d^{all}(r,r^E)$ is small, then \emph{the performance of any
policy in $r$, not just optimal policy or $\epsilon$-optimal policy, is similar
also under $r^E$}. In other words, if we use/optimize reward $r$, then we have the
guarantee that the performance of the retrieved policy under $r^E$ is more or
less the same as its performance in $r$. Therefore, clearly, \emph{rewards $r$ with
small distance to $r^E$ w.r.t. $d^{all}$ \emph{can} be used for ``forward'' RL}.
However, we have the following result:
\begin{prop}\label{prop: no guarantees on dall}
    Let $\M=\tuple{\S,\A,H, d_0,p}$ be a known MDP without reward, and let
    $\pie$ be a known expert's policy. Let $r^E$ the true unknown reward
    optimized by the expert to construct $\pie$. Then, there does not exist a
    learning algorithm that receives in input the pair $\tuple{\M,\pie}$ and outputs a
    single reward $r$ such that $d^{all}(r,r^E)\le\epsilon$ w.p. $1-\delta$.
\end{prop}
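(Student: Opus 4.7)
}
The plan is to construct a single pair $\tuple{\M,\pie}$ together with two distinct candidate reward functions $r^E_1, r^E_2 \in \mathfrak{R}$, both of which make $\pie$ optimal in $\M$ (so they are indistinguishable from the input that the algorithm receives), but which are separated in the $d^{all}$ metric by strictly more than $2\epsilon$. Any algorithm $\mathfrak{A}$ that takes only $\tuple{\M,\pie}$ as input induces the same output distribution on rewards regardless of whether the ``true'' reward is $r^E_1$ or $r^E_2$. Hence a triangle-inequality argument on $d^{all}$ will yield the impossibility.

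Concretely, I would work with the trivial MDP $\S=\{s_0\}$, $\A=\{a_1,a_2\}$, $H=1$, $d_0(s_0)=1$, and expert policy $\pie(s_0)=a_1$. Define
\begin{align*}
    r^E_1(s_0,a_1)=1,\quad r^E_1(s_0,a_2)=0,\qquad
    r^E_2(s_0,a_1)=1,\quad r^E_2(s_0,a_2)=-1.
\end{align*}
Both rewards make $\pie$ optimal, so both are legitimate candidates for $r^E$; the pair $\tuple{\M,\pie}$ presented to the algorithm is identical in the two cases. A direct computation of $d^{all}$ (specialising the supremum to the deterministic policy $\pi(s_0)=a_2$) gives $d^{all}(r^E_1,r^E_2)=|0-(-1)|=1$.

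Now suppose, by contradiction, that some (possibly randomised) algorithm $\mathfrak{A}$ outputs a reward $r$ such that for every choice of true reward, $\mathbb{P}_{\mathfrak{A}}\bigl(d^{all}(r,r^E)\le\epsilon\bigr)\ge 1-\delta$. Apply this guarantee with $r^E=r^E_1$ and with $r^E=r^E_2$: because the input to $\mathfrak{A}$ is the same in both cases, the law of $r$ is the same, and the triangle inequality for $d^{all}$ forces the two events $\{d^{all}(r,r^E_1)\le\epsilon\}$ and $\{d^{all}(r,r^E_2)\le\epsilon\}$ to be disjoint whenever $2\epsilon<d^{all}(r^E_1,r^E_2)=1$. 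A union bound then gives $2(1-\delta)\le 1$, i.e.\ $\delta\ge 1/2$, contradicting the assumption for, say, $\epsilon<1/2$ and $\delta<1/2$.

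The only real subtlety, and the step I expect to require some care, is making the triangle-inequality/indistinguishability argument work against \emph{randomised} algorithms: one must argue that a single realisation of $r$ cannot be $\epsilon$-close to both $r^E_1$ and $r^E_2$ in $d^{all}$, and that the output distribution is a function of the input $\tuple{\M,\pie}$ only, not of the hidden $r^E$. Once this is phrased cleanly, the rest is just a verification that $d^{all}$ satisfies the triangle inequality (immediate from its definition as a supremum of absolute differences of utilities) and the elementary computation of $d^{all}(r^E_1,r^E_2)$ above.
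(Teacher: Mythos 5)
Your proposal is correct and rests on the same core idea as the paper's proof: the input $\tuple{\M,\pie}$ only determines the feasible set, which contains rewards at strictly positive $d^{all}$ distance from one another, so no single output can be $\epsilon$-close to every admissible choice of $r^E$. The paper phrases this informally as an adversarial selection of $r^E$ after the algorithm's output, whereas your explicit two-point instance with the triangle-inequality/union-bound step is a more careful instantiation of the same argument (and handles randomized algorithms cleanly), so no gap remains.
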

\begin{proof}
    The proof is trivial. Indeed, since the feasible set $\R_{p,\pie}$ contains
    an infinite amount of reward functions along with $r^E$, and the learning
    algorithm cannot discriminate $r^E$ inside $\R_{p,\pie}$, then the best it
    can do is to output an arbitrary reward function $r\in\R_{p,\pie}$.
    However, since $\R_{p,\pie}$ contains, for any reward $r\in\R_{p,\pie}$,
    at least another reward $r'\in\R_{p,\pie}$ such that $d^{all}(r,r')=c$ is
    finite and equal to some positive constant $c>0$,\footnote{This is immediate
    from the considerations in Appendix \ref{section: visual explanation}.} then we can simply
    construct the problem instance with $r^E\coloneqq r'$ to make the learning
    algorithm not able to output rewards that can be used for forward learning.
\end{proof}
Nevertheless, \cite{metelli2023towards,lazzati2024offline,zhao2023inverse} seem
to provide sample efficient algorithms w.r.t. $d^{all}$.\footnote{ Actually,
\cite{metelli2023towards,lazzati2024offline} use different notions of distance,
like $d_\infty(r,r')\coloneqq\|r-r'\|_\infty$. However, we can write
$\|r-r'\|_\infty\ge\|r-r'\|_1/(SAH)$, and by dual norms we have that
$d^{all}(r,r')=\sup_{\pi\in\Pi}|\dotp{d^{p,\pi},r-r'}|\le
\sup_{\overline{d}:\|\overline{d}\|_\infty\le1}|\dotp{\overline{d},r-r'}|=\|r-r'\|_1$.
Therefore, the guarantees of \cite{metelli2023towards,lazzati2024offline} can be
converted too $d^{all}$ guarantees too.} By looking at Proposition \ref{prop: no
guarantees on dall}, we realize that this is clearly a
\emph{contradiction}. What is the right interpretation?

The trick is that the algorithms proposed in works
\cite{zhao2023inverse,metelli2023towards,lazzati2024offline} are \emph{not} able
to output a single reward $r$ which is close to $r^E$ w.r.t. $d^{all}$, but,
\emph{for any possible reward $r^E=r^E(V,A)$ parametrized\footnote{While
\cite{zhao2023inverse} makes this parametrization explicit,
\cite{metelli2023towards,lazzati2024offline} keep the parametrization implicit,
but everything is analogous.} by some value and advantage functions $V,A$, they
are able to output a reward $r$ such that $d^{all}(r,r^E(V,A))$ is small.} In
other words, it is like if these works \emph{assume to know} the $V,A$
parametrization of the true reward $r^E$. Simply put, these works are able to
output a reward $r$ that can be used for ``forward'' RL just under such
assumption. Otherwise those algorithms do not provide such guarantee.

\paragraph{Conclusions.}
To sum up, we conclude that, in general, an arbitrary reward function with small (non)compatibility can
\emph{not} be used for ``forward'' learning (see Proposition \ref{prop: no
guarantees on dall}), because we cannot know given demonstrations alone whether
the performances assigned by such reward to policies other than the expert
policy are meaningful. In addition, for the same reason, we realize that also an arbitrary reward
with zero (non)compatibility, i.e., an arbitrary reward in the feasible set, can
\emph{not} be used for ``forward'' learning.

\subsection{Comparing the (non)compatibility of various rewards}\label{apx:
compatibility comparison}

In Section \ref{section: rewards compatibility}, we said that rewards $r$ with
smaller values of $\overline{\C}_{p,\pi^E}(r)$ are more compatible with $\pi^E$
in $\M=\tuple{\S,\A,H,d_0,p}$. However, one might provide the following
``counter-example'':
\begin{example}[Question by Reviewer KyLX]\label{example: reward scale}
    Let $r^1,r^2$ be two rewards such that $r^2_h(s,a)=2 r^1_h(s,a)\ge0$ for all
    $(s,a,h)\in\SAH$. Then, clearly,
    $\overline{\C}_{p,\pi^E}(r^2)=2\overline{\C}_{p,\pi^E}(r^1)$. Therefore,
    based on Section \ref{section: rewards compatibility}, we say that
    reward $r^1$ is more compatible than $r^2$ w.r.t. $\pi^E$ in $\M$. However,
    since $r^2$ is just $r^1$ re-scaled by a constant, the two MDPs
    $\M\cup\{r^1\}$ and $\M\cup\{r^2\}$ should be ``equivalent'', thus, $r^1$
    and $r^2$ should be, intuitively, equally compatible with $\pi^E$.
\end{example}

However, Example \ref{example: reward scale} misleads the correct
\emph{interpretation} of the notion of reward function in MDPs, and in
particular about the \emph{scale} of the rewards. Let us explain better our
point.

The MDP is a model, i.e., a simplified representation of reality, which is
commonly applied to 2 different kinds of real-world scenarios: $(i)$ problems in
which the agent (learner in RL or expert in IRL) actually receives some kind of
scalar feedback from the environment, which can be modelled as a reward
function; $(ii)$ problems in which the agent does not receive a feedback from
the environment, but its objective, i.e., its structure of preferences among
state-action trajectories (which trajectories are better than others), satisfies
some axioms that permit to represent it through a scalar reward
\cite{shakerinava2022utility,bowling2023settlingrewardhypothesis} (this is
referred to as the Reward Hypothesis in literature \cite{sutton2018reinforcement}).

There is an enormous difference between scenario $(i)$ and scenario $(ii)$. In
$(i)$ the notion of $\epsilon$-optimal policy is well-defined for any fixed
$\epsilon>0$, because the reward function is given and, thus, fixed. Instead, in
$(ii)$, the notion of reward function is a mere mathematical artifact used to
represent preferences among trajectories, whose existence is guaranteed by a set
of assumptions/axioms
\cite{shakerinava2022utility,bowling2023settlingrewardhypothesis,sutton2018reinforcement}.
As Example \ref{example: reward scale} shows, positive affine transformations of
the reward do not affect the structure of preferences represented (see
\cite{shakerinava2022utility} or Section 16.2 of \cite{russel2010ai} or
\cite{Kreps1988NotesOT}). Therefore, in $(ii)$, the notion of $\epsilon$-optimal
policy is not well-defined, because rescaling a reward function $r$ to $kr$
changes the suboptimality of some policy $\pi$ from $\epsilon$ to $k\epsilon$.
In other words, for fixed $\epsilon>0$, any policy can be made
$\epsilon$-optimal by simply rescaling a reward $r$ to $kr$ for some small
enough $k>0$.

In IRL, this issue is even more influential because, although we are in setting
$(i)$, we have no idea on the scale of the true reward function. For this
reason, our solution is to attach to any reward $r$ a notion of compatibility
$\overline{\mathcal{C}}(r)$ which implicitly contains information about the
scale of the reward $r$. Compatibilities of different rewards (e.g., $r^1$ and
$r^2$ in Example \ref{example: reward scale}) cannot be compared unless the
rewards have the same scale (e.g., $r^1$ and $r^2$ have different scales, thus
their compatibilities shall not be compared).

It should be observed that in Appendix \ref{section: multiplicative alternative}
we discuss a notion of compatibility independent of the scale of the reward.
However, we show that it suffers from major drawbacks that make the notion of
compatibility introduced in the main paper (Definition \ref{def: reward
compatibility}) more suitable for the IRL problem.

In conclusion, to settle Example \ref{example: reward scale}, rewards $r^1$ and
$r^2$ should not have the same compatibility, because they have different
scales, and the notion of compatibility (i.e., suboptimality) is strictly
connected to the scale of the reward. To carry out a fair comparison of
compatibilities, one should rescale the compatibility of each reward based on
the scale of the reward.

\section{Missing Proofs and Additional Results for Section \ref{section: finally linear mdps}}\label{section: more on algorithm}

This appendix is organized as follows. First, we report the full
pseudo-code of \caty. Then, we provide the proof of Theorem \ref{thr: bounds
tabular} in Appendix \ref{appendix: proof main algorithm}.

\subsection{Algorithm}

In this section, we provide the extended version of \caty containing the
explicit conditions under which we shall instantiate one BPI/RFE algorithm
instead of another.

\begin{algorithm}[!h]
    \caption{\caty - exploration}
    \DontPrintSemicolon
    \KwData{Failure probability $\delta>0$, target accuracy $\epsilon>0$,
    expert demonstrations $\D^E$,
    set of rewards to classify $\R$, problem structure $\imath\in\{$tabular, linear
    rewards, Linear MDP$\}$
    }
    
    \uIf{$\imath \in \{ \text{tabular}, \,  \text{linear rewards}\}$}
    {\uIf{$|\R|$ is a small constant}{
        $\D\gets\{\}$\;
        \For{$r'\in \R$}{
    $\D\gets\D\cup \text{BPI\_Exploration}(\delta,\epsilon/2,r')$\Comment*[r]{Algorithm
    BPI-UCBVI \cite{menard2021fast}}\label{line:environment exploration bpi}
        }
    }
    \Else{
        $\D\gets \text{RFE\_Exploration}(\delta,\epsilon/2)$\Comment*[r]{Algorithm
    RF-Express \cite{menard2021fast}}
    }
    }
    \Else{
        $\D\gets \text{RFE\_Exploration}(\delta,\epsilon/2)$\Comment*[r]{Algorithm
    RFLin \cite{wagenmaker2022noharder}}\label{line:environment exploration rfe
    linear mdps}
    }
    Return $\D$
    \end{algorithm}

\begin{algorithm}[!h]
    \caption{\caty - classification}
    \DontPrintSemicolon
    \KwData{Failure probability $\delta>0$, target accuracy $\epsilon>0$,
    expert demonstrations $\D^E$, classification threshold $\Delta\in\RR$,
    reward to classify $r\in\R$, 
    problem structure $\imath\in\{$tabular, linear
    rewards, Linear MDP$\}$, dataset $\D$
    }
\nonl \texttt{// Estimate the expert's performance $\widehat{J}^E(r)$:}\;
\uIf{$\imath = $ tabular}{
    $\widehat{d}^E\gets$ empirical estimate of $d^{p,\pie}$ from
    $\D^E$\;
    $\widehat{J}^E(r)\gets \sum_h \dotp{\widehat{d}^E_h,r_h}$\;
}
\Else{
    $\widehat{\psi}^E\gets$ empirical estimate of $\psi^{p,\pie}$ from
    $\D^E$\label{line: estimate expert linear}\;
    $\widehat{J}^E(r)\gets \sum_h \dotp{\widehat{\psi}^E_h,r_h}$\label{line:
    estimate JE linear}\;
}    
\nonl \texttt{// Estimate the optimal performance $\widehat{J}^*(r)$:}\;
\uIf{$\imath \in \{ \text{tabular},\,\text{linear rewards}\}$}
{\uIf{$|\R|$ is a small constant}{
    $\widehat{J}^*(r)\gets \text{BPI\_Planning}(\D,r)$\Comment*[r]{Algorithm
    BPI-UCBVI \cite{menard2021fast}}\label{line: planning bpi}\;
}
\Else{
    $\widehat{J}^*(r)\gets \text{RFE\_Planning}(\D,r)$\Comment*[r]{Algorithm
    RF-Express \cite{menard2021fast}}\label{line: planning rfe tabular}\;
}
}
\Else{
    $\widehat{J}^*(r)\gets \text{RFE\_Planning}(\D,r)$\Comment*[r]{Algorithm
    RFLin \cite{wagenmaker2022noharder}}\;
}
\nonl \texttt{// Classify the reward:}\;
$\widehat{\C}(r)\gets \widehat{J}^*(r)-\widehat{J}^E(r)$\;
    $\text{class} \gets \text{True}$ \textbf{if} $\widehat{\C}(r)\le \Delta$ \textbf{else} $\text{False}$\;
    \textbf{return} $\text{class}$
    \end{algorithm}

\subsection{Proof of Theorem \ref{thr: bounds tabular}}\label{appendix: proof main algorithm}

Notice that, according to Definition \ref{def: new pac framework}, an algorithm
is $(\epsilon,\delta)$-PAC for IRL if it computes an estimate $\epsilon$-close
to the true (non)compatibility w.h.p.. Such definition does not depend on the
specific strategy adopted by the algorithm to actually classify the input reward
using the computed estimate of (non)compatibility.

Before diving into the proof of Theorem \ref{thr: bounds tabular}, we make the
following considerations.

In the common tabular MDPs setting without additional structure, we know that
the expected utility $J^{\pi}(r;p)$ of policy $\pi$ under reward $r$ in
environment with dynamics $p$ can computed as:
\begin{align*}
    J^{\pi}(r;p)=\sum\limits_{h\in\dsb{H}}\dotp{r_h,d^{p,\pi}_h},
\end{align*}
where $d^{p,\pi}_h$ is the occupancy measure of policy $\pi$ in $p$. It should
be remarked that both $r_h$ and $d^{p,\pi}_h$ have $SA$ components for all
$h\in\dsb{H}$.

In tabular MDPs with linear reward functions and in Linear MDPs, the reward
function is linear in some feature map $\phi$, i.e.:
\begin{align*}
    r_h(\cdot,\cdot)=\dotp{\phi(\cdot,\cdot),\theta_h}\qquad\forall h\in\dsb{H},
\end{align*}
where $\|\phi(s,a)\|_2\le1$ for all $(s,a)\in\SA$ and $\max_h
\|\theta_h\|_2\le\sqrt{d}$. Using this decomposition, we can rewrite the
expected utility $J^\pi(r;p)$ as:
\begin{align*}
    J^\pi(r;p)&=\sum\limits_{h\in\dsb{H}}\dotp{r_h,d^{p,\pi}_h}\\
    &=\sum\limits_{h\in\dsb{H}}\dotp{\popblue{\theta_h^\intercal \phi},d^{p,\pi}_h}\\
    &=\sum\limits_{h\in\dsb{H}}\theta_h^\intercal\popblue{\E\limits_{(s,a)\sim d^{p,\pi}_h}\phi(s,a)}\\
    &=\sum\limits_{h\in\dsb{H}}\theta_h^\intercal\popblue{\psi_h^{p,\pi}},
\end{align*}
where we have defined the feature expectations
$\{\psi_h^{p,\pi}\}_{h\in\dsb{H}}$ as $\psi_h^{p,\pi}\coloneqq
\E_{(s,a)\sim d^{p,\pi}_h}\phi(s,a)$. Observe that vector
$\psi_h^{p,\pi}$ has $d$ components instead of the $SA$ components of each
$d_h^{p,\pi}$ vector.

Since in our setting the IRL algorithm receives in input the reward function (or
its parameter $\theta\in\RR^d$), to estimate the expected utility
$J^{\pi}(r;p)$ we must estimate the visit distributions $\{d^{p,\pi}_h\}_h$ or the
feature expectations $\{\psi^{p,\pi}_h\}_h$. However, because of the different
dimensionalities of such quantities ($SA$ versus $d$), the estimates might
require different amounts of samples.
\upperboundtabular*
\begin{proof}
    To prove the theorem, we aim to find a bound to the number of samples
    $\tau^E$ such that the estimate $\widehat{J}^E(r)\approx J^{\pie}(r;p)$ is
    $\epsilon/2$-correct with probability at least $1-\delta/2$. Next,
    similarly, we aim to bound $\tau$ so that $\widehat{J}^*(r)\approx J^*(r;p)$ is
    $\epsilon/2$-correct with probability at least $1-\delta/2$. Then, the
    conclusion follows after performing a union bound and observing that, for
    any $r\in\R$:
    \begin{align*}
        \Big|\overline{\C}_{p,\pie}(r)-\widehat{\C}(r)\Big|&=
        \Big|\Big(J^*(r;p)-J^{\pie}(r;p)\Big)-\Big(\widehat{J}^*(r)-\widehat{J}^E(r)\Big)\Big|\\
        &\le \Big|J^*(r;p)-\popblue{\widehat{J}^*(r)}\Big|
        +\Big|\popblue{J^{\pie}(r;p)}-\widehat{J}^E(r)\Big|\\
        &\le \frac{\epsilon}{2}+\frac{\epsilon}{2}=\epsilon.
    \end{align*}    

\textbf{Estimating $\widehat{J}^E(r)\approx J^{\pie}(r;p)$}

To estimate $J^{\pie}(r;p)$, \caty simply
computes the empirical estimate of $\{d^{p,\pie}_h\}$ in case of tabular MDPs, and
the empirical estimate of $\{\psi^{p,\pie}_h\}$ in case of tabular MDPs with
linear rewards and Linear MDPs.
Notice that by empirical estimates we mean:
\begin{align*}
    \widehat{d}^E_h(s,a)\coloneqq \frac{\sum\limits_{i\in\dsb{\tau^E}}\indic{s_h^i=s\wedge a_h^i=a}}
    {\sum\limits_{i\in\dsb{\tau^E}}\indic{s_h^i=s}}\qquad \forall (s,a,h)\in\SAH,
\end{align*}
and:
\begin{align*}
    \widehat{\psi}^E_h\coloneqq \frac{\sum\limits_{i\in\dsb{\tau^E}}\phi(s_h^i,a_h^i)}{\tau^E}
    \qquad\forall h\in\dsb{H}.
\end{align*}
Concerning the estimate of the visit distribution $\widehat{d}^E$, we can
use the result of Lemma 6 in \cite{Shani2021OnlineAL} (we are working with
bounded rewards), to obtain that:
\begin{align*}
    \sum\limits_{h\in\dsb{H}}\|d^{p,\pie}_h- \widehat{d}^E_h\|_1\le
    \sqrt{\frac{SAH^3\log\frac{8SAH}{\delta}}{2\tau^E}}\le\frac{\epsilon}{2}.
\end{align*}
Solving w.r.t. $\tau^E$ we get the bound on $\tau^E$.

In a completely analogous manner, we can bound the feature expectations as:
\begin{align*}
    \sum\limits_{h\in\dsb{H}}\|\psi^{p,\pie}_h- \widehat{\psi}^E_h\|_1\le
    \sqrt{\frac{dH^3\log\frac{8dH}{\delta}}{2\tau^E}}\le\frac{\epsilon}{2}.
\end{align*}
Again, solving w.r.t. $\tau^E$ we get the bound on $\tau^E$.

\textbf{Estimating $\widehat{J}^*(r)\approx J^*(r;p)$}

Let us begin with the case in which $\R$ is large.
As explained for instance in Definition 4 of \cite{Xu2023ProvablyEA}, both
algorithms RF-Express \cite{menard2021fast} and RFLin
\cite{wagenmaker2022noharder} satisfy the \emph{uniform policy evaluation property},
i.e., they guarantee that, for any $\epsilon,\delta\in(0,1)$, after having
explored for
$\tau\le\widetilde{\mathcal{O}}\Big(\frac{H^3SA}{\epsilon^2}\big(S+\log\frac{1}{\delta}\big)\Big)$
in case of RF-Express \cite{menard2021fast}, and
$\tau\le\widetilde{\mathcal{O}}\Big(\frac{H^5d}{\epsilon^2}\big(d+\log\frac{1}{\delta}\big)\Big)$
for the algorithm in \cite{wagenmaker2022noharder} (we omit linear terms in
$1/\epsilon$), they compute an estimate $\widehat{p}\approx p$ of the true
transition model such that:
\begin{align*}
    \mathbb{P}\Big(
        \sup\limits_{r\in\mathfrak{R},\pi\in\Pi}
        \big|J^\pi(r;p)-J^\pi(r;\widehat{p})\big|\le\epsilon
        \Big)\ge 1-\delta.
\end{align*}
Clearly, if such property holds, then by computing the performance of the
policy $\widehat{\pi}$ outputted by the RFE algorithm we are able to obtain an
$\epsilon/2$-correct estimate of $J^*(r;p)$.\footnote{Actually, for Linear MDPs,
instead of evaluating the policy returned by Algorithm 2 of
\cite{wagenmaker2022noharder}, we can simply consider the optimistic estimate of
the $V$-function computed by such algorithm, which has the property of being
$\epsilon$-close to the true optimal $V$-function.}

Concerning the case in which $|\R|$ is a finite small constant, for tabular and
tabular with linear rewards MDPs, we can simply use algorithm BPI-UCBVI of
\cite{menard2021fast} as sub-routine, and run it as many times as there are
rewards in $\R$. When $|\R|$ is a small constant, we can proceed with a union
bound over $\R$:
\[
 \mathbb{P}\Big(\sup\limits_{r\in\mathfrak{R},\pi\in\Pi}
        \big|J^\pi(r;p)-J^\pi(r;\widehat{p})\big|\le\epsilon
        \Big) \ge 1 - \sum_{r \in \mathcal{R}} \mathbb{P}\Big(\sup\limits_{\pi\in\Pi}
        \big|J^\pi(r;p)-J^\pi(r;\widehat{p})\big|>\epsilon
        \Big) \ge 1 - |\mathcal{R}| \delta.
\]
This allows us to formally distinguish between small and large $|\R|$ based on
the following inequality:
\[
	S+ \log \frac{1}{\delta} < \log \frac{|\R|}{\delta} \implies S <\log |\R|.
\]
\end{proof}

\section{Missing Proofs and Additional Results for Section \ref{section: insights rfe and irl}}\label{section: more lower bound}

This appendix is organized as follows. First, in Appendix \ref{appendix: new
problems}, we introduce two problems that share similarities with RFE and IRL,
and we characterize the main differences among them. In addition, we enunciate a
lower bound to the sample complexity that is common to some of these 4 problems.
Next, in Appendix \ref{appendix: lower bound single reward}, we provide the
missing proofs.

\subsection{Four Problems}\label{appendix: new problems}

The 4 problems that we consider here are Reward-Free Exploration (RFE), Inverse
Reinforcement Learning (IRL), Matching Performance (MP), and Imitation Learning from
Demonstrations alone (ILfO). MP represents a novel generalization of RFE, while
ILfO, introduced in \cite{liu2018imitation}, represents an exemplification of
MP.
Before enunciating the minimax lower bound, it is
important to formally define each of these problems, as well as what we mean by
learning in each problem.

\subsubsection{Definition of the Problems}

In all the 4 problems, the learner is placed into an \emph{unknown} MDP without
reward $\M=\tuple{\S,\A,H, d_0,p}$, i.e., an environment whose dynamics $\tuple{
d_0,p}$ is unknown to the learner. For simplicity, w.l.o.g., we assume that
there is a single initial state $s_0\coloneqq  d_0$. In each problem, the
learner can explore the environment at will to collect samples about the
dynamics $p$, whose knowledge improves the performance of the agent at solving
the task. However, at exploration phase, the learner does not know which is the
specific task it has to solve. It just knows that the specific task belongs to a
given set of tasks $\mathfrak{T}$ (e.g., set of reward functions).  
The agent can use the knowledge of $\mathfrak{T}$ to engage in a more efficient
task-driven exploration. For any $\epsilon,\delta\in(0,1)$, the goal of the
agent is to being able to ouputting, for any task in $\mathfrak{T}$ a quantity
$\mathfrak{o}$ (e.g., a policy) that solves that specific task in an
$\epsilon$-correct manner with probability at least $1-\delta$. The ultimate
goal of exploration is to collect the least number of samples that permits
$(\epsilon,\delta)$-correctness for all the tasks in $\mathfrak{T}$.

Now, let us see what the quantities $\mathfrak{T}$ and $\mathfrak{o}$ represent in
each of the 4 problems. In Table \ref{tab: single reward}, we provide a sum up
of the various definitions.

\paragraph{Reward-Free Exploration (RFE).} In RFE, the learner receives a set of
reward functions $\mathfrak{T}=\R\subseteq\mathfrak{R}$ in input, and the goal
is to exploit the information about $p$ collected at exploration phase to
output, for any reward $r\in\R$, an $\epsilon$-optimal policy
$\mathfrak{o}=\widehat{\pi}_r$ w.p. $1-\delta$. When $\mathfrak{T}=\{r\}$ is a
singleton, the RFE problem is commonly termed the BPI problem. In symbols, any
RFE algorithm must guarantee that:
\begin{align*}
    \mathbb{P}\Big(
    \sup\limits_{r\in\R}J^*(r;p)-J^{\widehat{\pi}_r}(r;p)\le\epsilon
    \Big)\ge 1-\delta,   
\end{align*}
where $\widehat{\pi}_r$ is the estimate of the algorithm for reward $r$.

\paragraph{Inverse Reinforcement Learning (IRL).} In IRL, the learner receives
in input an occupancy measure\footnote{Actually, as explained in Section
\ref{section: insights rfe and irl}, the knowledge of $d^{p,\pie}$ at
exploration phase is useless. The visit measure might be provided after the
exploration along with the true reward to classify.}
$\{d^{p,\pie}_h\}_{h\in\dsb{H}}$ and a set of reward functions
$\R\subseteq\mathfrak{R}$: $\mathfrak{T}=\tuple{d^{p,\pie},\R}$, but it does
not know which specific reward it will have to classify. Under the assumption
that the occupancy measure $d^{p,\pie}$ is known,\footnote{The assumption that
$d^{p,\pie}$ is known is useful to reduce the estimation problem of the
(non)compatibility of a reward $\overline{\C}_{p,\pie}(r)\coloneqq
J^*(r;p)-J^{\pie}(r;p)$ to the problem of estimating the optimal utility
$J^*(r;p)$ only. Indeed, if $d^{p,\pie}$ is known, then, for any reward $r$,
the utility $J^{\pie}(r;p)$ is known.} the problem reduces to exploiting the
information about $p$ collected at exploration phase to output, for any reward
$r\in\R$, an $\epsilon$-correct estimate $\mathfrak{o}=\widehat{J}(r)$ of the
optimal utility $J^*(r)$ w.p. $1-\delta$. In symbols, under these conditions,
any IRL algorithm must guarantee that:
\begin{align*}
    \mathbb{P}\Big(
    \sup\limits_{r\in\R}\big|J^*(r;p)-\widehat{J}(r)\big|\le\epsilon
    \Big)\ge 1-\delta,   
\end{align*}
where $\widehat{J}(r)$ is the estimate of the algorithm for reward $r$.

\paragraph{Matching Performance (MP).} In MP, the learner receives in input a set
of reward functions $\R\subseteq\mathfrak{R}$ and a measure of performance for
each of them $\overline{J}:\R\to \RR$:
$\mathfrak{T}=\tuple{\overline{J},\R}$. For any $r\in\R$, the utility
$\overline{J}(r)$ represents a performance measure for which we aim to find the
policy that achieves closest performance. Thus, in MP, the goal is to exploit
the information about $p$ collected at exploration phase to output, for any
reward $r\in\R$, a policy $\mathfrak{o}=\widehat{\pi}_r$ such that, if we denote
the policy with performance closest to $\overline{J}(r)$ by
$\overline{\pi}_r\in\argmin_\pi |J^\pi(r)-\overline{J}(r)|$, then the utility of
policy $\widehat{\pi}_r$ is $\epsilon$-close to the utility of policy $\overline{\pi}_r$
w.p. $1-\delta$. In symbols, any MP algorithm must guarantee that:
\begin{align*}
    \mathbb{P}\Big(
    \sup\limits_{r\in\R}\big|J^{\overline{\pi}_r}(r;p)-J^{\widehat{\pi}_r}(r;p)\big|\le\epsilon
    \Big)\ge 1-\delta,   
\end{align*}
where $\overline{\pi}_r\in\argmin_\pi |J^\pi(r)-\overline{J}(r)|$, and
$\widehat{\pi}_r$ is the estimate of the algorithm for reward $r$.

\paragraph{Imitation Learning from Demonstrations alone (ILfO).} In ILfO, the
learner receives in input a set of \emph{state-only} reward functions
$\R\subset\mathfrak{R}$ and a \emph{state-only} occupancy measure
$\{\overline{d}_h\}_{h\in\dsb{H}}$: $\mathfrak{T}=\tuple{\overline{d},\R}$.
Under the assumption that $\overline{d}$ does not leak any information about the
true transition model $p$, the goal is to exploit the information about $p$
collected at exploration phase to output, for any reward $r\in\R$, a policy
$\mathfrak{o}=\widehat{\pi}_r$ such that, if we denote the policy with
performance closest to
$\overline{J}(r)\coloneqq\sum_{h\in\dsb{H}}\dotp{r_h,\overline{d}_h}$ by
$\overline{\pi}_r\in\argmin_\pi |J^\pi(r)-\overline{J}(r)|$, then the utility of
policy $\widehat{\pi}_r$ is $\epsilon$-close to the utility of policy
$\overline{\pi}_r$ w.p. $1-\delta$. Simply put, ILfO, as defined in this manner,
exemplifies the MP setting by providing a functional form to
$\overline{J}:\R\to \RR$ as an inner product between a certain state-only
occupancy measure and the input reward. It should be remarked that the
assumption made for ILfO is mild, because it is satisfied by the setting in
which the expert and the learner have the same state space but different action
spaces (or different dynamics). Indeed, in such case, the visit distribution $\overline{d}$ of the
expert would not leak any information about $p$. In symbols, any ILfO algorithm
must guarantee that:
\begin{align*}
    \mathbb{P}\Big(
    \sup\limits_{r\in\R}\big|J^{\overline{\pi}_r}(r;p)-J^{\widehat{\pi}_r}(r;p)\big|\le\epsilon
    \Big)\ge 1-\delta,   
\end{align*}
where $\overline{\pi}_r\in\argmin_\pi |J^\pi(r)-\overline{J}(r)|$ and
$\overline{J}(r)\coloneqq\sum_{h\in\dsb{H}}\dotp{r_h,\overline{d}_h}$, and
$\widehat{\pi}_r$ is the estimate of the algorithm for reward $r$.

\begin{table}[t]
    \centering
    \scalebox{0.8}{
    \begin{tabular}{c c c c c}
        & BPI & IRL & MP & ILfO\\
        \hline
        Set of Tasks $\mathfrak{T}$ & $\R$ &
        $\tuple{d^{p,\pie},\R}$ &
        $\tuple{\overline{J},\R}$ &
        $\tuple{\overline{d},\R}$\\
        Assumptions & / &
        $d^{p,\pie}$ known &
        $\overline{J}$ can be non-realisable &
        $r$ state-only, $\overline{d}$ no info\\
        Output $\mathfrak{o}$ & $\widehat{\pi}$ &
        $\widehat{J}$ &
        $\widehat{\pi}$ &
        $\widehat{\pi}$\\
        Goal & $J^{\widehat{\pi}}(r;p)\approx J^*(r;p)$ &
        $\widehat{J}\approx J^*(r;p)$ &
        $J^{\widehat{\pi}}(r;p)\approx \overline{J}$ &
        $J^{\widehat{\pi}}(r;p)\approx \sum_h\dotp{\overline{d}_h,r_h}$\\
    \end{tabular}
    }
    \caption{\small Summary of the problems.}
    \label{tab: single reward}
\end{table} 


\subsubsection{Lower Bound}

We now present a minimax lower bound rate that is common to RFE, IRL, and MP. We
report here the lower bounds presented in Section \ref{section: insights rfe and
irl}.
\instancedependentlowerbound*
\begin{proof}
    The proof is similar to that of \cite{metelli2023towards}. We split the
    proof in two parts, by considering two classes of difficult problem
    instances in Lemma \ref{lemma: lower bound single reward} and Lemma
    \ref{lemma: lower bound packing irl}. Next, we combine the two bounds through
    $\max\{a,b\}\ge (a+b)/2$ for all $a,b\ge0$. For the proof, we will assume
    that the expert visit distribution is known. The obtained bound represents a
    lower bound to the more general setting in which it is unknown.
\end{proof}

\lowerboundrfe
\begin{proof}
    The proof of this result is analogous to that of Theorem \ref{thr: instance
    dependent lower bound}, and it employs Lemma \ref{lemma: lower bound single reward} and Lemma
    \ref{lemma: lower bound packing irl}.
\end{proof}

Some observations are in order.
First, since MP is a more general setting than RFE, then this lower bound is a
lower bound for MP too. However, this is not guaranteed for ILfO.
We observe that, while for RFE and IRL the bound is tight, for MP we cannot say
so because we do not have the upper bound.
Notice that, in case the expert state-only distribution $\overline{d}$ was
unknown at exploration phase, and revealed afterwards, then the lower bound of
Theorem \ref{thr: instance dependent lower bound} holds for ILfO too, because we
might a posteriori reveal the state-only distribution $\overline{d}$ of the
optimal policy, and thus, in such manner, ILfO would be reduced to RFE.

\subsection{Missing proofs}\label{appendix: lower bound single reward}

\begin{lemma}\label{lemma: lower bound single reward}
    Let IRL and RFE be the learning problems defined as in Appendix \ref{appendix: new
    problems}. Then, for each problem, any $(\epsilon,\delta)$-PAC algorithm must collect at least
    the following number of exploration episodes:
    \begin{align*}
        \tau\ge \Omega\bigg(\frac{H^3SA}{\epsilon^2}\log\frac{1}{\delta}\bigg).
    \end{align*}
\end{lemma}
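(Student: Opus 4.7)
The plan is to establish both lower bounds simultaneously by reducing each problem to a known hard instance for PAC best-policy identification (BPI). Specifically, I would invoke the construction of \cite{domingues2021episodic}, which provides a family of tabular MDPs for which any $(\epsilon,\delta)$-PAC BPI algorithm must collect at least $\Omega(H^3 SA/\epsilon^2 \log(1/\delta))$ episodes. The idea is that in this construction the MDPs differ only in the reward (or transition) at a single unknown ``target'' triple $(s^*,a^*,h^*)$, and the advantage of the optimal action over all others is of order $\epsilon/H$. The $H^3$ amplification comes from propagating a Bernoulli test with bias $\epsilon/H$ across $H$ stages, the $SA$ factor from the number of candidate target triples, and the $\log(1/\delta)$ factor from standard change-of-measure arguments (Le Cam / Garivier--Kaufmann--Lattimore).

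For RFE, the reduction is immediate: given any $(\epsilon,\delta)$-PAC RFE algorithm, I would instantiate the planning phase on the single reward used in the hard family. By the PAC-RFE guarantee, the returned policy is $\epsilon$-optimal with probability $1-\delta$, so it must identify (within tolerance) the target triple. This is precisely a BPI instance, so the Domingues et al.\ lower bound applies, yielding the claimed $\tau \ge \Omega(H^3 SA/\epsilon^2 \log(1/\delta))$.

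For IRL classification, I would consider the singleton case $\R=\{r\}$, where $r$ is the reward from the hard family, and take $\pi^E$ to be the (deterministic) optimal policy of the reference instance. Since $\pi^E$ is part of the input, $J^{\pi^E}(r;p)$ can be computed exactly once $p$ is known on the support of $\pi^E$, so the PAC requirement $|\overline{\mathcal{C}}_{p,\pi^E}(r)-\widehat{\mathcal{C}}(r)|\le\epsilon$ reduces to estimating $J^*(r;p)$ to accuracy $\epsilon$. I would argue that distinguishing the reference instance (where $\pi^E$ is already optimal, so $J^*(r;p)=J^{\pi^E}(r;p)$) from any alternative in the family (where the optimal value exceeds $J^{\pi^E}(r;p)$ by $\Theta(\epsilon)$) forces a sample complexity at least that of the BPI lower bound. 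This is made rigorous via the standard transportation inequality: $\sum_{(s,a,h)} \mathbb{E}[N_h(s,a)] \cdot \mathrm{KL}(p_h(\cdot|s,a)\,\|\,p'_h(\cdot|s,a)) \ge \mathrm{kl}(\delta,1-\delta)$, and the fact that the KL at the target triple is $O(\epsilon^2/H^2)$.

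The main obstacle will be verifying that the Domingues et al.\ construction is actually compatible with the IRL setup: one must check that, for a single reward $r$, there exist pairs of MDPs in the family that are simultaneously hard for both value-estimation (IRL) and policy identification (RFE), and that the expert's policy $\pi^E$ is well-defined and consistent across the instances being distinguished. Once the construction is pinned down, the two bounds follow from the same change-of-measure argument, so I would present the argument in a single lemma and invoke it twice at the end.
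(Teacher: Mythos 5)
Your plan follows essentially the same route as the paper's proof: the paper also takes the hard instances of \cite{domingues2021episodic} (citing that work directly for the RFE part), reduces IRL with $\R=\{r\}$ and known expert occupancy to estimating $J^*(r;p)$ alone, and concludes via a change-of-measure argument (Bretagnolle--Huber plus an identification function, rather than your transportation inequality, but to the same effect) over the $\Theta(SAH)$ candidate triples with bias $\epsilon'=\Theta(\epsilon/H)$. The consistency obstacle you flag is handled in the paper by adding a dedicated absorbing, zero-reward state $s_E$ reached immediately by $\pi^E$, so that $d^{p,\pi^E}$ is identical and uninformative across all instances; with your choice of $\pi^E$ as the reference-instance optimal policy you would additionally need to discard the one candidate triple lying on its trajectory, since for that instance the given $d^{p,\pi^E}$ would itself reveal the bias (a harmless, order-preserving patch).
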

\begin{proof}
    Observe that the proof for RFE is present in \cite{domingues2021episodic}.
    Thus, we have to prove just the result for IRL. For doing so, we will use
    both the results of \cite{domingues2021episodic} and
    \cite{metelli2023towards}. Notice that for the sake of this proof we
    consider $\R = \{r\}$, that will reduce our problem to simple RL as, in
    order to compute the function $\overline{\mathcal{C}}_{p,\pi^E}(r)$, we just
    need to compute $J^*(r;p)$, being $J^{\pi^E}(r;p)$ known from the
    availability of $d^{p,\pi^E}$ and $r$.
    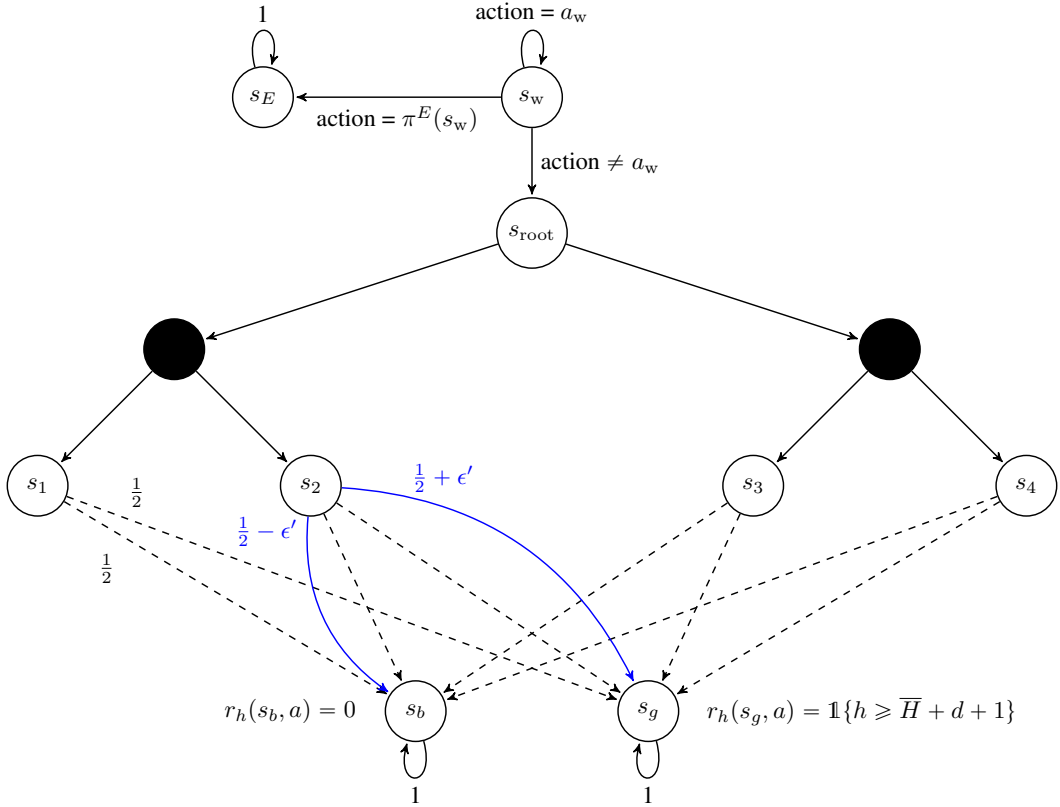
\begin{figure}[ht]
        \centering
        \resizebox{\textwidth}{!}{

\begin{tikzpicture}[->,>=stealth',shorten >=1pt,auto,node distance=2.8cm,
semithick]
\tikzstyle{every state}=[fill=white,text=black]

\node[state]                (W)                      {$s_{\mathrm{w}}$};
\node[state]                (A) [below=1cm of W]     {$s_\mathrm{root}$};

\node[state]                (E) [left=3cm of W]     {$s_E$};

\path
	(W) edge node{action = $\pi^E(s_{\mathrm{w}})$} (E);

\path
	(E) edge [loop above] node{1} (E);


\node[state,fill=black]                (N1)[below left=1cm and 4.5cm of A]    {}; 
\node[state,fill=black]                (N2)[below right=1cm  and 4.5cm of A]    {}; 

\path 
	(A) edge (N1)
	(A) edge (N2);

\node[state]     (N3)[below  left of=N1]    {$s_1$}; 
\node[state]     (N4)[below  right of=N1]    {$s_2$}; 

\path 
	(N1) edge (N3)
	(N1) edge (N4);

\node[state]     (N5)[below  left  of=N2]    {$s_3$}; 
\node[state]     (N6)[below  right of=N2]    {$s_4$}; 

\path 
	(N2) edge (N5)
	(N2) edge (N6);

\path  
	(W)  edge node{action  $\neq a_{\mathrm{w}}$} (A)
	(W)  edge [loop above] node{action = $a_{\mathrm{w}}$} (W)  
	;
	
\node[state, fill=white]    (GOOD) [below right = 6.25cm and 1cm of A] {$s_g$};
\node[state, fill=white]    (BAD) [below left  = 6.25cm and 1cm of A] {$s_b$};

\node[state, fill=white,draw=none]    (RG) [right=0.25cm of GOOD] {$r_h(s_g, a) =\indic{h \geq \overline{H}+d+1}$};
\node[state, fill=white,draw=none]    (RB) [left=0.25cm of BAD] {$r_h(s_b, a)=0$};

\path
	(N3)  edge[dashed]   node[above left=1.2cm and 2.75cm]{$\frac{1}{2}$} (GOOD)
	(N3)  edge[dashed]   node[above left=0.1cm and 1.5cm]{$\frac{1}{2}$} (BAD)
	
	(N4)  edge[dashed]   node[below]{} (GOOD)
	(N4)  edge[dashed]   node[below]{} (BAD)
	
	(N5)  edge[dashed]   node[below]{} (GOOD)
	(N5)  edge[dashed]   node[below]{} (BAD)
	
	(N6)  edge[dashed]   node[below]{} (GOOD)
	(N6)  edge[dashed]   node[below]{} (BAD)
	;

\path 
	(N4)  edge[blue, bend left]   node[above left=0.6cm and 0.5cm]{$\textcolor{blue}{\frac{1}{2}+\epsilon'} $} (GOOD)
	(N4)  edge[blue, bend right]   node[above left=0.9cm and 0.25cm]{$\textcolor{blue}{\frac{1}{2}-\epsilon'} $} (BAD)
	(GOOD)  edge [loop below] node{1} (GOOD)  
	(BAD)  edge [loop below] node{1}  (BAD)    
	;
\end{tikzpicture}}
        \caption{Hard instances.}
        \label{fig: hard instances domingues}
    \end{figure}

    \textbf{Instances Description}
    The hard instances considered are exactly the same as
    \cite{domingues2021episodic}, and are reported in Figure
    \ref{fig: hard instances domingues} for simplicity. The only difference is
    the presence of state $s_E$, to which the expert's policy $\pie$ brings,
    which is absorbing. Such state is needed to make the knowledge of the
    expert's visit distribution $d^{p,\pie}$ useless at inferring information
    about the transition model in other parts of the state-action space. Based on
    \cite{domingues2021episodic}, we describe such hard instances.
    Similarly to \cite{domingues2021episodic}, we assume that $S\ge 7, A\ge 2$,
    and there exists an integer $d$ such that $S=4+(A^d-1)/(A-1)$, and we assume
    that $H\ge 3d$. Note that \cite{domingues2021episodic} show how to relax the
    assumption on the existence of $d$.

    There are the initial state $s_{\mathrm{w}}$, from which the agent starts, and states
    $s_g,s_b$, respectively, the ``good'' and ``bad'' states which are
    absorbing. Moreover, there is state $s_E$, which is reached by the expert,
    and is absorbing. The remaining $S-4$ states are arranged in a full $A$-ary tree of
    depth $d-1$ with root $s_{\text{root}}$. We denote by $\overline{H}\le H-d$
    a certain integer parameter, and by $\mathcal{L}\coloneqq \{s_1,s_2,\dotsc,s_L\}$ the
    set of leaves of the tree.
    We define
    $\I\coloneqq\{1+d,\dotsc,\overline{H}+d\}\times\mathcal{L}\times\A$.
    For any $\imath\in\I$, we define and MDP $\M_\imath$ as follows.
    In any state of the tree, i.e., in states
    $\S\setminus\{s_{\mathrm{w}},s_g,s_b,s_E\}$, the transitions are deterministic,
    and the $a$-th action of a state brings to the $a$-th child of that node.
    
    The transitions from $s_{\mathrm{w}}$ are given by
\begin{align*}
& p_h(s_{\mathrm{w}}|s_{\mathrm{w}}, a) \coloneqq \indic{a =  a_{\mathrm{w}}, h \le \overline{H}}
\quad\text{and}\quad
p_h(s_{\text{root}}|s_{\mathrm{w}}, a) \coloneqq 1 -  p_h(s_{\mathrm{w}}|s_{\mathrm{w}}, a).
\end{align*}
In other words, action $a_{\mathrm{w}}$ allows the agent to remain in the
initial state $s_{\mathrm{w}}$ up to stage $\overline{H}$. After stage
$\overline{H}$, the agent is forced to leave $s_{\mathrm{w}}$ and to traverse
the tree down to the leaves. Action $a_E=\pi^E_1(s_{\mathrm{w}})$ is the only
action that brings to state $s_E$, which is absorbing. The transitions from any
leaf $s_i \in \mathcal{L}$ are given, as in \cite{domingues2021episodic}, by:
\begin{align}
\label{eq:transition-probs}
& p_h(s_g|s_i, a) \coloneqq \frac{1}{2} +  \Delta_{(h^*, \ell^*, a^*)}(h, s_i, a)
\quad\text{and}\quad
p_h(s_b|s_i, a) \coloneqq \frac{1}{2} -  \Delta_{(h^*, \ell^*, a^*)}(h, s_i, a),
\end{align}
where $\Delta_{(h^*, \ell^*, a^*)}(h, s_i, a) \coloneqq \indic{(h, s_i, a)=(h^*,
s_{\ell^*}, a^*)}\cdot \epsilon'$, for some $\epsilon' \in [0,1/2]$. For this
reason, there exists a (single) leaf $\ell^*$ where the agent can choose an
action $a^*$ at stage $h^*$ to increase its probability of arriving to the good
state $s_g$, which provides higher reward. We define states $s_g$ and $s_b$ to
be absorbing, i.e., they satisfy $p_h(s_b|s_b, a) \coloneqq p_h(s_g|s_g, a)
\coloneqq 1$ for any action $a$. The
reward function is state-only and is defined as
\begin{align*}
 \forall a\in\mathcal{A}, \quad r_h(s,a) \coloneqq \indic{s = s_g, h \geq \overline{H}+d+1},
\end{align*}
so that even though the agent decides to stay at $s_{\mathrm{w}}$ until stage
$\overline{H}$, it does not lose any reward. Observe that state $s_E$ does not
provide any reward, so that to estimate the (non)compatibility, any algorithm
must provide a good estimate of the optimal performance.

Finally, we define a reference MDP $\M_0$ which is an MDP of the above type but
for which $\Delta_{0}(h, s_i, a)\coloneqq0$ for all $(h,s_i,a)$. For certain
$\epsilon'$ and $\overline{H}$ to choose, we define the class $\mathbb{M}$ to be
the set $\mathbb{M}\coloneqq\{\M_0\}\cup\{\M_{\iota}\}_{\iota\in\I}$.

\textbf{Distance between problems}
We will prove the lower bound for instance $\M_0$. Observe that, in $\M_0$, the
optimal utility is:
\begin{align*}
    J^{*}_0=\frac{1}{2}(H-\overline{H}-d),
\end{align*}
because there is no triple with additional bias towards $s_g$. Instead, for any
other $\M_\imath\in\mathbb{M}$, the optimal utility is:
\begin{align*}
    J^{*}_\imath=(H-\overline{H}-d)\Big(\frac{1}{2}+\epsilon'\Big).
\end{align*}
Therefore, if we choose $\epsilon'\coloneqq 2\epsilon/(H-\overline{H}-d)$, we
have that, for any $\imath\in\I$:
\begin{align*}
    \big|J^{*}_0-J^{*}_\imath\big|=2\epsilon.
\end{align*}
Thus, in particular, for any estimate $\widehat{J}\in \RR$ we necessarily
have $|J^*_0-\widehat{J}|\le \epsilon \implies
|J^*_\imath-\widehat{J}|>\epsilon$, and vice versa, i.e., we cannot provide an
estimate $\widehat{J}$ that is $\epsilon$-close to both $J^*_0$ and $J^*_\imath$.

\textbf{Identifying the underlying problem}
Following \cite{metelli2023towards}, let us consider a generic
$(\epsilon,\delta)$-correct algorithm $\mathfrak{A}$ that outputs the estimated
optimal utility $\widehat{J}$. Then, for all $\imath\in\I$, we have:
    \begin{align*}
        \delta&\ge \sup\limits_{\text{all problem instances }\M}
        \mathbb{P}_{\M,\mathfrak{A}}\bigg(\Big|J^*_\M-\widehat{J}\Big|\ge\epsilon\bigg)\\
        &\ge \sup\limits_{\popblue{\M\in\mathbb{M}}}
        \mathbb{P}_{\M,\mathfrak{A}}\bigg(\Big|J^*_\M-\widehat{J}\Big|\ge\epsilon\bigg)\\
        &\ge \popblue{\max\limits_{\ell\in\{0,\imath\}}}
        \mathbb{P}_{\popblue{\M_{\ell}},\mathfrak{A}}
        \bigg(\Big|J^*_{\ell}-\widehat{J}\Big|\ge\epsilon\bigg).    
\end{align*}
For every $\imath\in\I$, we define the \emph{identification function}
$\Psi_\imath$ as the index of the problem ``recognized'' by algorithm
$\mathfrak{A}$. In symbols:
\begin{align*}
    \Psi_\imath\coloneqq \argmin\limits_{\ell\in\{0,\imath\}}\Big|J^*_{\ell}-\widehat{J}\Big|.
\end{align*}
In words, given estimate $\widehat{J}$ returned by algorithm $\mathfrak{A}$, the
identification function $\Psi_\imath$ returns the problem between $\M_0$ and
$\M_\imath$ whose optimal utility is closest to the estimate $\widehat{J}$.
For what we have seen in the previous paragraph, problems $\M_0$ and $\M_\imath$
lie at a distance of at least $2\epsilon$ for all $\imath\in\I$. Therefore, for
$\jmath\in\{0,\imath\}$, we have the following inclusion of events:
\begin{align*}
    \{\Psi_\imath\neq\jmath\}\subseteq \{|J^*_\jmath-\widehat{J}|>\epsilon\}.
\end{align*}
Thanks to this fact, we can continue lower bounding the probability as:
\begin{align*}
    \max\limits_{\ell\in\{0,\imath\}}
    \mathbb{P}_{\M_{\ell},\mathfrak{A}}
    \bigg(\Big|J^*_{\ell}-\widehat{J}\Big|\ge\epsilon\bigg)
    &\ge \max\limits_{\ell\in\{0,\imath\}}\mathbb{P}_{\M_{\ell},\mathfrak{A}}
    \popblue{\{\Psi_\imath\neq\ell\}}\\    
    &\markref{(1)}{\ge} \frac{1}{2}\bigg[\mathbb{P}_{\M_{0},
        \mathfrak{A}}\big(\Psi_\imath\neq 0\big)+
        \mathbb{P}_{\M_{\imath},\mathfrak{A}}\big(\Psi_\imath\neq\imath\big)\bigg]\\
    &= \frac{1}{2}\bigg[\mathbb{P}_{\M_{0},
    \mathfrak{A}}\big(\Psi_\imath\neq 0\big)+
    \mathbb{P}_{\M_{\imath},\mathfrak{A}}\big(\popblue{\Psi_\imath=0}\big)\bigg]\\
    &\markref{(2)}{\ge} \frac{1}{4}\exp^{-\text{KL}(\mathbb{P}_{\M_{0},
        \mathfrak{A}},\mathbb{P}_{\M_{\imath},\mathfrak{A}})},
    \end{align*}
    where at (1) we have lower bounded the maximum with the average, i.e., $\max\{a,b\}\ge(a+b)/2$
    for all $a,b\ge0$, and at (2) we have applied the Bretagnolle-Huber's
    inequality \cite{metelli2023towards}.

    \textbf{KL-divergence computation}
    The proof can be concluded by upper bounding the KL divergence $\text{KL}(\mathbb{P}_{\M_{0},
    \mathfrak{A}},\mathbb{P}_{\M_{\imath},\mathfrak{A}})$ as in the proof of
    Theorem 7 in \cite{domingues2021episodic}, and then summing over all the
    $\Theta(SAH)$ instances to retrieve the result.


\end{proof}

\begin{lemma}\label{lemma: lower bound packing irl} Let IRL and RFE be the
    learning problems defined as in Appendix \ref{appendix: new problems}. For
    each problem, if the set of reward functions $\R$ in input is
    $\R=\mathfrak{R}$, then any $(\epsilon,\delta)$-PAC algorithm must collect
    at least the following number of exploration episodes:
    \begin{align*}
        \tau\ge \Omega\bigg(\frac{H^3S^2A}{\epsilon^2}\bigg).
    \end{align*}
\end{lemma}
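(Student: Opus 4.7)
The plan is to run a Fano-style packing argument over the tree-structured hard instances of Lemma \ref{lemma: lower bound single reward}, exploiting the fact that $\R = \mathfrak{R}$ forces the algorithm to be simultaneously accurate on a large family of ``witness'' rewards, one per hard triple. The intuition is that the $\log(1/\delta)$ factor of the single-reward bound is replaced by the cardinality of the packing, which is of order $\Theta(SAH)$ if we index instances by triples $(h^*,\ell^*,a^*)$; after carefully accounting for how many of these triples a single exploration episode can meaningfully disambiguate, this yields the additional factor of $S$ on top of the single-reward rate.

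Concretely, I would reuse the family of hard MDPs $\{\M_\iota\}_{\iota \in \I}$ and the reference $\M_0$ from Lemma \ref{lemma: lower bound single reward}, where $\I \coloneqq \{1+d,\dots,\overline{H}+d\} \times \mathcal{L} \times \A$ and $\M_\iota$ introduces an $\epsilon'$-bias toward $s_g$ at its designated triple. For each $\iota = (h^*,\ell^*,a^*)$ I would construct a witness reward $r_\iota \in \mathfrak{R}$ concentrated on the good-state trajectories reachable by committing to $a^*$ at $(h^*,s_{\ell^*})$, so that $|J^*_{\M_0}(r_\iota) - J^*_{\M_\iota}(r_\iota)| \ge 2\epsilon$. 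Any $(\epsilon,\delta)$-PAC algorithm for $\R = \mathfrak{R}$ must be $\epsilon$-accurate on each $r_\iota$ individually; the identification function $\Psi_\iota$ of Lemma \ref{lemma: lower bound single reward} then correctly distinguishes $\M_0$ from $\M_\iota$ with failure probability at most $\delta$ for every $\iota \in \I$ simultaneously.

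The key analytic step is to combine these $|\I|$ two-hypothesis tests. Applying Bretagnolle--Huber pairwise and upper bounding each KL-divergence by the chain rule yields $\textnormal{KL}(\mathbb{P}_{\M_0,\mathfrak{A}}, \mathbb{P}_{\M_\iota,\mathfrak{A}}) \le n_\iota \cdot \textnormal{KL}(\textnormal{Ber}(1/2) \| \textnormal{Ber}(1/2 + \epsilon')) \lesssim n_\iota (\epsilon')^2$, where $n_\iota$ denotes the expected number of visits to $(h^*,s_{\ell^*},a^*)$ under $\mathfrak{A}$ in $\M_0$. Summing the resulting constraints over $\iota \in \I$ and using the trivial budget $\sum_\iota n_\iota \le \tau$, together with $\epsilon' = \Theta(\epsilon/H)$, first gives $\tau \gtrsim H^3 S A / \epsilon^2$.

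The main obstacle is squeezing out the extra factor of $S$ that distinguishes this bound from the standard RFE lower bound of \cite{jin2020RFE}. Because the tree has depth $d = \Theta(\log_A S)$ and each episode can commit to at most one leaf per stage after the $\overline{H}$-long delay, a single exploration episode provides useful information about only $O(1)$ leaves; hence the tighter budget is $\sum_\iota n_\iota \le \tau / S$ (up to constants), rather than $\tau$, producing the claimed $\Omega(H^3 S^2 A/\epsilon^2)$. Making this counting argument rigorous---by tracking, uniformly over all exploration strategies, how information about one leaf-triple cannot be reused for another---is the technically delicate part of the proof, and mirrors the refined bookkeeping of \cite{domingues2021episodic,metelli2023towards} that is needed to match the matching upper bound achieved by \caty.
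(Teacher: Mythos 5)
There is a real gap here, and it is the central one: the instance family you reuse cannot produce the extra factor of $S$. In the instances of Lemma \ref{lemma: lower bound single reward}, each informative leaf-triple has a \emph{Bernoulli} next-state law over only two terminal states $\{s_g,s_b\}$, so the value of any reward through that triple depends on a single scalar bias; $O\big((\epsilon')^{-2}\log(1/\delta)\big)$ visits per triple suffice to estimate that bias uniformly over all of $\mathfrak{R}$, and indeed an RF-Express-style algorithm solves this family with $\widetilde{O}(H^3SA/\epsilon^2)$ episodes. Hence no bookkeeping over which triples an episode ``disambiguates'' can push the bound to $S^2$: your own budget argument already gives $\sum_\iota n_\iota \le \tau$ (each episode passes through at most one leaf-triple before absorption), and the replacement $\sum_\iota n_\iota \le \tau/S$ is not justified by the observation that an episode informs $O(1)$ leaves — that observation yields exactly the budget $\tau$ you used to get $H^3SA/\epsilon^2$, not a budget smaller by a factor $S$. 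The paper's proof instead changes the instances, following the intuition (explicitly credited to \cite{jin2020RFE}) that the extra $S$ must come from learning transitions \emph{to} $\Theta(S)$ states: each leaf-triple transitions to $\Theta(S)$ absorbing states $s_1',\dots,s_{\overline{S}}'$ with probabilities $\frac{1\pm\epsilon'}{\overline{S}}$ encoded by a sign vector, the witness rewards live on those $\Theta(S)$ states, a packing $\overline{\V}$ of size $2^{\Omega(S)}$ of such sign vectors (Lemma E.6 of \cite{metelli2023towards}) guarantees $2\epsilon$-separated optimal values for a suitable reward for every pair, and \emph{Fano's inequality} over this exponentially large family — not Bretagnolle–Huber over pairs — forces $\E[N^\tau_{h_\imath}(s_\imath,a_\imath)] \gtrsim \log|\overline{\V}|/(\epsilon')^2 = \Omega(S H^2/\epsilon^2)$ visits \emph{per triple}; summing over the $\Theta(SAH)$ triples gives $\Omega(H^3S^2A/\epsilon^2)$.

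In short, your plan correctly reproduces the $\Omega(H^3SA/\epsilon^2)$ rate but the mechanism you propose for the additional $S$ (a per-episode counting restriction on the two-terminal-state tree) both lacks justification and cannot work in principle, because the per-triple estimation problem in those instances is one-dimensional. The missing ideas are (i) enlarging the support of the next-state distribution at each informative triple to $\Theta(S)$ states so that uniform accuracy over $\R=\mathfrak{R}$ requires estimating an $S$-dimensional distribution, and (ii) converting that into a lower bound via a $2^{\Omega(S)}$-sized packing of transition vectors combined with Fano's inequality, which is where the $\log|\overline{\V}| = \Theta(S)$ factor per triple actually comes from.
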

\begin{proof}
    \textbf{Instances description}
    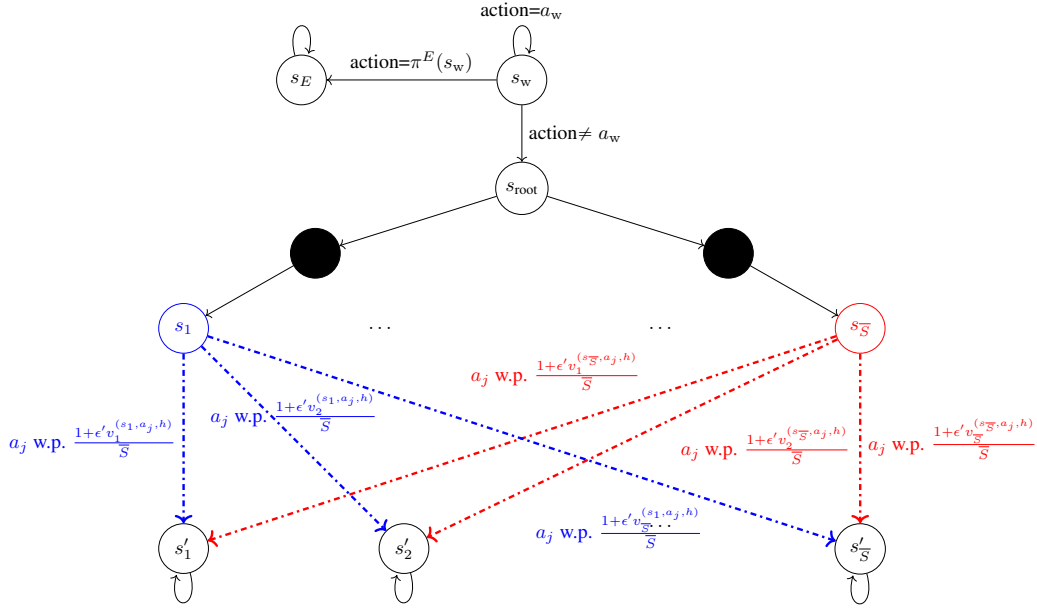
\begin{figure}[t]
        \resizebox{\textwidth}{!}{
        \begin{tikzpicture}[node distance=3.5cm]
        \node[state] (s0) {$s_{\mathrm{w}}$};
        \node[state, below=1cm of s0] (s00) {$s_{\text{root}}$};
        \node[state, left=3cm of s0] (sE) {$s_E$};

        \node[state,fill=black]    (N1)[below left=0.5cm and 3cm of s00]    {}; 
        \node[state,fill=black]    (N2)[below right=0.5cm  and 3cm of s00]    {};

        \node[state, below left of = s00, draw=none] (s1xx) {$\dots$};
        \node[state, below right of = s00, draw=none] (sSxx) {$\dots$};
        \node[state, left of = s1xx, blue] (s1) {$s_1$};
        \node[state, right of = sSxx, red] (sS) {$s_{\overline{S}}$};
        \node[state, below=3cm of s1] (s1') {$s_{1}'$};
        \node[state, right=3cm of s1'] (s2') {$s_{2}'$};
        \node[state, below=3cm of sS] (sS') {$s_{\overline{S}}'$};
        \draw (s1') edge[->, loop below] node{} (s1');
        \draw (s2') edge[->, loop below] node{} (s2');
        \draw (sS') edge[->, loop below] node{} (sS');
        
        \node[state, below of = s00, draw=none] (s1yy) {};
        \node[state, below right of = s1yy, draw=none] (s1yy) {$\dots$};
        \draw (s0) edge[->, loop above, solid] node{action=$a_{\mathrm{w}}$ } (s0);
        \draw (sE) edge[->, loop above, solid] (sE);
        \draw (s0) edge[->, solid, above] node{action=$\pie(s_{\mathrm{w}})$ } (sE);
        \draw (s0) edge[->, solid, right] node{action$\neq a_{\mathrm{w}}$} (s00);
        
        \draw (s00) edge[->, solid, above left] node{} (N1);
        \draw (s00) edge[->, solid, above right] node{} (N2);
        \draw (N1) edge[->, solid, above left] node{} (s1);
        \draw (N2) edge[->, solid, above right] node{} (sS);
        
        \draw (sS) edge[->, solid, above left, very thick, dashdotted, red] node[pos=0.3]{$a_j$ w.p. $\frac{1+\epsilon' v_{1}^{(s_{\overline{S}},a_j,h)}}{\overline{S}}$} (s1');
        \draw (sS) edge[->, solid, below right,  very thick, dashdotted, red] node[pos=0.4]{$a_j$ w.p. $\frac{1+\epsilon' v_{2}^{(s_{\overline{S}},a_j,h)}}{\overline{S}}$} (s2');
        \draw (sS) edge[->, solid, right,  very thick, dashdotted, red] node{$a_j$ w.p. $\frac{1+\epsilon' v_{\overline{S}}^{(s_{\overline{S}},a_j,h)}}{\overline{S}}$} (sS');
        
        \draw (s1) edge[->, solid, left, very thick, dashdotted, blue] node{$a_j$ w.p. $\frac{1+\epsilon' v_1^{(s_1,a_j,h)}}{\overline{S}}$} (s1');
        \draw (s1) edge[->, solid, above, very thick, dashdotted, blue] node{$a_j$ w.p. $\frac{1+\epsilon' v_2^{(s_1,a_j,h)}}{\overline{S}}$} (s2');
        \draw (s1) edge[->, solid, below left, very thick, dashdotted, blue] node[pos=0.8]{$a_j$ w.p. $\frac{1+\epsilon' v_{\overline{S}}^{(s_1,a_j,h)}}{\overline{S}}$} (sS');
        
        %
        \end{tikzpicture}	}
        \caption{Hard instances.}
        \label{fig: hard instances s squared}
        \end{figure}
        The hard instances that we use for the proof of this lemma are obtained
        by combining the hard instances in Lemma \ref{lemma: lower bound single
        reward} (i.e., the hard instances of \cite{domingues2021episodic}), with
        those in \cite{metelli2023towards}. Specifically, this construction is
        based on the intuition described in \cite{jin2020RFE} that, if we want
        to increase the sample complexity, we have to learn transitions also
        \emph{to} $\Theta(S)$ states, and not just \emph{from} $\Theta(S)$
        states. Observe the presence of state $s_E$ (only for IRL), which plays
        the same role as in the proof of Lemma \ref{lemma: lower bound single
        reward}. Any action in such state receives always reward $-1$, thus it
        is meaningless for the estimate of the (non)compatibility, which reduces
        to the estimation of the optimal performance. In this manner, the expert
        distribution $d^{p,\pie}$ does not provide additional information about
        the transition model of other portion of the state-action space.
        Therefore, in the following, we will present the lower bound
        construction as if such state did not exist.

        The hard instances are reported in Figure \ref{fig: hard instances s
        squared}. Notice that they are exactly the same instances as those
        presented in the proof of Lemma \ref{lemma: lower bound single reward},
        with the difference that, from the $\overline{S}$ leaves (differently
        from earlier, we now denote the number of leaves through $\overline{S}$
        instead of $L$), we do not reach just two states $s_{g},s_b$, but we
        reach $\Theta(S)$ absorbing states, i.e., $s_1',s_2',\dotsc,s_{\overline{S}}'$.
        The transitions from the leaves to such states is the same as in
        \cite{metelli2023towards}, and we report a description below.

        Let us introduce the set $\overline{\I} \coloneqq
        \{s_1,\dots,s_{\overline{S}}\} \times \A \times
        \{1+d,\dotsc,\overline{H}+d\}$.
        Let $\overline{\imath}\coloneqq (s_1,a_1,1+d)\in\overline{\I}$ be a specific triple of set $\I$, and denote $\I\coloneqq \overline{\I}\setminus\{\overline{\imath}\}$.
        Let us also introduce set $\mathcal{V}
        \coloneqq \{ v\in\{-1,1\}^{\overline{S}} : \sum_{j=1}^{{\overline{S}}} v_j =
        0\}$. Thanks to Lemma E.6 of \cite{metelli2023towards} (that we report
        in Lemma \ref{lemma: packing} for simplicity), we know that there exists
        a subset $\overline{V}\subseteq\V$ (of transition models) with
        cardinality at least $2^{\overline{S}/5}$ such that, for every pair
        $v,w\in\overline{\V}$ with $v\neq w$, we have that $\|v-w\|_1\ge
        \overline{S}/16$. In other words, we know that there exists a
        $\overline{S}/16$-packing of $\V$ with cardinality at least
        $2^{\overline{S}/5}$.

        Following \cite{metelli2023towards}, we denote by $\bm{v} = (v^\imath)_{\imath \in \mathcal{I}} \in \overline{\V}^{\mathcal{I}}$ 
        the generic vector of $\overline{\V}^{\mathcal{I}}$.
        Now, for any $\bm{v}\in\overline{\V}^{\mathcal{I}}$, for any triple $\overline{\jmath}\in\I$, and for some parameter
        $\epsilon'\in [0,1/2]$ to choose, we construct problem instance $\M_{\bm{v},\overline{\jmath}}$ as follows.

        First of all, we define the transition model at triple $\overline{\imath}$ as:
        \begin{align*}
        p_{h_{\overline{\imath}}}(s'_i|s_{\overline{\imath}},a_{\overline{\imath}})=\frac{1}{\overline{S}}\quad\forall i\in\dsb{\overline{S}},
        \end{align*}
        where observe that we use notation $\imath=(s_\imath,a_\imath,h_\imath)\in\overline{\I}$ to denote triples in $\overline{\I}$.
        Instead, for the generic triple $\imath\in\I$ (including triple $\jmath$), the probability
        distribution of the next state is given by:
        \begin{align*}
            p_{h_\imath}(s'_i|s_\imath,a_\imath)=\frac{1}{\overline{S}}+\frac{\epsilon'}{\overline{S}}\bm{v}^\imath_i
            \quad\forall i\in\dsb{\overline{S}},
        \end{align*}
        where $\bm{v}^\imath_i$ represents the $i$-th component of the $\imath$-th vector in $\bm{v}$.
        In words, the $i$-th component of vector $\bm{v}^\imath\in\overline{\V}$ creates a bias of
        $\epsilon'/\overline{S}$ towards the next state $s_i'$ for all
        $i\in\dsb{\overline{S}}$. Since $\bm{v}^\imath\in\overline{\V}$, then
        $p_{h_\imath}(\cdot|s_\imath,a_\imath)\in\Delta^{\dsb{\overline{S}}}$ for all $\imath\in\I$.
        
        We consider non-stationary reward functions. Specifically, all the rewards $r\in\mathfrak{R}$ that we consider assign reward 1 to both triples $\overline{\imath}$ and $\overline{\jmath}$, i.e., $r_{h_{\overline{\imath}}}(s_{\overline{\imath}},a_{\overline{\imath}})=1$ and $r_{h_{\overline{\jmath}}}(s_{\overline{\jmath}},a_{\overline{\jmath}})=1$. Next, for any other triple $(s,a,h)\in\SAH$ with state different from $s_1',s_2',\dotsc,s_{\overline{S}}'$, we assign reward 0. For states $s_1',s_2',\dotsc,s_{\overline{S}}'$, we consider state-only rewards whose value is always 0 in stages
        $[1,\overline{H}+d]$, and whose value is stationary and arbitrary afterwards. Intuitively, as in \cite{domingues2021episodic},
        forcing the reward to be 0 up $h=\overline{H}+d$ guarantees that we
        cannot obtain a higher expected return $J$ by reaching the leaves states
        earlier (i.e., by exiting from $s_{\mathrm{w}}$ before $\overline{H}$).
        
       Given the definition above, we construct the class of instances
        $\mathbb{M} \coloneqq\{ \mathcal{M}_{\bm{v},\imath}:
        \imath\in\I,\bm{v}\in\overline{\V}^\I\}$.
        Moreover, we will use the notation  $\mathcal{M}_{\bm{v} \stackrel{\imath}{\leftarrow} w,\jmath}$ to denote the instance in which we replace the $\imath$ component of $\bm{v}$, i.e., $\bm{v}^{\imath}$, with $w \in \mathcal{V}$ and $\mathcal{M}_{\bm{v} \stackrel{\imath}{\leftarrow} 0,\jmath}$ the instance in which we replace the $\imath$ component of $\bm{v}$, i.e., $v^{\imath}$, with the zero vector. Since we will always use this notation when substituting triple $\jmath$, i.e., we always use this notation in situations as $\mathcal{M}_{\bm{v} \stackrel{\jmath}{\leftarrow} w,\jmath}$, then we omit the second parameter, and write just $\mathcal{M}_{\bm{v} \stackrel{\imath}{\leftarrow} w}\coloneqq \mathcal{M}_{\bm{v} \stackrel{\imath}{\leftarrow} w,\jmath}$.

    \textbf{Distance between problems}
    Consider an arbitrary problem instance $\M_{\bm{v},\imath}\in\mathbb{M}$, for certain $\imath\in\I$ and $\bm{v}\in\overline{\V}^\I$.
    Let $r\in\mathfrak{R}$ be an arbitrary reward function that satisfies the
    constraints described earlier. Let
    $\pi_{\overline{\imath}}\in\Pi$ be the deterministic policy that brings to triple $\overline{\imath}$. Then, its expected return is:
        \begin{align*}
        J^{\pi_{\overline{\imath}}}(r;\M_{\bm{v},\imath})=1+\frac{H-\overline{H}-d}{\overline{S}}\sum\limits_{i=1}^{\overline{S}}r_i,
    \end{align*}
    where $r_i\coloneqq r_{\overline{H}+d+1}(s_i')$ for all $i\in\dsb{\overline{S}}$.
    Let policy $\pi_{\imath}\in\Pi$ be the deterministic policy that brings to triple $\imath$. Then, its expected return is:
        \begin{align*}
        J^{\pi_{\imath}}(r;\M_{\bm{v},\imath})=1+\frac{H-\overline{H}-d}{\overline{S}}\sum\limits_{i=1}^{\overline{S}}r_i+\popblue{\epsilon'\frac{(H-\overline{H}-d)}{\overline{S}}
        \sum\limits_{i=1}^{\overline{S}}\bm{v}^\imath_i r_i}.
    \end{align*}
    Finally, let policy $\pi_{\jmath}\in\Pi$ be the deterministic policy that brings to any other triple $\jmath\in\I\setminus\{\imath\}$. Then, its expected return is:
    \begin{align*}
    J^{\pi_{\jmath}}(r;\M_{\bm{v},\imath})=\popblue{0}+\frac{H-\overline{H}-d}{\overline{S}}\sum\limits_{i=1}^{\overline{S}}r_i+\epsilon'\frac{(H-\overline{H}-d)}{\overline{S}}
    \sum\limits_{i=1}^{\overline{S}}\bm{v}^\jmath_i r_i.
    \end{align*}
    It should be remarked that
    $\tuple{v,r}=\sum_{i\in\dsb{\overline{S}}}v_i r_i\in[-\overline{S},\overline{S}]$ for any $r\in\mathfrak{R}$ and $v\in\overline{\V}$, therefore, as long as:
    \begin{align}\label{eq: constraint 1 epsilon prime}
        \epsilon'(H-\overline{H}-d)<1-\epsilon'(H-\overline{H}-d)-\epsilon\iff
        \epsilon'<\frac{1-\epsilon}{2(H-\overline{H}-d)},
    \end{align}
    then any policy $\pi_{\jmath}$ is cannot be $\epsilon$-optimal in problem $\M_{\bm{v},\imath}$, in which, thus, the optimal policy shall be searched for between $\pi_{\overline{\imath}}$ and $\pi_{\imath}$.

    Now, consider an arbitrary pair $v,w\in\overline{\V}$ such that $v\neq w$,
    and an arbitrary triple $\imath\in\I$ and vector $\bm\in\overline{\V}^\I$. We now compare problem instances $\mathcal{M}_{\bm{v} \stackrel{\imath}{\leftarrow} v}$ and $\mathcal{M}_{\bm{v} \stackrel{\imath}{\leftarrow} w}$. 
    Among all possible reward functions that satisfy the definition provided in the construction of the hard instances, we find reward $r'$ such
    that, in every component $i\in\dsb{\overline{S}}$, satisfies:
    \begin{align*}
        r_i'=\begin{cases}
            +1\quad\text{if }v_i=+1\wedge w_i=-1\\
            -1\quad\text{if }v_i=-1\wedge w_i=+1\\
            0\quad\text{if }v_i=w_i
        \end{cases}.
    \end{align*}
    For what we have seen before about class $\overline{\V}$, we know that
    $\|v-w\|_1=\sum_{i\in\dsb{\overline{S}}}|v_i-w_i|\ge\overline{S}/16$, thus, since
    $v,w\in\V$, i.e., their components belong to $\{-1,+1\}$, we know that there
    are at least $\overline{S}/32$ components of $v,w$ that differ from each
    other. By using reward $r'$, we have that:
    \begin{align*}
        &\sum\limits_{i=1}^{\overline{S}}\popblue{v_i} r_i'\ge \frac{\overline{S}}{32}\ge 0,\\
        &\sum\limits_{i=1}^{\overline{S}}\popblue{w_i} r_i'\le -\frac{\overline{S}}{32}\le 0.
    \end{align*}
    As a consequence, the expected returns of policies $\pi_{\overline{\imath}}$ and $\pi_{\imath}$ in problems $\mathcal{M}_{\bm{v} \stackrel{\imath}{\leftarrow} v}$ and $\mathcal{M}_{\bm{v} \stackrel{\imath}{\leftarrow} w}$ are:
    \begin{align*}
        &J^{\popblue{\pi_{\overline{\imath}}}}(r';\mathcal{M}_{\bm{v} \stackrel{\imath}{\leftarrow} \popblue{v}})=J^{\popblue{\pi_{\overline{\imath}}}}(r';\mathcal{M}_{\bm{v} \stackrel{\imath}{\leftarrow} \popblue{w}})=1+\frac{H-\overline{H}-d}{\overline{S}}\sum\limits_{i=1}^{\overline{S}}r_i',\\
        &J^{\popblue{\pi_{\imath}}}(r';\mathcal{M}_{\bm{v} \stackrel{\imath}{\leftarrow} \popblue{v}})\popblue{\ge} 1+\frac{H-\overline{H}-d}{\overline{S}}\sum\limits_{i=1}^{\overline{S}}r_i'\popblue{+}\epsilon'\frac{(H-\overline{H}-d)}{32},\\
        &J^{\popblue{\pi_{\imath}}}(r';\mathcal{M}_{\bm{v} \stackrel{\imath}{\leftarrow} \popblue{w}})\popblue{\le} 1+\frac{H-\overline{H}-d}{\overline{S}}\sum\limits_{i=1}^{\overline{S}}r_i'\popblue{-}\epsilon'\frac{(H-\overline{H}-d)}{32},
    \end{align*}
    from which we infer that:
    \begin{align*}
        J^{\popblue{\pi_{\imath}}}(r';\mathcal{M}_{\bm{v} \stackrel{\imath}{\leftarrow} \popblue{v}})\ge J^{\popblue{\pi_{\overline{\imath}}}}(r';\mathcal{M}_{\bm{v} \stackrel{\imath}{\leftarrow} \popblue{v}})=J^{\popblue{\pi_{\overline{\imath}}}}(r';\mathcal{M}_{\bm{v} \stackrel{\imath}{\leftarrow} \popblue{w}})\ge J^{\popblue{\pi_{\imath}}}(r';\mathcal{M}_{\bm{v} \stackrel{\imath}{\leftarrow} \popblue{w}}).
    \end{align*}
    Now, let us choose $\epsilon'> 64\epsilon/(H-\overline{H}-d)$. To satisfy
    also the constraint in Equation \eqref{eq: constraint 1 epsilon prime}, we
    can roughly assume $\epsilon<1/256$ and set $\epsilon'=
    65\epsilon/(H-\overline{H}-d)$. Thanks to this choice, observe that:
    \begin{align*}
        &J^{\popblue{\pi_{\imath}}}(r';\mathcal{M}_{\bm{v} \stackrel{\imath}{\leftarrow} \popblue{v}})> J^{\popblue{\pi_{\overline{\imath}}}}(r';\mathcal{M}_{\bm{v} \stackrel{\imath}{\leftarrow} \popblue{v}})+2\epsilon,\\
        &J^{\popblue{\pi_{\overline{\imath}}}}(r';\mathcal{M}_{\bm{v} \stackrel{\imath}{\leftarrow} \popblue{w}})> J^{\popblue{\pi_{\imath}}}(r';\mathcal{M}_{\bm{v} \stackrel{\imath}{\leftarrow} \popblue{w}})+2\epsilon.
    \end{align*}
    In words, policy $\pi_\imath$ is optimal in problem $\mathcal{M}_{\bm{v} \stackrel{\imath}{\leftarrow} v}$, and policy $\pi_{\overline{\imath}}$ is worse than $2\epsilon$-suboptimal in such problem. In addition, observe that policy $\pi_{\overline{\imath}}$ is optimal in problem $\mathcal{M}_{\bm{v} \stackrel{\imath}{\leftarrow} w}$, and policy $\pi_{\imath}$ is worse than $2\epsilon$-suboptimal in such problem.
    We stress that any stochastic policy in-between $\pi_{\imath}$ and $\pi_{\overline{\imath}}$ cannot be $\epsilon$-optimal for both problems.

    To sum up, for the choice of $\epsilon'$ made earlier, for arbitrary pairs of problems $\mathcal{M}_{\bm{v} \stackrel{\imath}{\leftarrow} v}$ and $\mathcal{M}_{\bm{v} \stackrel{\imath}{\leftarrow} w}$, we have seen that there exist rewards in $\mathfrak{R}$ for which a policy $\epsilon$-optimal for problem $\M_{\imath,v}$ is not
    $\epsilon$-optimal for problem $\M_{\imath,w}$, and vice versa.

    \textbf{Identifying the underlying problem: RFE.}~~
    We consider first RFE, and then IRL.

Let us consider an $(\epsilon,\delta)$-correct algorithm $\mathfrak{A}$ for RFE,
that outputs, for any reward function $r\in\mathfrak{R}$, a policy
$\widehat{\pi}_r$. For simplicity, we consider as output of Algorithm
$\mathfrak{A}$ a function $\widehat{\pi}:\mathfrak{R}\to \Pi$, that takes in
input a reward and outputs a policy.

For any $\imath \in \mathcal{I}$ and $\bm{v}\in\overline{\V}^\I$, we can lower bound the error probability as:
\begin{align*}
    \delta&\ge \sup\limits_{\text{all problem instances }\M}
    \mathbb{P}_{\M,\mathfrak{A}}\bigg(
    \sup\limits_{r\in\mathfrak{R}}    
    J^*_\M(r)-J^{\widehat{\pi}_r}_\M(r)\ge\epsilon\bigg)\\
    &\markref{(1)}{\ge} \sup\limits_{\popblue{\M\in\mathbb{M}}}
    \mathbb{P}_{\M,\mathfrak{A}}\bigg(
    \sup\limits_{r\in\mathfrak{R}}    
    J^*_\M(r)-J^{\widehat{\pi}_r}_\M(r)\ge\epsilon\bigg)\\
    &\markref{(2)}{\ge} \max\limits_{\popblue{w\in\overline{\V}}}
    \mathbb{P}_{\popblue{\mathcal{M}_{\bm{v} \stackrel{\imath}{\leftarrow} w}},\mathfrak{A}}\bigg(\sup\limits_{r\in\mathfrak{R}}
    J^*_{\popblue{\mathcal{M}_{\bm{v} \stackrel{\imath}{\leftarrow} w}}}(r)-J^{\widehat{\pi}_r}_{\popblue{\mathcal{M}_{\bm{v} \stackrel{\imath}{\leftarrow} w}}}(r)\ge\epsilon\bigg),
\end{align*}
where at (1) we have lower bounded by replacing all possible RFE problem instances with
problem instances in $\mathbb{M}$, and at (2) we have lower bounded by replacing
all instances in $\mathbb{M}$ with just instances $\{\mathcal{M}_{\bm{v} \stackrel{\imath}{\leftarrow} w}:
w\in\overline{\V}\}$ for the fixed triple $\imath$ and vector $\bm{v}$.

For every $\imath\in\I$ and $\bm{v}\in\overline{\V}^\I$, we define the \emph{identification function}
$\Psi_{\imath,\bm{v}}$ as the index of the problem $w\in\overline{\V}$ ``recognized'' by algorithm
$\mathfrak{A}$. In symbols:
\begin{align*}
    \Psi_{\imath,\bm{v}}\coloneqq \argmin\limits_{w\in\overline{\V}}\sup\limits_{r\in\mathfrak{R}}
    J^*_{\mathcal{M}_{\bm{v} \stackrel{\imath}{\leftarrow} w}}(r)-J^{\widehat{\pi}_r}_{\mathcal{M}_{\bm{v} \stackrel{\imath}{\leftarrow} w}}(r).
\end{align*}
In words, given estimate $\widehat{\pi}:\mathfrak{R}\to\Pi$ returned by algorithm $\mathfrak{A}$, the
identification function $\Psi_{\imath,\bm{v}}$ returns the problem in $\{\mathcal{M}_{\bm{v} \stackrel{\imath}{\leftarrow} w}:
w\in\overline{\V}\}$ whose solution $\pi:\mathfrak{R}\to\Pi$ is closest to the estimate $\widehat{\pi}$.
For what we have seen in the previous paragraph, for any $v,w\in\overline{\V}$
with $v\neq w$, for any fixed $\imath\in\I$ and $\bm{v}\in\overline{\V}^\I$, there exists a reward function
$r'\in \mathfrak{R}$ such that no policy can have expected
utility
$\epsilon$-close to the optimal expected utility of both problems
$\mathcal{M}_{\bm{v} \stackrel{\imath}{\leftarrow} v}$ and $\mathcal{M}_{\bm{v} \stackrel{\imath}{\leftarrow} w}$. Therefore, for
$w\in\overline{\V}$, we have the following inclusion of events:
\begin{align*}
    \{\Psi_{\imath,\bm{v}}\neq w\}\subseteq \Big\{\sup\limits_{r\in\mathfrak{R}}
    J^*_{\mathcal{M}_{\bm{v} \stackrel{\imath}{\leftarrow} w}}(r)-J^{\widehat{\pi}_r}_{\mathcal{M}_{\bm{v} \stackrel{\imath}{\leftarrow} w}}(r)>\epsilon\Big\}.
\end{align*}
We can continue to lower bound the probability as:
\begin{align*}
    \max\limits_{w\in\overline{\V}} 
    \mathbb{P}_{\mathcal{M}_{\bm{v} \stackrel{\imath}{\leftarrow} w},\mathfrak{A}}& \bigg(\sup\limits_{r\in\mathfrak{R}}
    J^*_{\mathcal{M}_{\bm{v} \stackrel{\imath}{\leftarrow} w}}(r)-J^{\widehat{\pi}_r}_{\mathcal{M}_{\bm{v} \stackrel{\imath}{\leftarrow} w}}(r)\ge\epsilon\bigg)
     \markref{(3)}{\ge} \frac{1}{|\overline{\V}|}\sum\limits_{w\in\overline{\V}}
    \mathbb{P}_{\mathcal{M}_{\bm{v} \stackrel{\imath}{\leftarrow} w},
    \mathfrak{A}}\big(\Psi_{\imath,\bm{v}}\neq w\big)\\
    &\markref{(4)}{\ge} 1-\frac{1}{\log|\overline{\V}|}\bigg(
        \frac{1}{|\overline{\V}|}\sum\limits_{w\in\overline{\V}}
        \text{KL}(\mathbb{P}_{\mathcal{M}_{\bm{v} \stackrel{\imath}{\leftarrow} w},
    \mathfrak{A}},\mathbb{P}_{\mathcal{M}_{\bm{v} \stackrel{\imath}{\leftarrow} 0},\mathfrak{A}})-\log 2
    \bigg),
\end{align*}
where at (3) we have lower bounded the maximum over $\overline{\V}$ with the
average, and at (4) we have applied, similary to \cite{metelli2023towards}, the
Fano's inequality, reported in Theorem \ref{thr:Fano} for simplicity.

\textbf{Identifying the underlying problem: IRL.}~~
For IRL,  it is possible to carry out a similar derivation. However, we remark
that, now, the error is measured based on the expected utilities, and not on the
policies.

Let us consider an $(\epsilon,\delta)$-correct algorithm $\mathfrak{A}$ for IRL,
that outputs, for any reward function $r\in\mathfrak{R}$, a utility
$\widehat{J}_r$. For simplicity, we consider as output of Algorithm
$\mathfrak{A}$ a function $\widehat{J}:\mathfrak{R}\to  \RR$, that takes in
input a reward and outputs a utility.

For any $\imath \in \mathcal{I}$ and $\bm{v}\in\overline{\V}^\I$, we can lower bound the error probability as:
\begin{align*}
    \delta&\ge \sup\limits_{\text{all problem instances }\M}
    \mathbb{P}_{\M,\mathfrak{A}}\bigg(
    \sup\limits_{r\in\mathfrak{R}}    
    \Big|J^*_\M(r)-\widehat{J}_r\Big|\ge\epsilon\bigg)\\
    &\ge \sup\limits_{\popblue{\M\in\mathbb{M}}}
    \mathbb{P}_{\M,\mathfrak{A}}\bigg(
    \sup\limits_{r\in\mathfrak{R}}    
    \Big|J^*_\M(r)-\widehat{J}_r\Big|\ge\epsilon\bigg)\\
    &\ge \max\limits_{\popblue{w\in\overline{\V}}}
    \mathbb{P}_{\popblue{\mathcal{M}_{\bm{v} \stackrel{\imath}{\leftarrow} w}},\mathfrak{A}}\bigg(\sup\limits_{r\in\mathfrak{R}}
    \Big|J^*_{\popblue{\mathcal{M}_{\bm{v} \stackrel{\imath}{\leftarrow} w}}}(r)-\widehat{J}_r\Big|\ge\epsilon\bigg).
\end{align*}
For any $\imath\in\I$ and $\bm{v}\in\overline{\V}^\I$, we define an identification function $\Psi_{\imath,\bm{v}}$ as:
\begin{align*}
    \Psi_{\imath,\bm{v}}\coloneqq \argmin\limits_{w\in\overline{\V}}\sup\limits_{r\in\mathfrak{R}}
    \Big|J^*_{\mathcal{M}_{\bm{v} \stackrel{\imath}{\leftarrow} w}}(r)-\widehat{J}_r\Big|,
\end{align*}
and by a reasoning analogous to that for RFE, we can continue to lower bounding
as:
\begin{align}\label{eq: lower bound temp fano}
    \max\limits_{w\in\overline{\V}}
    \mathbb{P}_{\mathcal{M}_{\bm{v} \stackrel{\imath}{\leftarrow} w},\mathfrak{A}}& \bigg(\sup\limits_{r\in\mathfrak{R}}
    \Big|J^*_{\mathcal{M}_{\bm{v} \stackrel{\imath}{\leftarrow} w}}(r)-\widehat{J}_r\Big|\ge\epsilon\bigg)
     \ge \frac{1}{|\overline{\V}|}\sum\limits_{w\in\overline{\V}}
    \mathbb{P}_{\mathcal{M}_{\bm{v} \stackrel{\imath}{\leftarrow} w},
    \mathfrak{A}}\big(\Psi_{\imath,\bm{v}}\neq w\big)\nonumber\\
    &\ge 1-\frac{1}{\log|\overline{\V}|}\bigg(
        \frac{1}{|\overline{\V}|}\sum\limits_{w\in\overline{\V}}
        \text{KL}(\mathbb{P}_{\mathcal{M}_{\bm{v} \stackrel{\imath}{\leftarrow} w},
    \mathfrak{A}},\mathbb{P}_{\mathcal{M}_{\bm{v} \stackrel{\imath}{\leftarrow} 0},\mathfrak{A}})-\log 2
    \bigg),
\end{align}
which represents the same lower bound obtained also for RFE.

\textbf{KL-divergence computation} The following derivation is analogous to that
of \cite{metelli2023towards}. To bound the KL-divergence term, for any
$\imath\in\I$, we can write:
\begin{align*}
    \text{KL}(\mathbb{P}_{\mathcal{M}_{\bm{v} \stackrel{\imath}{\leftarrow} w},
    \mathfrak{A}},\mathbb{P}_{\mathcal{M}_{\bm{v} \stackrel{\imath}{\leftarrow} 0},\mathfrak{A}})&\markref{(1)}{=}
    \E_{\mathcal{M}_{\bm{v} \stackrel{\imath}{\leftarrow} w},\mathfrak{A}}\big[N^\tau_{h_\imath}(s_\imath,a_\imath)\big]
    \text{KL}(p^{\mathcal{M}_{\bm{v} \stackrel{\imath}{\leftarrow} w}}_{h_\imath}(\cdot|s_\imath,a_\imath),
    p^{\mathcal{M}_{\bm{v} \stackrel{\imath}{\leftarrow} 0}}_{h_\imath}(\cdot|s_\imath,a_\imath))\\
    &\markref{(2)}{\le}
    2(\epsilon')^2 \E_{\mathcal{M}_{\bm{v} \stackrel{\imath}{\leftarrow} w},\mathfrak{A}}\big[N^\tau_{h_\imath}(s_\imath,a_\imath)\big],
\end{align*}
where at (1) we have applied Lemma \ref{lemma: lemma 5 domingues}, and at (2) we
have applied Lemma \ref{lemma: KLBound} (having observed that the transition
models differ in $\imath$ and defined $N^\tau_{h_\imath}(s_\imath,a_\imath) =
\sum_{t=1}^\tau \indic{(s_t,a_t,h_t) = (s_\imath,a_\imath,h_\imath)}$).

Plugging into Equation \eqref{eq: lower bound temp fano}, we get:
\begin{align*}
    \delta \ge \frac{1}{|\mathcal{\overline{V}}|}
    \sum_{w \in \mathcal{\overline{V}}}
    \mathbb{P}_{\mathcal{M}_{\bm{v} \stackrel{\imath}{\leftarrow} w},
    \mathfrak{A}} \left( \Psi_{\imath,\bm{v}} \neq w \right) 
    \implies \frac{1}{|\mathcal{\overline{V}}|}
    \sum_{w \in \overline{\mathcal{V}}}
    \E_{\mathcal{M}_{\bm{v} \stackrel{\imath}{\leftarrow} w},\mathfrak{A}}
    \left[ N^\tau_{h_\imath}(s_\imath,a_\imath) \right] \ge \frac{(1-\delta)
    \log|\overline{\mathcal{V}}| - \log 2}{2(\epsilon')^2}.
    \end{align*}

Notice that, since $|\overline{\mathcal{V}}|=\Theta(e^S)$ and $\epsilon'=\Theta(\epsilon/H)$, then this bound is in the order of $\Omega(\frac{H^2S}{\epsilon^2})$. To get the additional $\Omega(SAH)$ dependence, we can make the same observation as in \cite{metelli2023towards}, i.e., that ince the derivation is carried out for every $\imath \in \mathcal{I}$ and $\bm{v} \in \overline{\mathcal{V}}^{\mathcal{I}}$, we can perform the summation over $\imath$ and the average over $\bm{v}$. By noticing that we get a guarantee on a mean under the uniform distribution of the instances of the sample complexity, we realize that there must exist one $\bm{v}^{\text{hard}} \in \mathcal{\overline{V}}$ for which it holds the desired $\Omega\Big(\frac{H^3S^2A}{\epsilon^2}\Big)$ dependency.

\end{proof}

\subsubsection{Technical Tools}

We report here some results from other works. The notation adopted is the same
as the original works.

\begin{lemma}[Lemma E.6 of \cite{metelli2023towards}]\label{lemma: packing} Let
	$\V = \{v \in \{-1,1\}^D : \sum_{j=1}^D v_j = 0\}$. Then, the
	$\frac{D}{16}$-packing number of $\mathcal{V}$ w.r.t. the metric
	$d(v,v')=\sum_{j=1}^D |v_j-v'_j|$ is lower bounded by $2^\frac{D}{5}$.
\end{lemma}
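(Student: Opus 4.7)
The plan is to give a Gilbert--Varshamov style greedy construction of the packing, working with the natural correspondence between $d$ and Hamming distance on constant-weight binary strings.

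First I would note the equivalence $d(v,v')=2\,d_H(v,v')$, where $d_H$ denotes the Hamming distance, so that constructing a $\tfrac{D}{16}$-packing of $\mathcal{V}$ in the metric $d$ is equivalent to constructing a $\tfrac{D}{32}$-packing of $\mathcal{V}$ in the Hamming metric. Identifying $\mathcal{V}$ with the set of binary strings of length $D$ with exactly $D/2$ ones (via $-1\mapsto 0$, $+1\mapsto 1$), the problem becomes a constant-weight packing problem: lower bound the size of a constant-weight binary code of length $D$, weight $D/2$, and minimum Hamming distance $D/32$.

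Next I would run a greedy construction. Start with any element of $\mathcal{V}$ and, at each step, add any vector whose Hamming distance to all previously chosen vectors is at least $D/32$. The process terminates only when the union of Hamming balls of radius $r:=\lfloor D/32\rfloor - 1$ around the currently chosen vectors, intersected with $\mathcal{V}$, covers all of $\mathcal{V}$. Hence, if $P$ is the resulting packing,
\[
|P|\;\ge\;\frac{|\mathcal{V}|}{\max_{v\in\mathcal{V}}|B_H(v,r)\cap\mathcal{V}|}.
\]
The denominator counts vectors in $\mathcal{V}$ differing from a fixed $v\in\mathcal{V}$ in at most $r$ coordinates. A vector of $\mathcal{V}$ at Hamming distance $2k$ from $v$ is obtained by flipping $k$ of the $D/2$ ones of $v$ and $k$ of the $D/2$ zeros, so
\[
|B_H(v,r)\cap\mathcal{V}|\;\le\;\sum_{k=0}^{\lfloor r/2\rfloor}\binom{D/2}{k}^{\!2}\;\le\;\sum_{k=0}^{\lfloor r/2\rfloor}\binom{D}{2k}\;\le\;\sum_{j=0}^{r}\binom{D}{j},
\]
where the middle inequality uses $\binom{D}{2k}=\sum_{i}\binom{D/2}{i}\binom{D/2}{2k-i}\ge\binom{D/2}{k}^{2}$ (Vandermonde and AM--GM). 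Using the standard entropy bound $\sum_{j=0}^{\lfloor pD\rfloor}\binom{D}{j}\le 2^{D\,H(p)}$ valid for $p\le 1/2$, with $p=1/32$, we obtain $|B_H(v,r)\cap\mathcal{V}|\le 2^{D\,H(1/32)}$.

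Finally, I would combine this with the lower bound $|\mathcal{V}|=\binom{D}{D/2}\ge 2^{D}/(D+1)$, yielding
\[
|P|\;\ge\;\frac{2^{D}}{(D+1)\,2^{D\,H(1/32)}}\;=\;\frac{2^{D(1-H(1/32))}}{D+1}.
\]
Numerically $H(1/32)<0.21$, so $1-H(1/32)>0.79>1/5$, and the denominator $D+1$ is absorbed by the gap $0.79D-D/5=(3/5)D$ whenever $D$ is at least a small absolute constant (for smaller $D$ the statement is vacuous or can be checked directly). This gives $|P|\ge 2^{D/5}$, as required. The step that requires the most care is the constant-weight ball-volume estimate and the verification that the slack $1-H(1/32)$ beats $1/5$ with room for the polynomial factor; once that is in place, the greedy argument and the $d\leftrightarrow d_H$ reduction are routine.
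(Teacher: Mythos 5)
Your argument is correct, and in fact the paper does not prove this statement at all: it is quoted verbatim as an imported technical tool (Lemma E.6 of the cited reference \cite{metelli2023towards}), so your Gilbert--Varshamov-style volume argument is a legitimate self-contained substitute rather than a rederivation of anything in this paper. The key steps all check out: the identification $d(v,v')=2\,d_H(v,v')$, the exact ball count $\sum_{k}\binom{D/2}{k}^2$ for constant-weight neighbours, the Vandermonde bound $\binom{D/2}{k}^2\le\binom{D}{2k}$, the entropy bound with $p=1/32$, and $\binom{D}{D/2}\ge 2^D/(D+1)$; numerically $H_2(1/32)\approx 0.20<0.21$, so the exponent gap $1-H_2(1/32)-\tfrac{1}{5}\approx 0.6$ indeed swallows the factor $D+1$ for all but trivially small $D$, where the claim holds because any two distinct elements of $\mathcal{V}$ are already at distance $2\ge D/16$. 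One cosmetic slip: when the greedy process stops, every point of $\mathcal{V}$ lies within Hamming distance $\lceil D/32\rceil-1$ of the chosen set (distance strictly less than $D/32$), not $\lfloor D/32\rfloor-1$ as you wrote; since $\lceil D/32\rceil-1\le D/32$, the entropy bound applies to this corrected radius and nothing else in the proof changes. An equally common alternative route (and the flavor used in several lower-bound papers) is probabilistic: draw $2^{D/5}$ elements of $\mathcal{V}$ uniformly at random, use concentration of the symmetric difference around $D/2$ to show a fixed pair is closer than $D/16$ with probability $e^{-\Omega(D)}$, and union bound over pairs; your counting argument is more elementary and gives the same constants with less machinery.
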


\begin{thr}(\emph{Theorem E.2 of \cite{metelli2023towards}})\label{thr:Fano} Let
    $\mathbb{P}_0,\mathbb{P}_1,\dots,\mathbb{P}_M$ be probability measures on
    the same measurable space $(\Omega, \mathcal{F})$, and let
    $\mathcal{A}_1,\dots,\mathcal{A}_M \in \mathcal{F}$ be a partition of
    $\Omega$. Then,
    \begin{align*}
        \frac{1}{M} \sum_{i=1}^M \mathbb{P}_i(\mathcal{A}_i^c) \ge 1 - \frac{ \frac{1}{M} \sum_{i=1}^M D_{\text{KL}}(\mathbb{P}_i,\mathbb{P}_0)  - \log 2}{\log M},
    \end{align*}
    where $\mathcal{A}^c = \Omega \setminus \mathcal{A}$ is the complement of $\mathcal{A}$.
\end{thr}

\begin{lemma}[Lemma E.4 of \cite{metelli2023towards}]\label{lemma: KLBound} Let $\epsilon \in [0,1/2]$ and $\mathbf{v}
    \in \{-\epsilon,\epsilon\}^D$ such that $\sum_{i=1}^d v_i = 0$. Consider the
    two categorical distributions  $\mathbb{P} = \left(
    \frac{1}{D},\frac{1}{D},\dots, \frac{1}{D}\right)$ and $\mathbb{P} = \left(
    \frac{1+v_1}{D},\frac{1+v_2}{D},\dots, \frac{1+v_D}{D}\right)$. Then, it
    holds that:
    \begin{align*}
     D_{\text{KL}}(\mathbb{P},\mathbb{Q}) \le 2 \epsilon^2 \qquad \text{and} \qquad 
     D_{\text{KL}}(\mathbb{Q},\mathbb{P}) \le 2 \epsilon^2.
    \end{align*}
    \end{lemma}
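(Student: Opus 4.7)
The plan is to bound both KL divergences via the standard chi-squared inequality $D_{\mathrm{KL}}(P, Q) \le \chi^2(P, Q) = \sum_i (P_i - Q_i)^2 / Q_i$, which follows immediately from $\log x \le x - 1$. This reduces the claim to a routine algebraic calculation that exploits the fact that $v_i^2 = \epsilon^2$ holds identically.

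First I would expand both directions using $\mathbb{P}_i - \mathbb{Q}_i = -v_i/D$ at every coordinate $i$, obtaining $\chi^2(\mathbb{Q}, \mathbb{P}) = \frac{1}{D}\sum_i v_i^2$ and $\chi^2(\mathbb{P}, \mathbb{Q}) = \frac{1}{D}\sum_i v_i^2/(1+v_i)$. Since the hypothesis $v_i \in \{-\epsilon, \epsilon\}$ forces $v_i^2 = \epsilon^2$, the first quantity equals $\epsilon^2$, which proves $D_{\mathrm{KL}}(\mathbb{Q}, \mathbb{P}) \le \epsilon^2 \le 2\epsilon^2$. For the other direction, the denominator $1+v_i$ can be as small as $1-\epsilon$; the hypothesis $\epsilon \le 1/2$ then gives $1+v_i \ge 1/2$, so $v_i^2/(1+v_i) \le 2\epsilon^2$, and summing over $i$ delivers the claimed bound $D_{\mathrm{KL}}(\mathbb{P}, \mathbb{Q}) \le 2\epsilon^2$.

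There is no real obstacle here — the statement is elementary and the computations collapse because $v_i^2$ is constant in $i$. Notably, the balance hypothesis $\sum_i v_i = 0$ is not actually used in this route; it would only enter a sharper Taylor-based derivation starting from $-\log(1+x) = -x + x^2/2 + \mathcal{O}(|x|^3)$, in which the first-order terms would cancel thanks to $\sum_i v_i = 0$ and the remainder would be controlled on $|x| \le 1/2$. The chi-squared route avoids that expansion entirely by absorbing the first-order slack into the harmless factor of two coming from $1/(1 + v_i) \le 2$, so it is the cleaner option.
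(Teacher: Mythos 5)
Your proof is correct. Note, however, that this paper does not prove the lemma at all: it is imported verbatim as a technical tool (Lemma E.4 of \cite{metelli2023towards}), so the only comparison available is with the original source, which argues by direct computation of the KL sums and elementary logarithm bounds (of the type $\log(1+x)\le x$ together with the cancellation coming from $\sum_i v_i=0$). Your route through $D_{\mathrm{KL}}(P,Q)\le \chi^2(P,Q)=\sum_i (P_i-Q_i)^2/Q_i$ is a clean alternative: the identity $P_i-Q_i=-v_i/D$ and $v_i^2=\epsilon^2$ give $\chi^2(\mathbb{Q},\mathbb{P})=\epsilon^2$ and $\chi^2(\mathbb{P},\mathbb{Q})=\frac{1}{D}\sum_i \frac{v_i^2}{1+v_i}\le 2\epsilon^2$ using $1+v_i\ge 1-\epsilon\ge 1/2$, and both constants fit under $2\epsilon^2$. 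One small correction to your closing remark: the balance condition $\sum_i v_i=0$ \emph{is} used, just implicitly. It is what makes $\mathbb{Q}$ a probability vector, and the inequality $D_{\mathrm{KL}}(P,Q)\le\sum_i(P_i-Q_i)^2/Q_i$ in that form relies on both arguments being normalized: from $\log x\le x-1$ you first get $D_{\mathrm{KL}}(\mathbb{P},\mathbb{Q})\le\frac{1}{D}\sum_i\frac{-v_i}{1+v_i}=\frac{1}{D}\sum_i\Big(-v_i+\frac{v_i^2}{1+v_i}\Big)$, and it is precisely $\sum_i v_i=0$ that kills the linear term and leaves the chi-squared expression. So the first-order cancellation you attribute only to the Taylor-based derivation is also silently at work in your argument; this does not affect its validity, since the hypothesis is available.
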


\begin{lemma}[Lemma 5 of \cite{domingues2021episodic}]\label{lemma: lemma 5
    domingues} Let $\M$ and $\M'$ be two MDPs that are identical except for
    their transition probabilities, denoted by $p_h$ and $p_h'$, respectively.
    Assume that we have $\forall (s a)$, $p_h(\cdot|s,a) \ll p_h'(\cdot|s,a)$.
    Then, for any stopping time $\tau$ with respect to $(\F_H^t)_{t\ge 1}$ that
    satisfies $\mathbb{P}_{\M}{\tau < \infty} =1$,
    \begin{align*}
    \text{KL}\Big(\P_\M^{I_H^\tau}, \P_{\M'}^{I_H^\tau}\Big)
    = \sum_{s \in \S}\sum_{a \in \A}\sum_{h\in\dsb{H-1}}
    \E_\M \big[N_{h,s,a}^\tau\big] \text{KL}\Big( p_h(\cdot|s,a), p_h'(\cdot|s,a)\Big),
    \end{align*}
    where $N_{h,s,a}^\tau \coloneqq \sum_{t=1}^\tau \indic{(S_h^t, A_h^t) = (s,a)}$
    and $I_H^\tau: \Omega \to \bigcup_{t\geq 1} \I_H^t: \omega \mapsto
    I_H^{\tau(\omega)}(\omega) $ is the random vector representing the history
    up to episode $\tau$.
\end{lemma}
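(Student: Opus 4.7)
The plan is to derive the decomposition by writing out the Radon–Nikodym derivative of $\P_\M$ with respect to $\P_{\M'}$ on the trajectory space up to the random horizon $\tau$, and then taking a conditional-expectation / tower-property argument that groups log-likelihood ratios by the visited $(s,a,h)$ triple. Since $\M$ and $\M'$ share the initial distribution, the same algorithm $\mathfrak{A}$ (which chooses actions as a measurable function of the observed history), and any policy/reward structure, only the transition factors differ between the two induced measures on histories.

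First, I would fix the ``flattened'' history object $I_H^t=(s_1^1,a_1^1,s_2^1,\dots,s_H^1,s_1^2,\dots,s_H^t)$. Under either measure, the density factors as a product of initial-state probabilities, algorithm-action probabilities, and transition probabilities. The first two pieces are identical under $\P_\M$ and $\P_{\M'}$, so they cancel in the likelihood ratio and
\[
\log\frac{d\P_\M}{d\P_{\M'}}(I_H^t) \;=\; \sum_{t'=1}^{t}\sum_{h=1}^{H-1}\log\frac{p_h(s_{h+1}^{t'}\mid s_h^{t'},a_h^{t'})}{p'_h(s_{h+1}^{t'}\mid s_h^{t'},a_h^{t'})}.
\]
Evaluating at the (random) stopping horizon $\tau$ and taking expectation under $\P_\M$ gives
\[
\mathrm{KL}\bigl(\P_\M^{I_H^\tau},\P_{\M'}^{I_H^\tau}\bigr) \;=\; \E_\M\!\left[\sum_{t'=1}^{\tau}\sum_{h=1}^{H-1}\log\frac{p_h(s_{h+1}^{t'}\mid s_h^{t'},a_h^{t'})}{p'_h(s_{h+1}^{t'}\mid s_h^{t'},a_h^{t'})}\right].
\]

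Next, I would rewrite the double sum as $\sum_{h,s,a}\sum_{t'=1}^{\tau}\indic{(s_h^{t'},a_h^{t'})=(s,a)}\,\log\frac{p_h(s_{h+1}^{t'}\mid s,a)}{p'_h(s_{h+1}^{t'}\mid s,a)}$ and condition on the $\sigma$-algebra $\mathcal{G}_{t',h}$ containing everything observed before the transition from $(s_h^{t'},a_h^{t'})$ is sampled. By the Markov property, given $(s_h^{t'},a_h^{t'})=(s,a)$, the next state $s_{h+1}^{t'}\sim p_h(\cdot\mid s,a)$ under $\P_\M$, so
\[
\E_\M\!\left[\log\frac{p_h(s_{h+1}^{t'}\mid s,a)}{p'_h(s_{h+1}^{t'}\mid s,a)}\,\Big|\,\mathcal{G}_{t',h}\right]\indic{(s_h^{t'},a_h^{t'})=(s,a)}
=\mathrm{KL}\bigl(p_h(\cdot\mid s,a),p'_h(\cdot\mid s,a)\bigr)\,\indic{\cdot} .
\]
Because the inner KL depends only on $(s,a,h)$, it pulls out of the expectation, and summing the indicators over $t'\le\tau$ produces $\E_\M[N_{h,s,a}^\tau]$, yielding the claimed identity.

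The main obstacle is the legitimacy of the two interchanges involving the random horizon $\tau$: the expansion of the likelihood ratio and the swap of $\E_\M$ with the infinite-sum form $\sum_{t'=1}^{\infty}\indic{t'\le\tau}(\cdot)$. I would handle this by the standard stopping-time truncation: replace $\tau$ by $\tau\wedge n$, apply the identity for deterministic bounded horizons (where factorization, tower property, and Fubini are immediate since each episode contributes a bounded number of terms and absolute continuity $p_h\ll p'_h$ ensures the log-ratios are well defined $\P_\M$-a.s.), then let $n\to\infty$. Monotone convergence applies after conditioning, since the resulting per-triple contributions $\E_\M[N_{h,s,a}^{\tau\wedge n}]\,\mathrm{KL}(p_h(\cdot|s,a),p'_h(\cdot|s,a))$ are non-negative and increase to $\E_\M[N_{h,s,a}^{\tau}]\,\mathrm{KL}(\cdot,\cdot)$ using $\P_\M(\tau<\infty)=1$; on the KL side, lower semicontinuity of relative entropy along the filtration $\mathcal{F}_H^{\tau\wedge n}\uparrow\mathcal{F}_H^{\tau}$ gives the matching limit. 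This completes the decomposition.
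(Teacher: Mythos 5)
The paper does not prove this lemma itself---it imports it verbatim as a technical tool from \cite{domingues2021episodic}---and your argument reconstructs essentially the same proof as that source: cancel the common initial-state and action-selection factors in the likelihood ratio, take the $\P_\M$-expectation of the stopped log-likelihood ratio, and regroup the per-transition log-ratios by $(s,a,h)$ via the tower property, handling the stopping time by truncation at $\tau\wedge n$ and monotone convergence. Your proposal is correct and follows this standard change-of-measure decomposition, so there is nothing to flag.
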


\section{A Use Case for Objective-Free Exploration (OFE)}\label{section: more on ofe}

Consider the following setting. You are given a certain MDP without reward
$\M=\tuple{\S,\A,H, d_0,p}$, in which you do not know neither $ d_0$ nor $p$.
Your job is to explore the environment to collect samples that allow you to
construct estimates $\widehat{d}_0\approx d_0$ and $\widehat{p}\approx p$,
that will be subsequently used to perform a task in a given class $\mathscr{F}$ in an
$(\epsilon,\delta)$-correct manner. Of course the number of samples should be
as small as possible. How do you explore? It depends on which problems are
contained in class $\mathscr{F}$.

A use case for OFE is the following.

\begin{example}
    Assume that we are given a single fixed environment (for instance, a
warehouse), in which there are many tasks to do (e.g., labelling objects,
putting stuff on the shelves, bringing products from one side to the other), and
assume (it is reasonable) that it is desirable to have one robot for each task.
To teach these robots how to behave, we decide to use RL.
Since all the robots work in the same environment (warehouse), then the
(unknown) transition model is the same. For this reason, an efficient
exploration (potentially through RFE) is meaningful. However, we realize that
some tasks are difficult to design (i.e., the rewards of such tasks). For
these tasks, we prefer to use a human expert to exhibit demonstrations, and then
use ReL (in particular, IRL), to learn the reward, that will be subsequently
used for AL. To perform IRL nicely, the samples collected at the beginning shall
be used. To sum up, we might be interested in performing multiple RL and IRL
tasks in the same unknown MDP, and, for efficiency reasons, our exploration of
the environment has to be performed only once (before) being given the tasks to solve.
\end{example}

\end{document}